\newtheorem{theorem}{Theorem}
\newtheorem{corollary}{Corollary}
\newcommand{\lafamilymethod}{Riemannian Laplace Approximation}
\newcommand{\lafamilyabbrv}{RLA}
\newcommand{\euclaabbrv}{ELA}
\newcommand{\bergaminabbrv}{RLA-B}
\newcommand{\logbergaminabbrv}{RLA-BLog}
\newcommand{\ourabbrv}{RLA-F}
\newcommand{\ourinvabbrv}{RLA-F}
\DeclareMathOperator{\btheta}{\boldsymbol{\theta}}
\DeclareMathOperator{\bTheta}{\boldsymbol{\Theta}}
\DeclareMathOperator{\bpsi}{\boldsymbol{\psi}}
\DeclareMathOperator{\bPsi}{\boldsymbol{\Psi}}
\DeclareMathOperator{\bJ}{\boldsymbol{J}}
\DeclareMathOperator{\bL}{\boldsymbol{L}}
\DeclareMathOperator{\bbv}{\boldsymbol{v}}
\DeclareMathOperator{\bbu}{\boldsymbol{u}}
\DeclareMathOperator{\bw}{\boldsymbol{w}}
\DeclareMathOperator{\bby}{\boldsymbol{y}}
\DeclareMathOperator{\bG}{\boldsymbol{G}}
\DeclareMathOperator{\bmu}{\boldsymbol{\mu}}
\DeclareMathOperator{\bGamma}{\boldsymbol{\Gamma}}
\DeclareMathOperator{\cM}{\mathcal{M}}
\DeclareMathOperator{\cP}{\mathcal{P}}
\DeclareMathOperator{\bSigma}{\boldsymbol{\Sigma}}
\DeclareMathOperator{\bzero}{\boldsymbol{0}}
\DeclareMathOperator{\ba}{\boldsymbol{a}}
\DeclareMathOperator{\N}{\mathcal{N}}
\DeclareMathOperator{\by}{\boldsymbol{y}}
\DeclareMathOperator{\bx}{\boldsymbol{x}}
\DeclareMathOperator{\bY}{\boldsymbol{Y}}
\DeclarePairedDelimiterX{\inp}[2]{\langle}{\rangle}{#1, #2}
\DeclareMathOperator{\bX}{\boldsymbol{X}}
\DeclareMathOperator{\bLambda}{\boldsymbol{\Lambda}}
\DeclareMathOperator{\bI}{\boldsymbol{I}}
\DeclareMathOperator{\bV}{\boldsymbol{V}}
\DeclareMathOperator{\bM}{\boldsymbol{M}}
\DeclareMathOperator{\E}{\mathbb{E}}
\DeclareMathOperator{\A}{\boldsymbol{A}}
\DeclareMathOperator{\bS}{\boldsymbol{S}}
\DeclareMathOperator{\Real}{\mathbb{R}}
\DeclareMathOperator{\bbf}{\boldsymbol{f}}
\DeclareMathOperator{\bphi}{\boldsymbol{\phi}}
\DeclareMathOperator{\bPhi}{\boldsymbol{\Phi}}
\DeclareMathOperator{\cN}{\mathcal{N}}
\DeclareMathOperator{\bones}{\boldsymbol{1}}
\DeclareMathOperator{\bb}{\boldsymbol{b}}
\DeclareMathOperator{\bgamma}{\boldsymbol{\gamma}}
\DeclareMathOperator{\bp}{\boldsymbol{p}}
\DeclareMathOperator{\bq}{\boldsymbol{q}}
\DeclareMathOperator{\B}{\boldsymbol{B}}
\begin{document}

%

%
\runningauthor{Hanlin Yu, Marcelo Hartmann, Bernardo Williams, Mark Girolami, Arto Klami}

\twocolumn[

\aistatstitle{Riemannian Laplace Approximation with the Fisher Metric}



\aistatsauthor{ Hanlin Yu \And Marcelo Hartmann \And Bernardo Williams }

\aistatsaddress{
Department of Computer Science\\ University of Helsinki
\And 
Department of Computer Science\\ University of Helsinki
\And
Department of Computer Science\\ University of Helsinki
}

\aistatsauthor{ Mark Girolami \And Arto Klami }

\aistatsaddress{
Department of Engineering\\ University of Cambridge\\ and The Alan Turing Institute
\And
Department of Computer Science\\ University of Helsinki
}

]

\begin{abstract}
  Laplace's method approximates a target density with a Gaussian distribution at its mode. It is computationally efficient and asymptotically exact for Bayesian inference due to the Bernstein-von Mises theorem, but for complex targets and finite-data posteriors it is often too crude an approximation. A recent generalization of the Laplace Approximation transforms the Gaussian approximation according to a chosen Riemannian geometry providing a richer approximation family, while still retaining computational efficiency. However, as shown here, its properties depend heavily on the chosen metric, indeed the metric adopted in previous work results in approximations that are overly narrow as well as being biased even at the limit of infinite data. We correct this shortcoming by developing the approximation family further, deriving two alternative variants that are exact at the limit of infinite data, extending the theoretical analysis of the method, and demonstrating practical improvements in a range of experiments.
\end{abstract}

\begin{figure}[t]
    \centering
    \includegraphics[width=0.45\columnwidth]{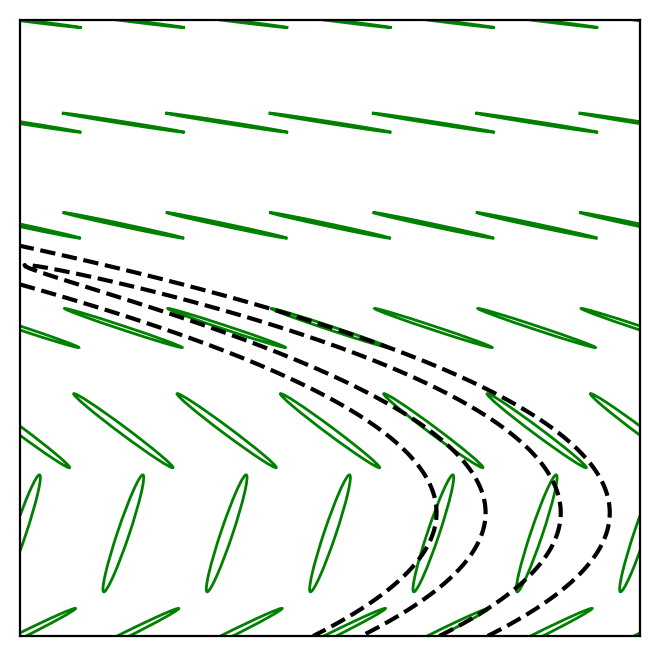}
    \includegraphics[width=0.45\columnwidth]{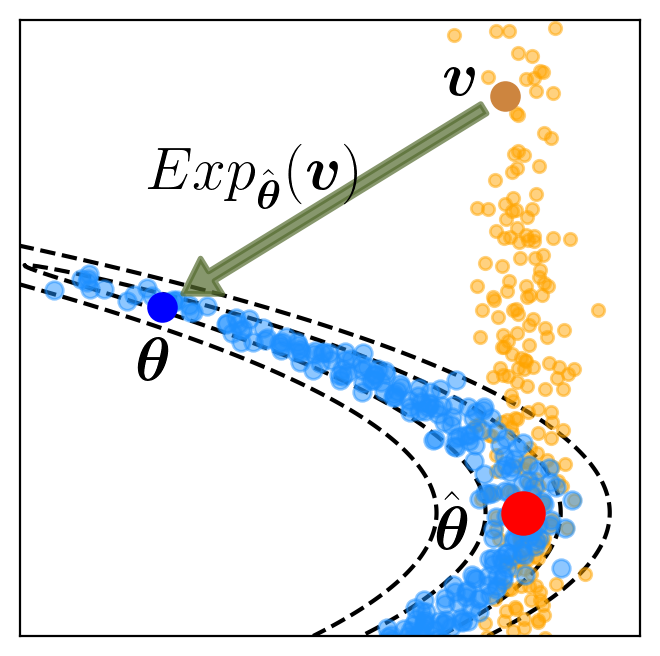}
    
    \caption{Left: The Fisher metric (green) captures the local curvature of the target density (black). Right: Samples from a Gaussian distribution (orange) are deterministically mapped using the Fisher metric to provide a flexible approximation (blue).}
    \label{fig:banana-geodesics}
\end{figure}

\section{INTRODUCTION}

Functional distributional approximations offer a computationally attractive alternative to Markov Chain Monte Carlo (MCMC) methods for statistical (posterior in the Bayesian setting) inference of potentially large models, building on variational inference (VI) \citep{Blei2017}, expectation propagation (EP) \citep{minka:2001}, or Laplace approximation (LA) \citep{Tierney1986, rue:2009}. Typically the methods need to make a compromise between computational efficiency and accuracy, and often the flexibility is achieved by non-linear transformations modelled as neural networks that need to be separately trained for a given model, as in amortized VI \citep{Margossian2023} or normalizing flows \citep{Papamakarios2021}. We focus on perhaps the oldest approximation strategy, the Laplace Approximation \citep{Tierney1986}, and study how to increase the flexibility of the approximation without resorting to trainable transformations. 

Laplace Approximation fits a Gaussian distribution at the mode of the target density and is used in machine learning from non-conjugate Gaussian processes \citep{Rasmussen} to deep learning \citep{Daxberger2021}. It is computationally efficient, only requiring the maximum a posteriori (MAP) estimate and the Hessian of the log-posterior that determine the mean and covariance of the approximation. Despite its simplicity, it is asymptotically exact at the limit of infinite data, due to the Bernstein-von Mises theorem stating that the posterior converges to a Gaussian when the probabilistic model has identifiable parameterisation \citep{Vandervaart1998}. Obviously, as has been shown many times in the literature, the disadvantage is that the approximation family is very limited for finite-data posteriors.

We seek to create an approximation that retains the advantages but increases the flexibility, and do so by relying on the tools of Riemannian geometry \citep{doCarmo1992}. We change the metric of the parameter space in such a manner that after this modification the approximation more accurately characterises the target. Concepts of Riemannian geometry have been broadly considered in the approximate inference literature, with Riemannian extensions of MCMC \citep{Girolami2011}, stochastic-gradient MCMC \citep{Patterson2013,Yu2023} and VI \citep{Frank2021}. Recently, \citet{Bergamin2023} introduced a Riemannian generalization of the Laplace Approximation. The core idea (illustrated conceptually in Figure~\ref{fig:banana-geodesics}, using the new metric from this paper) is to transform samples from a Gaussian distribution using numerical integrators to follow geodesic paths induced by a chosen geometry, which can be carried out in parallel. Some of the concepts required were already studied by \citet{Hauberg2018}, where Laplace approximation on a Riemannian manifold was considered.

We build on \citet{Bergamin2023}, expanding both theoretical and practical understanding of the approximation. They used the Riemannian metric \citet{Hartmann2022} proposed for Riemann Manifold MCMC, which is computationally attractive due to the efficient \emph{exponential map} for transforming the samples. However, as shown here, the approximation is not asymptotically exact and is also biased in practical cases. We resolve this theoretical and practical limitation in two alternative ways. We first show that the method of \citet{Bergamin2023} can be fixed by incorporating a \emph{logarithmic map} to correct the bias, resulting in an asymptotically exact but somewhat computationally heavy and unstable approximation. A more practical solution is to replace the metric itself. We explain how a metric based on the Fisher Information Matrix (FIM) gives an approximation that is exact for targets that are diffeomorphisms of a Gaussian, requires fewer function evaluations in numerical integration, and is superior in how it performs in a range of tasks.

\section{PRELIMINARIES}

\subsection{Laplace Approximation}

Let $\btheta \in \bTheta$ denote the $D$-dimensional vector in 
the parameter space $\bTheta$. Denote the log-posterior distribution given observed data set $\by$ $=$ $\{ y_n \}_{n = 1}^N$ as $\ell_{\by}(\btheta)$ $:=$ $\log \pi(\btheta|\by)$. Laplace's method expands a second-order Taylor series of $\ell_{\by}(\btheta)$ at the mode (MAP) where the first-order term disappears. This is equivalent to using a Gaussian approximation
\begin{equation*}
\pi(\btheta|\by) \approx \N \big( \btheta | \hat{\btheta}, -\nabla^{2}\ell_{\by}(\hat{\btheta})^{-1} \big),
\end{equation*}
where $\hat{\btheta}$ is the MAP estimate and $\nabla^2 \ell_{\by}(\hat{\btheta})$ is the Hessian matrix of the log-posterior evaluated at $\hat{\btheta}$. We refer to this as \textit{\euclaabbrv}, for Euclidean LA.

\subsection{Riemannian geometry} \label{sec:riemgeom}

\paragraph{Key concepts.}
A \emph{manifold} is a space more general than an Euclidean space but that locally acts like a Euclidean space. A \emph{differentiable manifold} is a manifold that can be represented with a system of coordinates such that it will allow us to extend notions of derivatives.
A \emph{Riemannian manifold} is a differentiable manifold endowed with a metric $g$ and this metric will, intrinsically, generalise the local notions of angles, distances and derivatives on the manifold.

The metric $g$ is formally defined as a function $g_{\btheta} : T_{\btheta}\bTheta \times T_{\btheta}\bTheta \rightarrow \mathbb{R} $ where $T_{\btheta}\bTheta$ denotes the \emph{tangent space} at $\btheta$,
given by $T_{\btheta}\bTheta = \{ \tfrac{\mathrm{d}}{\mathrm{d}t} \boldsymbol{c}(t)|_{t = 0} = \bbv \textrm{, and} \ \boldsymbol{c}(0) = \btheta \}$. The $\bbv$s are known as the \emph{tangent vector}s. The function $g$ is like the dot-product in Euclidean spaces, but takes a more general form as a mapping $(\bbv, \bbu) \mapsto \inp{\bbv}{\bG(\btheta)\bbu}$ where $\bG(\btheta)$ is a symmetric positive-definite matrix called \emph{metric tensor} that collects the coefficients of the metric as $g\big( \tfrac{\partial}{\partial_{\theta_i}}, \tfrac{\partial}{\partial_{\theta_j}} \big) = G(\btheta)_{ij}$. We are free to choose the metric $g$ to encode desired properties of the manifold.

\paragraph{Gradients and Hessians in Riemannian geometry.}
In a given basis, the \emph{Riemannian gradient} of $\ell_{\by}(\btheta)$ is given by $\textrm{grad} \ \ell_{\by}(\btheta) = \bG(\btheta)^{-1} \nabla \ell_{\by}(\btheta)$, and the \emph{Riemannian Hessian} is $\textrm{Hess} \ \ell_{\by}(\btheta)[\bbv] = \nabla_{\bbv} \textrm{grad} \ \ell_{\by}(\btheta) = \bG^{-1}\left(\nabla^2 \ell_{\by}(\btheta) - \sum_k \bGamma(\btheta)^k \nabla \ell_{\by} (\btheta)_k\right) \bbv$. 
Here $ \bGamma(\btheta)^k = \{ \bGamma(\btheta)^k \}_{i, j}  $ are the \emph{Christoffel symbols} of the second kind in matrix forms. 
They can also be expressed using the metric if the connection is chosen to be the Levi-Civita connection \citep{doCarmo1992} as
\begin{equation*}
    \Gamma^{k}_{ij} = \frac{1}{2}G^{kl}\left(\partial_{i}G_{jl} + \partial_{j}G_{il} - \partial_{l}G_{ij} \right),
\end{equation*}
where $\partial_i = \tfrac{\partial}{\partial {\theta_i}}$. Following differential geometric convention, we use the Einstein summation. Note that in this convention $G^{kl}$ refers to elements of the \emph{inverse} of the matrix $\bG(\btheta)$.

\paragraph{Geodesic paths.}
Another important notion for the development of this work is the \emph{exponential map} and its inverse known as the \emph{logarithmic map}. Formally, the exponential-map is defined as a function $\mathrm{Exp}_{\btheta} : T_{\btheta}\bTheta \rightarrow \bTheta$ that takes a vector on the tangent space of $\bTheta$ and maps it back onto $\bTheta$ \citep{Boumal2023}.

Intuitively, the exponential map is a function that returns the final position after following the shortest path at $\btheta$ in the direction of $\bbv$ for unit time. These paths are called geodesics and they solve the geodesic equation given by \citep{doCarmo1992}
\begin{equation}
    \dv[2]{\theta^{k}}{t} + \dv{\theta^{i}}{t}\dv{\theta^{j}}{t}\Gamma^{k}_{ij} = 0,\; k=1,2,\dots ,D,
\label{eq:geodesics}
\end{equation}
where $\dv{\theta^{i}}{t}=v^{i}$.
The inverse $\mathrm{Log}_{\btheta} : \bTheta \rightarrow T_{\btheta}\bTheta$ of the above is defined as $\mathrm{Log}_{\btheta}(\tilde\btheta) = \arg\min_{\bbv\in T_{\btheta}\bTheta}g_{\btheta}(\bbv, \bbv)$ subject to $\mathrm{Exp}_{\btheta}(\bbv) = \tilde\btheta$ \citep[see][]{Boumal2023}. It gives the $\bbv$ for which the exponential map
starting from $\btheta$
maps to $\tilde\btheta$.

\section{RIEMANNIAN LA}

Here we outline a general recipe for \lafamilymethod\ (\lafamilyabbrv). The formulation matches the one recently proposed by \citet{Bergamin2023}, though here presented in a more general form where e.g. the choice of the metric is still left open. We will later show how the metric influences both the theoretical and practical properties of the method.

The Bernstein-von Mises Theorem states that, under certain regularity conditions, the posterior at the limit of infinite data becomes Gaussian with inverse of the FIM as the covariance \citep{Vandervaart1998}. This makes \euclaabbrv\ exact at the limit. We want a Riemannian generalization of \euclaabbrv\ to retain this property but to adapt for the local curvature of the target for non-Gaussian targets and finite-data posteriors.

\subsection{Principle and algorithm}

The Riemannian extension of LA is based on a Taylor series expansion on the tangent space $T_{\btheta}\bTheta$ at a given point $\btheta \in \bTheta$.
\citet{Bergamin2023} used here the particular tangent space $T_{\hat{\btheta}}\bTheta$, where $\hat{\btheta}$ is the MAP estimate. 
When the parameter space of the model $\bTheta$ is endowed with a Riemannian geometry, the Taylor approximation of the function $\bTheta \ni \btheta \mapsto \ell_{\by}(\btheta)$ at $\hat{\btheta}$ assumes a more general formulation \citep[see][]{Boumal2023, Bergamin2023} 
\begin{align}\label{eq:riemlap}
    \ell_{\by}\big(\mathrm{Exp}_{\hat{\btheta}}(\bbv)\big) & \approx \ell_{\by}(\hat{\btheta}) + g_{\hat{\btheta}}\big(\textrm{grad} \ \ell_{\by}( \hat{\btheta}) , \bbv \big) \\ 
    & + \frac{1}{2} g_{\hat{\btheta}} \big(\textrm{Hess} \ \ell_{\by}(\hat{\btheta})[\bbv], \bbv \big).\nonumber
\end{align}
\begin{algorithm}[t]
    \caption{\bergaminabbrv, producing $N$ samples $\btheta^{[n]}$.}
        \begin{algorithmic}
            \State Obtain MAP estimate $\hat{\btheta}$
            \State Set $\bSigma = (-\nabla^{2}\ell_{\by}(\hat{\btheta}))^{-1}$
                \For{$n \leftarrow 1, \dots, N$}
                    \State Obtain velocity $\bbv^{[n]} \sim \mathcal{N} (\bzero, \bSigma)$
                    \State $\btheta^{[n]} = \mathrm{Exp}_{\hat{\btheta}}(\bbv^{[n]})$
                \EndFor
        \end{algorithmic}
\label{alg::BergaminLap}
\end{algorithm}

Generally we cannot evaluate the approximation analytically, but can obtain samples from it following Algorithm~\ref{alg::BergaminLap} that comprises of three distinct steps, where the last two are repeated for each sample (and can be computed in parallel):
\begin{enumerate}
    \itemsep0.2em 
    \item Find $\hat{\btheta}$ where the approximation is placed;
    \item Sample initial velocity $\bbv\in T_{\hat{\btheta}}\bTheta$ from a Gaussian distribution with suitably chosen covariance $\bSigma$;
    \item Solve exponential map at $\hat{\btheta}$ with velocity $\bbv$ to obtain the sample $\btheta$.
\end{enumerate}

This constructs a wrapped Gaussian \citep{DeBortoli2022} on the manifold. 
The primary design choice is the metric $g$ that encodes the intrinsic geometry of the problem, but as will be explained later there is freedom also in the choices for the first two steps.
Once $g$ is given, the geodesic equation \eqref{eq:geodesics} can be formulated as an initial value problem of an ordinary system of differential equations (ODE) and solved relatively efficiently with numerical integration.

\paragraph{Euclidean Laplace Approximation.} Consider the special case of $g$ being the Euclidean inner product. Then $\bG(\btheta) = \bI_D$ and the exponential map becomes $\mathrm{Exp}_{\hat{\btheta}}(\bbv)$ $=$ $\hat{\btheta} + \bbv$, a unit step in the direction of the velocity $\bbv$. If velocities are sampled using $\bSigma = -(\nabla^2 \ell_{\by}(\hat{\btheta}))^{-1}$ we get the classical \euclaabbrv.

\paragraph{Method of \citet{Bergamin2023}.}
In Equation~\eqref{eq:riemlap}, \citet{Bergamin2023} sets the metric tensor as $\bG(\btheta) = \bI_D + \nabla \ell_{\by}(\btheta) \nabla \ell_{\by}(\btheta)^\top$, which is a special case of the metric \citet{Hartmann2022} used for geometric MCMC and coined as the \emph{Monge metric}.
This metric is defined solely based on gradient information, and is a way to introduce curvature information using the first order derivative of the target density function. 
It enables fast computation of Christoffel symbols for the exponential map.
We denote this specific instance of \lafamilyabbrv\ as 
\textit{\bergaminabbrv} after the initial letter of the first author, to distinguish the specific variant from the general family.
Even though \bergaminabbrv \ was shown to work well in a range of tasks and scales up for large problems, we will later observe that it does not always improve over \euclaabbrv. 

More importantly, the method is not exact even for Gaussian targets and hence not for infinite-data posteriors, but instead underestimates the uncertainty. This is seen by analysis of the exponential map.
The solutions of the geodesic equation keep the norm
$\norm{\bbv(t)}^2_{\bG(\btheta(t))}$ constant \citep{Lee2018}, which in this metric expands to $\norm{\bbv(t)}^{2}+\langle \bbv(t) ,\nabla \ell_{\by}(\btheta(t)) \rangle ^{2}$.
At the MAP estimate we have $\nabla\ell_{\bby}(\hat{\btheta})=0$ and hence $\bG(\hat{\btheta}) = \bI_D$. Denote the norm of the initial velocity by $\norm{\bbv(0)}$. For every other $t$ we have $\norm{\bbv(t)} < \norm{\bbv(0)}$ because $\langle \bbv(t) ,\nabla \ell_{\by}(\btheta(t)) \rangle ^{2} > 0$ except in the rare cases when the gradient is null or exactly orthogonal to velocity. Since the exponential map integrates for unit time, the distance of each sample to $\hat{\btheta}$ (in the Euclidean sense) is upper-bounded by and almost always smaller than for \euclaabbrv, for which the distance is $\norm{\bbv(0)}$.
Figure~\ref{fig:e-and-b} shows that \bergaminabbrv \ does not recover even a two dimensional isotropic Gaussian. Moreover, as shown in Figure~\ref{fig:distances}, the bias of \bergaminabbrv \ becomes larger as $D$ grows from $1$ to $10$ for isotropic Gaussians; see Section~\ref{sec:sup-bias-rlab} in the Supplement for further remarks.

\begin{figure}[t]
    \centering
    \begin{tabular}{cc}
        \includegraphics[width=0.45\columnwidth]{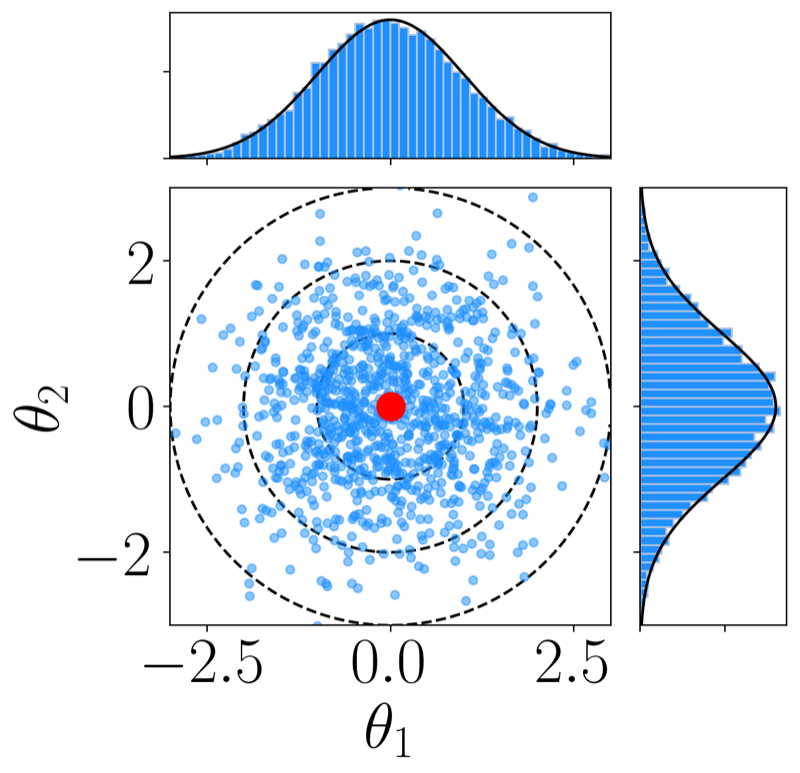} & \includegraphics[width=0.45\columnwidth]{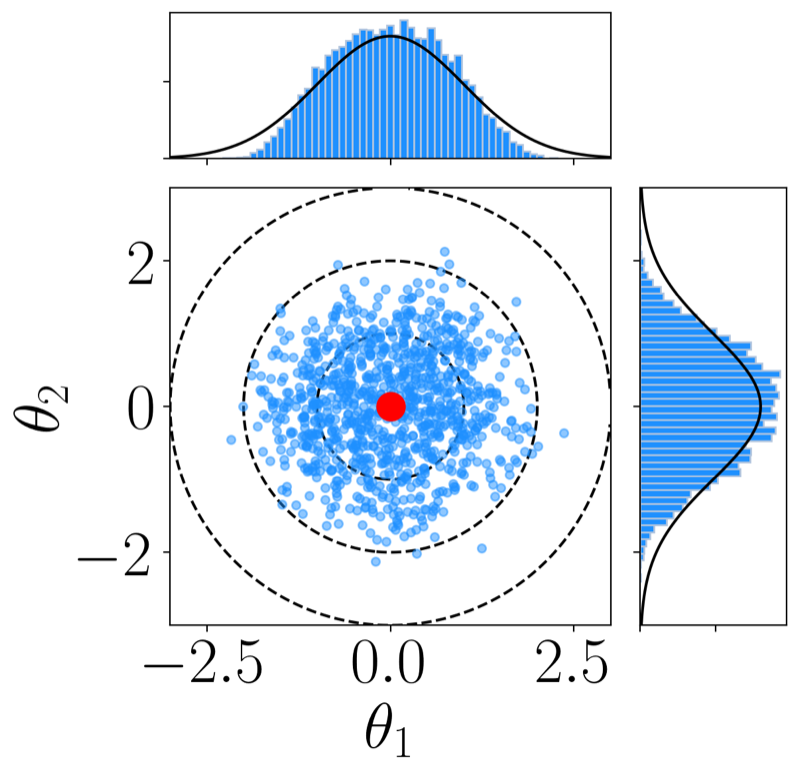}
    \end{tabular}
    
    \caption{Euclidean LA (left) is exact for a Gaussian target, but  \bergaminabbrv\ (right) is biased. Lines are the true contours and marginals, with samples and histograms characterizing the approximation.}
    \label{fig:e-and-b}
\end{figure}

\begin{figure}
    \centering
    \includegraphics[width=0.3\textwidth]{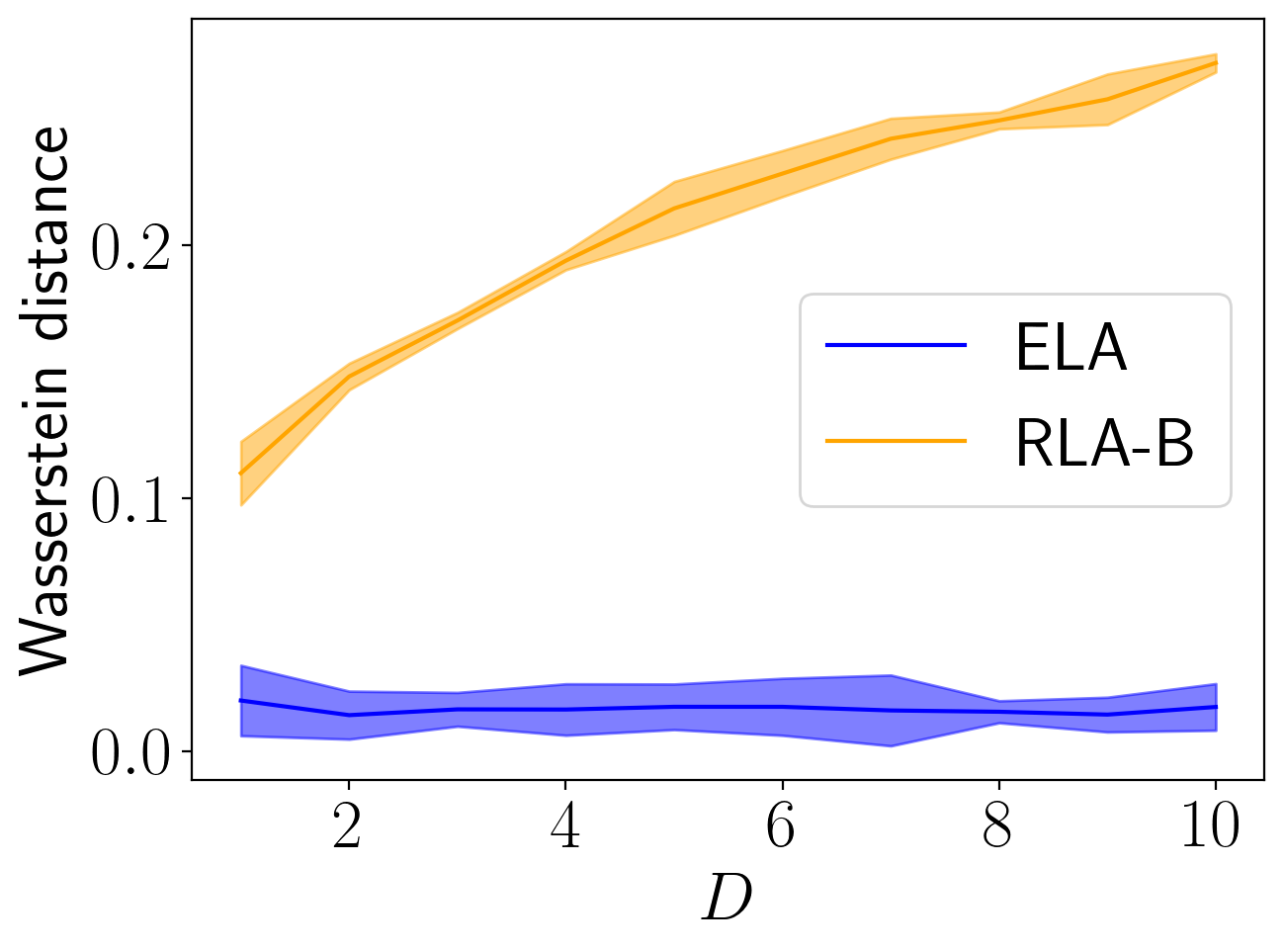}
    \caption{Wasserstein distances from approximate samples to true samples for isotropic Gaussians of varying $D$, computed for the first dimension. The lines are means and $\text{means} \pm 2.0$ times $\text{standard deviations (stds)}$ computed over $5$ runs.}
    \label{fig:distances}
\end{figure}

\section{IMPROVING RIEMANNIAN LA}

Next we describe two alternative asymptotically valid \lafamilyabbrv \ methods, and discuss the choice of the starting point and covariance of velocities.

\subsection{\bergaminabbrv \ with logarithmic map}
\label{sec:log-riem}

We can make \bergaminabbrv \ exact for Gaussian targets by employing a logarithmic map to solve for the initial velocity $\bbv$ such that the result of the exponential map retains the correct Euclidean distance to $\hat{\btheta}$.

Consider two manifolds $\cP$ and $\cM$. $\cP = (\bTheta, g_{\cP})$ has the metric tensor 
\begin{align*}
    \bG_{\cP}(\btheta) & = I + \nabla \log \mathcal{N}(\btheta| \hat{\btheta}, \bSigma) \nabla \log \mathcal{N}(\btheta| \hat{\btheta}, \bSigma)^\top
\end{align*}
where
$\bSigma = (-\nabla^2 \ell_{\by}(\hat{\btheta}))^{-1}$, and $\cM = (\bTheta, g_{\cM})$ the metric tensor
$$
\bG_{\cM}(\btheta) = I + \nabla \ell_{\by}(\btheta) \nabla \ell_{\by}(\btheta)^\top.
$$
$T_{\hat \btheta}\cP$ and $T_{\hat \btheta}\cM$
coincide at $\hat{\btheta}$ as the metric tensors are identical, and hence we have a diffeomorphism
$$
\textrm{Exp}_{\cM, \hat{\btheta}} \circ \textrm{Log}_{\cP, \hat{\btheta}} : \cP  \rightarrow \cM.
$$
Suppose we sample from a Gaussian distribution as in ELA and consider the samples as points on manifold $\cP$. If $\cP$ is equal to $\cM$
the transformation given by $\textrm{Exp}_{\cM, \hat{\btheta}} \circ \textrm{Log}_{\cP, \hat{\btheta}}$ results in exact samples from the target associated with $\cM$.
For non-Gaussian targets 
the method is not exact, but we observed it can still help reducing the underestimation tendency of \bergaminabbrv. 

In Algorithm~\ref{alg::RiemLap} the exponential map is computed as before on manifold $\cM$, and the logarithmic map corresponds to a boundary value problem of the geodesic ODE on manifold $\cP$. It is solved numerically and benefits again from efficient computation of Christoffel symbols. The logarithmic map is in general more costly than the exponential map \citep{Arvanitidis2016}, but we observed it can help reducing the number of function evaluations during the exponential map and the overall cost is not necessarily higher.
We refer to this variant as \logbergaminabbrv.
\begin{algorithm}[t]
    \caption{\logbergaminabbrv, producing $N$ samples $\btheta^{[n]}$.
    The manifold $\cP$ uses the metric induced by Gaussian $\mathcal{N}(\tilde\btheta|\hat\btheta, \bSigma)$ and the manifold $\cM$  the metric induced by the target $\pi(\btheta|\bby)$.}
    \begin{algorithmic}
        \State Obtain MAP estimate $\hat{\btheta}$
        \State $\bSigma = (-\nabla^{2}\ell_{\by}(\hat{\btheta}))^{-1}$
        \For{$n \leftarrow 1\dots N$}
            \State Obtain base sample $\bar{\btheta}^{[n]} \sim \mathcal{N}
            (\bzero,\bSigma)$
            \State $\bbv^{[n]} = \text{Log}_{\cP, \hat{\btheta}}(\bar{\btheta}^{[n]})$
            \State $\btheta^{[n]} = \text{Exp}_{\cM, \hat{\btheta}}(\bbv^{[n]})$
        \EndFor
\end{algorithmic}
\label{alg::RiemLap}
\end{algorithm}

\subsection{Fisher Information Matrix as metric}
\label{sec:fim-riem}
 
The choices of \citet{Bergamin2023} were motivated by computational arguments and relied on extrinsic notions in differential geometry, whereas we turn the attention to intrinsic notions and the natural Riemannian metric tensor for probabilistic machine learning.

\subsection{Fisher metric}

From a statistical point of view the inverse of the FIM is the lower bound of the variance of unbiased estimators. At a first glance, it may not tell us much about possible underlying geometries on $\bTheta$. However, FIM transforms like a 2-covariant tensor for regular probabilistic models as the expected score function is null and it is symmetric and positive-definite by definition \citep[see][]{leh:2003, schervish:2012}. Therefore the FIM can act as the coefficients $\bG(\btheta)$ for the metric $g$ and the pair $(\bTheta, g)$ defines an abstract Riemannian manifold. 

We propose using a metric tensor
\begin{equation} 
    \bG(\btheta) 
    = \E_{\bY|\btheta}\left[ - \nabla^2 \log \pi(\bY | \btheta)  \right]- \nabla^2 \log \pi(\btheta),
 \label{eq:fisher-metric}
\end{equation}
which is the FIM plus the negative Hessian of the log-prior, accounting for both the likelihood and the prior.
This formulation has been used by \citet{Girolami2011} and \citet{Lan2015} as the metric for Riemann Manifold MCMC algorithms and by \citet{Hartmann2019} 
in natural gradient descent for Gaussian processes with non log-concave likelihoods.
For simplicity, we call it the \emph{Fisher metric} and the  resulting approximation \ourabbrv.

The same form can also be used as $\bSigma^{-1}$ when sampling the velocities $\bbv$, instead of the negative Hessian as in \euclaabbrv\ and \bergaminabbrv. The negative Hessian is not guaranteed to be positive definite (e.g. for a neural network), which causes numerical problems, whereas \eqref{eq:fisher-metric} is when the negative Hessian of the log prior is positive definite, e.g. when the prior is Gaussian. In addition, we will later show that the Fisher metric can coincide with the negative Hessian for a specific class of targets (Theorem~\ref{thm:invariant-exact}).

\subsubsection{Hausdorff MAP}
\label{sec:hausdorff-map}

The classical MAP estimate is not invariant under reparameterizations in Euclidean geometry \citep[see][]{ian:2005}. From the Riemannian perspective, a natural invariant alternative is the maximum value of the posterior density under the Hausdorff measure. When $\bTheta$ is endowed with $g$, the probability density function on the manifold is given by
\begin{equation*}
    \pi^{\bG}(\btheta) = \frac{\pi(\btheta)}{\sqrt{ \det \bG(\btheta)}};
\end{equation*}
see Theorem 3.2.5 by \citet{fed:1969}.

If we sample the velocities $\bbv$ using the Fisher metric, we can justify the usage of the Hausdorff MAP as a reparameterization (from the differential geometric viewpoint), where the Fisher is locally identity around the MAP. This makes the gradients zero and consequently the second-order Taylor series still corresponds to a Gaussian; see Section~\ref{sec:hausdorffdetails} in the Supplement for further discussions. We will later show that both Hausdorff and Euclidean MAP can provide good approximations.

\subsubsection{Theoretical basis}
\label{sec:theoretical-basis}

This variant is asymptotically exact for Gaussian targets, including posterior distributions of certain identifiable models at the limit of large data, as determined by the following theorems (see Section~\ref{sec:proofs} in the Supplement for proofs). Note that \citet{Knollmüller2020} also showed conclusions similar to Theorem~\ref{thm:fixed-metric} in the context of VI.

\begin{theorem}
    For Gaussian (or uniform) prior and Gaussian likelihood with fixed covariance, the Fisher metric is constant.
\label{thm:fixed-metric}
\end{theorem}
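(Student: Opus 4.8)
The plan is to split the Fisher metric \eqref{eq:fisher-metric} into its two additive pieces --- the expected likelihood Hessian (the FIM proper) and the negative prior Hessian --- and to show that each is independent of $\btheta$.

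I would dispose of the prior term first, since it is immediate. For a Gaussian prior $\N(\btheta \mid \bmu_0, \bSigma_0)$ the log-density is quadratic in $\btheta$, so $-\nabla^2 \log \pi(\btheta) = \bSigma_0^{-1}$, a constant matrix; for a (possibly improper) uniform prior the log-density is constant and $-\nabla^2 \log \pi(\btheta) = \bzero$. Either way this contribution does not depend on $\btheta$.

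The substantive part is the FIM of the Gaussian likelihood. Writing the mean as $\bmu(\btheta)$, the fixed covariance as $\bSigma_{\by}$, and the residual as $\boldsymbol{r} = \bY - \bmu(\btheta)$, I would differentiate $-\log \pi(\bY \mid \btheta) = \tfrac{1}{2}\,\boldsymbol{r}^\top \bSigma_{\by}^{-1}\boldsymbol{r} + \text{const}$ twice in $\btheta$. The Hessian splits into a Gram-type term $\bJ^\top \bSigma_{\by}^{-1}\bJ$, where $\bJ$ is the Jacobian of $\bmu$, plus a term linear in $\boldsymbol{r}$ that involves the second derivatives of $\bmu$. The key step is taking $\E_{\bY \mid \btheta}$: since $\E[\bY \mid \btheta] = \bmu(\btheta)$ we have $\E[\boldsymbol{r}] = \bzero$, so the residual term is annihilated and the FIM collapses to $\bJ^\top \bSigma_{\by}^{-1}\bJ$.

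It then remains to argue that $\bJ$ is constant. Here I would use that ``Gaussian likelihood'' is to be read as $\btheta$ entering the mean affinely (equivalently, $\log\pi(\bY\mid\btheta)$ being quadratic in $\btheta$ with a $\btheta$-independent Hessian), so that $\bJ$ does not depend on $\btheta$ and $\bJ^\top\bSigma_{\by}^{-1}\bJ$ is constant. The Fisher metric is then a sum of two constant matrices, hence constant. The main obstacle --- really the only non-mechanical point --- is the expectation step that kills the curvature-of-the-mean term; the rest is bookkeeping, and I would be careful to state explicitly the affine-mean reading of ``Gaussian likelihood,'' since for a genuinely nonlinear $\bmu$ the quantity $\bJ(\btheta)^\top\bSigma_{\by}^{-1}\bJ(\btheta)$ would in general vary with $\btheta$ and the claim would fail.
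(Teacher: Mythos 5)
Your proposal is correct, and its skeleton---splitting the metric of Equation~\eqref{eq:fisher-metric} into the negative prior Hessian and the FIM of the likelihood, and showing each piece is constant---is exactly the paper's. The treatments of the likelihood term differ, though. The paper reads ``Gaussian likelihood with fixed covariance'' in the narrowest sense, $\bY \mid \btheta \sim \N(\btheta, \bSigma)$, so that $\nabla^2_{\btheta} \log \pi(\by \mid \btheta) = -\bSigma^{-1}$ identically; the Hessian is constant before any expectation is taken, and the FIM is constant trivially. You instead allow a general mean map $\bmu(\btheta)$, use $\E_{\bY\mid\btheta}[\bY - \bmu(\btheta)] = \bzero$ to collapse the FIM to $\bJ^\top \bSigma_{\by}^{-1} \bJ$, and only then impose the affine-mean reading to make $\bJ$ constant. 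Your route buys a mildly stronger statement (any affine mean, not just the identity) and makes explicit the hypothesis under which the theorem holds---your closing caveat that a genuinely nonlinear $\bmu$ would make $\bJ(\btheta)^\top \bSigma_{\by}^{-1} \bJ(\btheta)$ vary with $\btheta$ is correct and is left implicit in the paper. One small remark: under the affine reading, the curvature-of-the-mean term is identically zero (second derivatives of an affine map vanish), so the expectation step you single out as the key obstacle is not actually needed in the case the theorem covers; it matters only for nonlinear means, where the conclusion fails anyway.
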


\begin{theorem}
When the posterior becomes a Gaussian in the limit of infinite data, \ourabbrv\ is exact when the likelihood has a parameterization where the observed Fisher coincides with the expected Fisher, e.g. for distributions in the exponential family.
\label{thm:converge-metric}
\end{theorem}

The key reasoning behind the above theorems is that, for constant metric (or for one that converges to a constant one), the geodesics become straight lines in the Euclidean sense. Consequently the approximation
becomes identical to \euclaabbrv \ and hence exact for the target that is (or converges to) a Gaussian.

We can also make a stronger statement (proof in Supplement) for specific types of targets and priors:
\begin{theorem}
    With an invariant prior, e.g. Jeffreys prior, 
    \ourinvabbrv\ with Hausdorff MAP is exact for probabilistic models whose target distributions are diffeomorphic with Gaussians, for which the negative Hessian at the Hausdorff MAP coincides with the Fisher metric.
\label{thm:invariant-exact}
\end{theorem}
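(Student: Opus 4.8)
The plan is to exploit the reparameterization invariance of the Fisher metric together with the equivariance of the exponential map under isometries, reducing the claim to the exactness of \euclaabbrv\ for a genuine Gaussian. Let $\bphi$ be a diffeomorphism witnessing that the target is diffeomorphic with a Gaussian, so that in coordinates $\bpsi = \bphi(\btheta)$ the pushforward target density is $\N(\bpsi \mid \bmu_0, \bSigma_0)$. The first step is to argue that $(\bTheta, g)$ endowed with the Fisher metric of \eqref{eq:fisher-metric} is isometric, via $\bphi$, to the same manifold expressed in $\bpsi$-coordinates. The FIM contribution transforms as a $(0,2)$-tensor (as already noted, the expected score is null and the FIM is covariant for regular models); the role of the invariant prior is to make the prior contribution transform compatibly, so that the full $\bG$ is an intrinsic Riemannian metric rather than a coordinate-dependent object.

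Given the isometry, the second step invokes Theorem~\ref{thm:fixed-metric}: in the Gaussian coordinates $\bpsi$ the Fisher metric is a constant matrix $\A$. A constant metric has vanishing Christoffel symbols, so by the geodesic equation \eqref{eq:geodesics} the geodesics are straight lines and $\mathrm{Exp}_{\hat\bpsi}(\bw) = \hat\bpsi + \bw$. Moreover $\sqrt{\det \bG}$ is constant there, so the Hausdorff density $\pi^{\bG}$ is proportional to the Gaussian density and its maximizer, the Hausdorff MAP, is $\hat\bpsi = \bmu_0$; since the Hausdorff density transforms as a scalar and its maximizer is therefore reparameterization-invariant, it corresponds to the Hausdorff MAP $\hat\btheta$ in the original coordinates through $\hat\bpsi = \bphi(\hat\btheta)$.

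The third step tracks the algorithm through $\bphi$. Sampling $\bbv \sim \N(\bzero, \bG(\hat\btheta)^{-1})$ and pushing forward by the Jacobian $\bJ = D\bphi(\hat\btheta)$ yields $\bw = \bJ\bbv$ with covariance $\bJ\, \bG(\hat\btheta)^{-1}\bJ^\top = \A^{-1}$, using the tensor transformation $\bG(\hat\btheta) = \bJ^\top \A \bJ$. Because an isometry commutes with the exponential map, $\bphi\big(\mathrm{Exp}_{\hat\btheta}(\bbv)\big) = \mathrm{Exp}_{\hat\bpsi}(\bw) = \bmu_0 + \bw$, so the pushforward of the \ourabbrv\ samples is exactly $\N(\bmu_0, \A^{-1})$. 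Finally, the stated condition that the negative Hessian at the Hausdorff MAP coincides with the Fisher metric forces $\A = \bG_{\bpsi}(\hat\bpsi) = -\nabla^2 \log \N(\hat\bpsi \mid \bmu_0, \bSigma_0) = \bSigma_0^{-1}$, hence $\A^{-1} = \bSigma_0$ and the samples match the target in $\bpsi$-coordinates; pulling back by $\bphi^{-1}$ recovers the target exactly.

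The main obstacle I anticipate is the first step: establishing that the \emph{full} metric \eqref{eq:fisher-metric}, including the term $-\nabla^2 \log \pi(\btheta)$, is a genuinely intrinsic tensor. The Hessian of a scalar is not coordinate-free in general, so this is precisely where the invariant (e.g.\ Jeffreys) prior must be used, ensuring the combined object transforms as a $(0,2)$-tensor so that $\bphi$ is an isometry and the transport arguments above are valid. The remaining ingredients (equivariance of $\mathrm{Exp}$ under isometries, the Gaussian Hausdorff MAP, and the covariance bookkeeping) are standard once invariance is secured.
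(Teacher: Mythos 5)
Your skeleton matches the paper's proof: view $\bphi$ as an isometry between the target manifold and the Gaussian coordinates, where by Theorem~\ref{thm:fixed-metric} the metric is constant, geodesics are Euclidean straight lines so \ourabbrv\ reduces to \euclaabbrv, use invariance of the Hausdorff MAP, and push the velocity covariance through the Jacobian. However, the two places where you depart from the paper are genuine gaps, and the first is exactly the obstacle you flagged. Your hoped-for resolution of it is false: with a Jeffreys prior, the full metric \eqref{eq:fisher-metric} is in general \emph{not} a $(0,2)$-tensor. In the Gaussian coordinates $\bpsi$ the FIM is the constant $\bS^{-1}$, so the Jeffreys prior is constant there and contributes no Hessian term, whereas in $\btheta$-coordinates $\pi_J(\btheta)=\sqrt{\det\bG_{\bTheta}(\btheta)}$ need not be log-affine, so $-\nabla^2\log\pi_J(\btheta)$ need not vanish; since the likelihood FIM alone already transforms tensorially, adding this extra term would destroy the isometry your whole argument rests on. The paper resolves this differently: the metric $\bG_{\bTheta}$ in its proof is the likelihood FIM only, and the invariant prior enters through the \emph{density}, not the metric --- it supplies exactly the Jacobian determinant that makes the posterior of the model $\pi(\bmu|\btheta)=\N(\bmu|\bphi^{-1}(\btheta),\bS)$ with prior $\pi_J$ equal to the pushforward of the Gaussian, and it makes the Hausdorff density proportional to the likelihood $\pi(\bmu|\bphi^{-1}(\btheta))$, so that the Hausdorff MAP is the transformed MLE $\bphi(\bmu)$.

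Second, you invoke the clause ``the negative Hessian at the Hausdorff MAP coincides with the Fisher metric'' as a \emph{hypothesis} in order to identify $\A=\bSigma_0^{-1}$. In the paper this clause is a \emph{conclusion}: it is proved from the transformation rule of Hessians, whose non-tensorial correction term is proportional to $\nabla_{\bpsi}f$ and therefore vanishes at the MAP. As written, your proof leaves this part of the theorem unproven, and it is not needed to pin down $\A$ anyway. The phrase ``the target is diffeomorphic with a Gaussian'' by itself determines no Fisher metric at all --- the FIM depends on the likelihood model, not on the posterior --- so one must fix the underlying probabilistic model; once the paper's explicit construction (Gaussian location likelihood with covariance $\bS$ plus Jeffreys prior) is in place, the constant metric in $\bpsi$-coordinates is $\bS^{-1}$, which is automatically the posterior precision. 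Supplying that model construction and the Hessian transformation argument would essentially turn your outline into the paper's proof.
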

Denote a diffeomorphic transformation $\btheta = \bphi(\bpsi)$ for a Gaussian distribution $\mathcal{N}(\bpsi|\bmu, \bS)$.
Since Gaussians are symmetric, observing how Riemannian metrics transform, define a metric for the space of $\btheta$ as 
\begin{equation*}
    \bG_{\bPsi} = \left(\pdv{\btheta}{\bpsi}\right)^{\top}\bG_{\bTheta}\pdv{\btheta}{\bpsi}.
\end{equation*}
With $(\bG_{\bPsi})_{i, j}$ $=$ $\E_{\bpsi}\big(-\partial^2_{\mu_i, \mu_j} \log \mathcal{N}(\bpsi| \bmu, \bS)\big) = \bS^{-1}_{i,j}$ inspired by the Fisher metric, we have
\begin{equation*}
    \bG_{\bTheta} = \left(\pdv{\btheta}{\bpsi}\right)^{-\top}\bS^{-1}\left(\pdv{\btheta}{\bpsi}\right)^{-1}.
\end{equation*}

\paragraph{Example.}
Consider the Squiggle distribution, used previously e.g. for evaluation of MCMC samplers \citep{Hartmann2022},
\begin{align*}
    \pi(\theta_{1},\theta_{2}|\bmu,\bS) &= \N(\bphi^{-1}_a (\theta_{1},\theta_{2})|\bmu ,\bS), \\
    \bphi^{-1}_a (\theta_{1},\theta_{2}) &= (\theta_{1}, \theta_{2}+\sin(a\theta_{1})).
\end{align*}
This can be seen as a probability density as
\begin{align*}
    \pi(\bmu|\btheta) & = \N(\bmu | \bphi^{-1}_a(\btheta), \bS) \ \textrm{and} \ \pi_J(\btheta) = \sqrt{\det \bG_{\bTheta}}
\end{align*}
where $\pi_J$ denotes the Jeffreys prior. 

The Jacobian to form the Fisher metric is given by
\begin{equation*}
\left(\pdv{\btheta}{\bpsi_a}\right)^{-1} = \begin{bmatrix}
    1 & 0 \\
    a \cos(a\theta_{1}) & 1
\end{bmatrix}
\end{equation*}
and for $\bmu=\bzero$ used here the Hausdorff MAP is also $\bzero$. As stated by Theorem~\ref{thm:invariant-exact}, \ourinvabbrv \ is exact for this target due to the diffeomorphism detailed above. Figure~\ref{fig:invariant-squiggle} empirically validates this.
\begin{figure}[t]
    \centering
    \begin{tabular}{cc}
        \includegraphics[width=0.45\columnwidth]{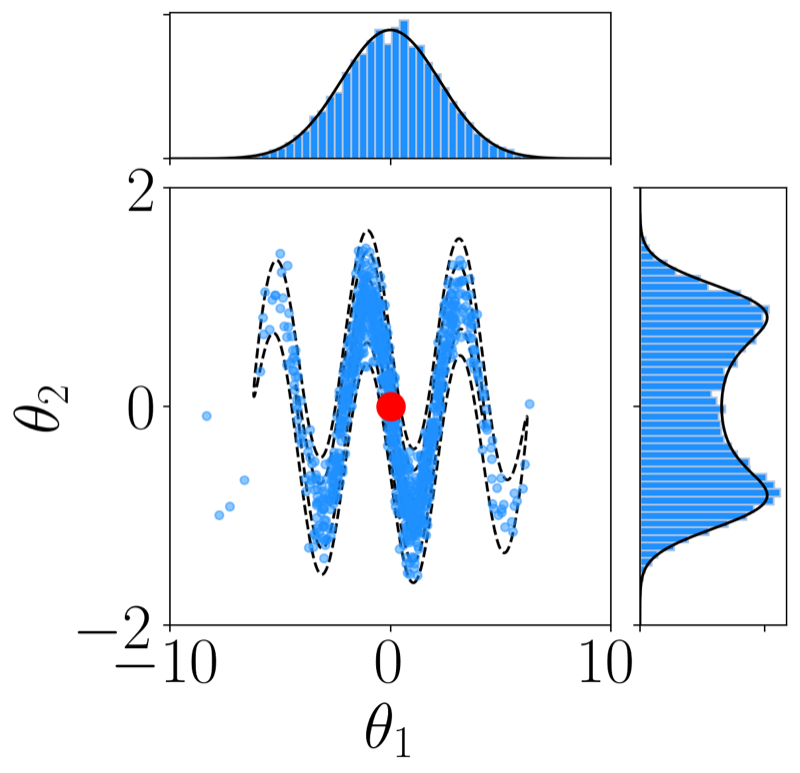} &
        \includegraphics[width=0.45\columnwidth]{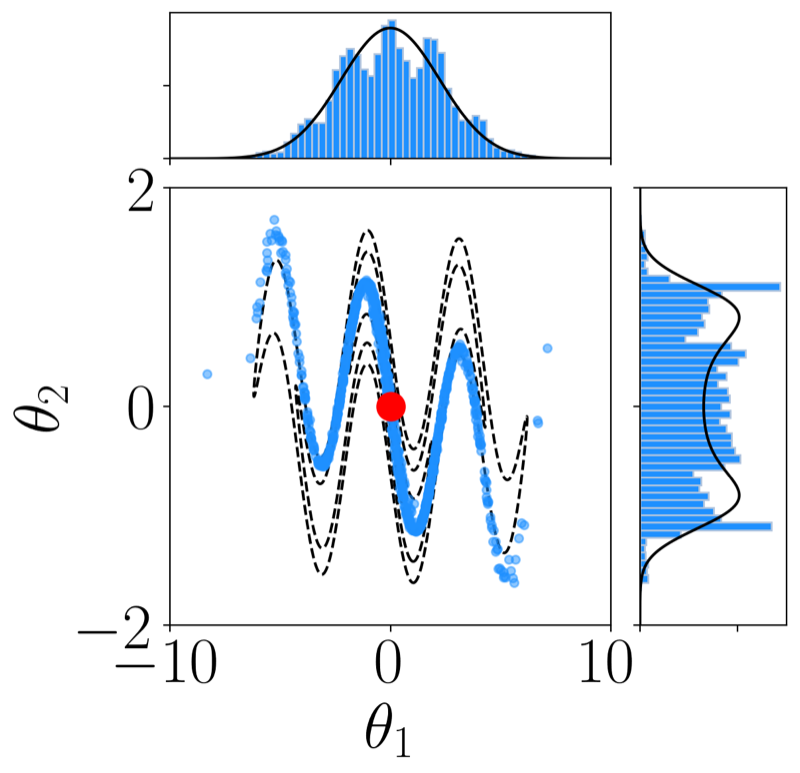}      
    \end{tabular}
    \caption{\ourinvabbrv \ (left) is exact (already for finite data) for the squiggle distribution with complex shape due to diffeomorphism with a Gaussian, whereas \bergaminabbrv \ (right) is too narrow everywhere and generally biased.}
    \label{fig:invariant-squiggle}
\end{figure}

\subsubsection{Computation of the Fisher metric}
\label{sec:comp-fisher}

\paragraph{FIM for neural networks (NNs).}
The Fisher metric is known for many probabilistic models of basic form, i.e. the likelihood part \citep[see][]{yang:96, cas:2001}. For instance, for exponential family the FIM always has closed-form expression \citep{cas:2001}. Building on these, we can easily form the Fisher metric for more complex models.
Let $\phi$ be the parameters of a model $\pi(y|\phi)$ in its basic form in a $P$-dimensional  parameter space $\Phi$. Denote the FIM for $\Phi$ as $\bG_{\bPhi}(\cdot)$.
Consider inputs $\{ \bx_n \}_{n = 1}^N$ with associated observations $\by = \{y_n \}_{n = 1}^N$ and a NN that maps $\mathcal{X}$ $\ni$ $\bx_n$ $\overset{f_{\btheta}}{\mapsto} \Phi$, where $\mathcal{X}$ is the space of inputs and $\btheta$ the vector of parameters. Let $\ell_{\by}(\btheta) = \sum_n^N \log \pi \big(y_n| f_{\btheta}(\bx_n))$ denote the likelihood function of $\btheta$ and define $\phi_n = f_{\btheta}(\bx_n) \in \bPhi \ \forall n$. Using the chain rule for the Hessian matrix of $\ell_{\by}(\btheta)$ w.r.t $\btheta$ the expression involving the score function vanishes in expectation. Therefore, the metric tensor on $\btheta$ reads
\begin{align*}
    \E_{\bY}(- & \nabla^2 \ell_{\bY}(\btheta)_{i, j})
    =  -\sum_n^N \E_{Y_n} \bigg( \partial^2_{i, j} \log \pi \big(Y_n| \underbrace{f_{\btheta}(\bx_n)}_{\phi_n} ) \bigg) \nonumber \\[-0.3cm]
    & = \sum_{n}^N \boldsymbol{J}_n^\top \bG_{\bPhi}(f_{\btheta}(\bx_n)) \boldsymbol{J}_n,
\end{align*}
where $\boldsymbol{J}_n$ $=$ $[\nabla (\boldsymbol{\phi}_n)_1 \cdots \nabla (\boldsymbol{\phi}_n)_ P]^\top$ is a $P \times D$ Jacobian matrix of the NN at the $n^{th}$ input. In other words, we can form the FIM for the whole network by transforming the FIM on its basic form with Jacobians that can be computed using standard automatic differentiation. Formally, this is a pullback metric from the parameter space $\bPhi$ to the NN parameter space $\bTheta$; see Section~\ref{sec:nn-pullback} in the Supplement for further details.

\paragraph{Computational cost.}
The core computational cost for \lafamilyabbrv \ comes from obtaining the acceleration given the position and velocity, which is needed in each step of the ODE integrator. We have
\begin{equation*}
\pdv[2]{\theta_{k}}{t} = -\frac{1}{2}G^{kl} \left[ \left(\partial_{i}G^{kl}+\partial_{j}G_{il}-\partial_{l}G_{ij}\right)v^{i}v^{j}\right],
\end{equation*}
which is a product of the inverse of the metric ($G^{kl}$) and a vector. 
For \ourabbrv, the computational cost is dominated by the inversion. The cost for drawing $N$ samples hence becomes $O(NTD^3)$, where $T$ is the number of evaluations during integration. \bergaminabbrv\ has lower cost due to avoiding direct inversion, but we will later see that with the Fisher metric we can often use considerably smaller $T$ that balances the difference. Also note that already standard \euclaabbrv\  has cost $O(D^3 + ND^2)$ due to inversion of Hessian and multivariate sampling.
The sampling parallelizes over $N$ for all methods. 

\section{EXPERIMENTS}

Code for reproducing the experiments is available at \url{https://github.com/ksnxr/RLAF}.

\subsection{Experimental setup}

We evaluate three methods: \ourabbrv\ using the Fisher metric (Section~\ref{sec:fim-riem}), \bergaminabbrv\ as proposed by \citet{Bergamin2023}, and \logbergaminabbrv\ using the logarithmic map (Section~\ref{sec:log-riem}). In addition, we show results for standard \euclaabbrv\ and discuss the choice of the MAP estimate and covariance for sampling initial velocities when relevant.
We repeat the experiments $5$ times and report averaged results over the repetitions.

For evaluating the approximation accuracy, we generate $20,000$ samples using the NUTS sampler in Stan \citep{Stan2023} as the ground truth. For low-dimensional problems we directly compare the approximation with the posterior samples by computing the Wasserstein distance $\mathcal{W}$ similar to \citet{Zhang2022}.
For the neural network experiments we compare the model predictions instead, due to non-identifiability of the posterior, measured using mean squared error (MSE) and negative log-likelihood (NLL).

Following \citet{Bergamin2023}, we use a numerical ODE solver with adaptive step sizes. We report the actual number of integration steps $T$ (and in some case running time) for the exponential map as an indicator of the complexity of the metric. The numbers are for producing one sample since the algorithm parallelizes trivially over the samples.
For the NN experiment we use
SciPy \citep{Virtanen2020} for integration, and for other experiments Diffrax \citep{Kidger2021}. Additional details are provided in Supplement.

\begin{table*}
	\begin{center}
		\begin{tabular}{|l|l|l|l|l|l|l|l|}
			\hline
			\multicolumn{1}{|c|}{} & \multicolumn{1}{|c|}{\euclaabbrv} & \multicolumn{2}{|c|}{\bergaminabbrv} & \multicolumn{2}{|c|}{\logbergaminabbrv} & \multicolumn{2}{|c|}{\ourabbrv} \\
			\hline
			MAP & $\mathcal{W}$ & $\mathcal{W}$ & $T$ & $\mathcal{W}$ & $T$ & $\mathcal{W}$ & $T$ \\
			\hline
			Euclidean & [1.434, 0.01] & [0.811, 0.009] & 70.4 & [\textbf{0.788}, 0.013] & 70.4 & [0.791, 0.014] & \textbf{24.9} \\
			\hline
			Hausdorff & [1.386, 0.009] & [0.341, 0.006] & 51.3 & [0.208, 0.007] & 60.0 & [\textbf{0.143}, 0.009] & \textbf{32.7} \\
			\hline
		\end{tabular}
	\end{center}
	\caption{Banana distribution results as $[\text{mean},\text{std}]$. Bold font indicates the best method. $\mathcal{W}$ indicates Wasserstein distance to NUTS samples while $T$ indicates the average number of function evaluations for one sample. For all evaluation metrics smaller is better.}
	\label{tbl:banana}
\end{table*}

\subsection{Banana distribution}
\label{sec:banana}

\begin{figure}[t]
    \centering
    \begin{tabular}{ccc}
        \includegraphics[width=0.3\columnwidth]{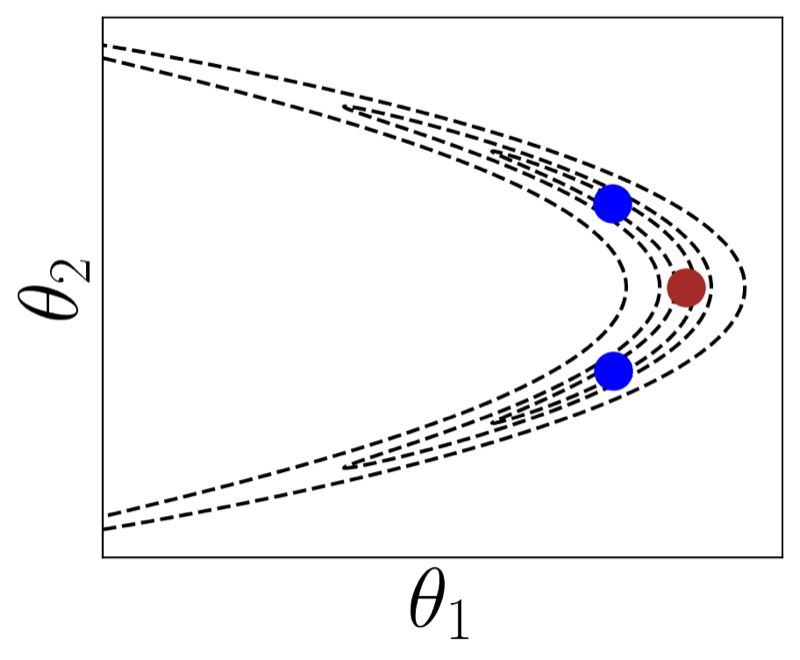} & 

        \includegraphics[width=0.3\columnwidth]{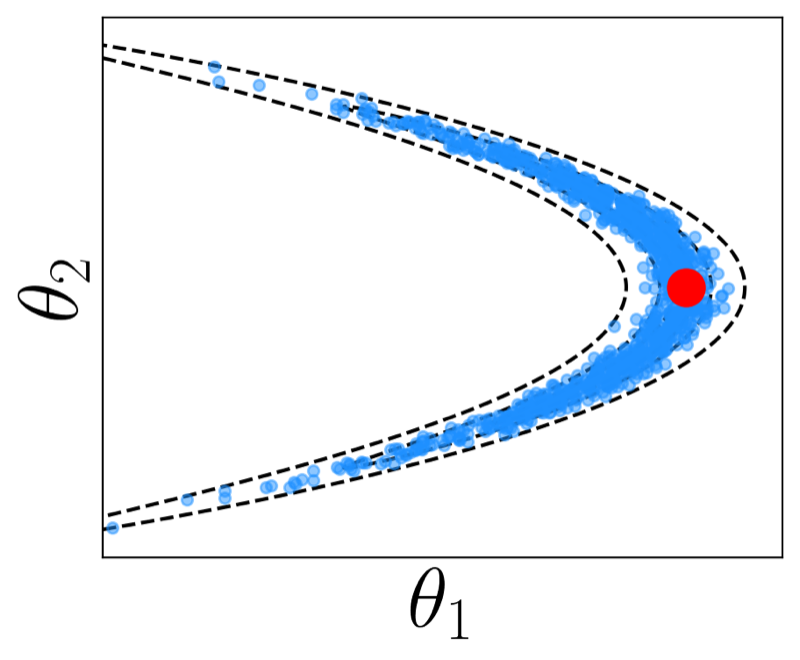} &
        \includegraphics[width=0.3\columnwidth]{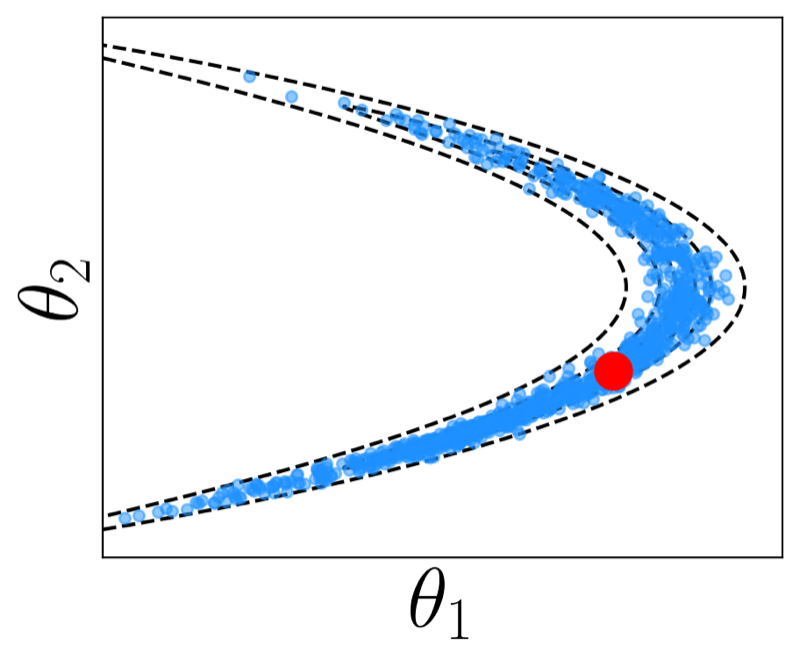}
        \end{tabular}
    \caption{Left: The banana distribution has two Euclidean MAPs (blue) but only one Hausdorff MAP (brown). \ourabbrv\ depends on the MAP choice, with Hausdorff (middle) being here superior to Euclidean (right).  
}
    \label{fig:banana-maps}
\end{figure}

We first compare the approximations in a 2D problem with difficult geometry, the banana distribution
\begin{align*}
    \btheta \overset{\textrm{ind.}}{\sim} \N(0,\sigma_{\btheta}^{2}) \ \textrm{and} \ Y_n \vert \btheta \sim \N(\theta_1 + \theta_2^2,\sigma_Y^{2}),
\end{align*}
where we set $\sigma_{\btheta}=\sigma_Y = 2$, $N = 100$. The samples are obtained using $\theta_{1} = 0.5$ and $\theta_{2}^{2} = 0.75$; the settings are inspired by \citet{Brofos2021}. The distribution has closed-form FIM (Section~\ref{sec:comp-fisher} also applies).

\begin{table*}
	\begin{center}
		\begin{tabular}{|l|l|l|l|l|l|l|l|l|}
			\hline
			 & & \multicolumn{1}{|c|}{\euclaabbrv} & \multicolumn{2}{|c|}{\bergaminabbrv} & \multicolumn{2}{|c|}{\logbergaminabbrv} & \multicolumn{2}{|c|}{\ourabbrv} \\
			\hline
			 & data & $\mathcal{W}$ & $\mathcal{W}$ & $T$ & $\mathcal{W}$ & $T$ & $\mathcal{W}$ & $T$ \\
			\hline
			\multirow{5}{1em}{\centering\rotatebox[origin=c]{90}{stand.}}&Ripl & [0.106, 0.004] & [0.236, 0.001] & 29.7 & [0.16, 0.065] & 45.2 & [\textbf{0.064}, 0.002] & \textbf{12.2} \\
			\cline{2-9}
			&Pima & [0.149, 0.0] & [0.274, 0.0] & 41.7 & [0.21, 0.007] & 78.5 & [\textbf{0.147}, 0.0] & \textbf{12.2} \\
			\cline{2-9}
			&Hear & [0.529, 0.001] & [0.649, 0.0] & 40.2 & [1.154, 0.189] & 71.6 & [\textbf{0.514}, 0.0] & \textbf{13.2} \\
			\cline{2-9}
			&Aust & [0.441, 0.001] & [0.522, 0.001] & 51.5 & [0.745, 0.203] & 79.1 & [\textbf{0.417}, 0.001] & \textbf{18.0} \\
			\cline{2-9}
			&Germ & [0.388, 0.001] & [0.431, 0.0] & 58.4 & [0.676, 0.1] & 104.0 & [\textbf{0.387}, 0.0] & \textbf{14.2} \\
			\hline
			\multirow{5}{1em}{\centering\rotatebox[origin=c]{90}{raw}}&Ripl & [0.437, 0.017] & [0.489, 0.012] & 30.2 & [0.554, 0.253] & 45.0 & [\textbf{0.247}, 0.012] & \textbf{12.7} \\
			\cline{2-9}
			&Pima & [0.21, 0.01] & [0.294, 0.004] & 5632.9 & [0.216, 0.014] & 1690.4 & [\textbf{0.112}, 0.008] & \textbf{15.4} \\
			\cline{2-9}
			&Hear & [0.842, 0.026] & [0.96, 0.012] & \textit{7868.2} & [0.898, 0.029] & \textit{3161.1} & [\textbf{0.644}, 0.012] & \textbf{17.9} \\
			\cline{2-9}
			&Aust & [0.454, 0.004] & [0.455, 0.01] & \textit{18513.8} & [0.467, 0.005] & \textit{12298.6} & [\textbf{0.378}, 0.005] & \textbf{18.0} \\
			\cline{2-9}
			&Germ & [0.846, 0.002] & [0.92, 0.003] & 3545.9 & [0.898, 0.003] & 2985.5 & [\textbf{0.823}, 0.001] & \textbf{17.8} \\
			\hline
		\end{tabular}
	\end{center}
	\caption{Logistic regression results as $[\text{mean}, \text{std}]$. Bold font indicates the best method. Italic font indicates the integrator reached maximum number of steps in at least one run. $\mathcal{W}$ indicates Wasserstein distance to NUTS samples while $T$ indicates the average number of function evaluations for one sample. For all evaluation metrics smaller is better.}
	\label{tbl:lr}
\end{table*}
The distribution has two modes symmetric across the x-axis with the same log-posterior. However, the Hausdorff MAP is unique and at the x-axis. 
Figure~\ref{fig:banana-maps} compares the resulting approximations with \ourabbrv \ for the different MAP estimates; with Euclidean MAP, we use Hessian precision for sampling the velocities, whereas with Hausdorff MAP we use the Fisher precision.
Table~\ref{tbl:banana} shows that all Riemannian methods clearly outperform classical \euclaabbrv\ for this target, and for all Riemannian methods using the Hausdorff MAP is considerably better. Both of the newly proposed methods outperform \bergaminabbrv.

\subsection{Bayesian logistic regression}
\label{sec:blr}

Following \citet{Girolami2011} and \citet{Lan2015}, we applied Bayesian logistic regression on five datasets (details in Supplement) using
the model
\begin{align*}
    Y_n \vert \btheta \sim \textrm{Bernoulli}(\sigma(\btheta^\top\bx_n)) \ \ \textrm{and} \ \ \btheta \overset{\textrm{ind.}}{\sim} \N(0,\alpha),
\end{align*}
where $\sigma : \mathcal{X} \rightarrow (0, 1)$ is the sigmoid function and $\alpha = 100$. The parameter space of the Bernoulli model in its basic form is $\bphi \in (0, 1) = \Phi$, 
whose FIM is known. Therefore the Fisher metric on the parameter space $\bTheta$ becomes $\bG$ $=$ $\bX^{\top}\bLambda\bX + \alpha^{-1}\bI$, 
where $\bX = [\bx_1 \cdots \bx_N]$ is the covariate matrix (or inputs) 
and $\bLambda$ is a diagonal matrix with elements $\bLambda_{n n} = \sigma\big(\boldsymbol{\theta}^\top\boldsymbol{x}_n\big) \big(1 - \sigma(\boldsymbol{\theta}^\top \boldsymbol{x}_n \big) \big)$. We use Euclidean MAP, and here the negative Hessian coincides with the Fisher metric \citep{Girolami2011} (so both choices for $\bSigma$ are the same).

Table~\ref{tbl:lr} reports approximation accuracies for two setups: With 
standardized (z-score)
inputs to make the geometry of the problem easier, and with the raw inputs where the metric also needs to handle potentially large scale differences.
For both cases, 
\ourabbrv\ is clearly the best on all data sets, and the approximations using the Monge metric are worse than \euclaabbrv.
The Fisher metric results in smoother integration surface, especially when not standardizing the inputs, with the Monge metric variants needing up to $1000$ times more evaluations and consequently also more time despite lower complexity (see Supplement for times).

\subsection{Neural network regression}
\label{sec:nn-regression}

Finally, we ran an 
experiment
similar to the one by \citet{Bergamin2023}, with data from \citet{Snelson2005}.
The task is a $1$D regression problem using an NN of size $1$-$10$-$1$ with $\tanh$ activation. The methods are implemented using \citet{Daxberger2021}, with Fisher precision (the default option for full LA), $\hat{\btheta}$ found by standard MAP training, and post hoc optimized prior precision and noise std.
MSE and NLL are computed using $500$ samples.
\begin{figure*}[t]
    \centering
    \begin{tabular}{cccc}
        \includegraphics[width=0.22\textwidth]{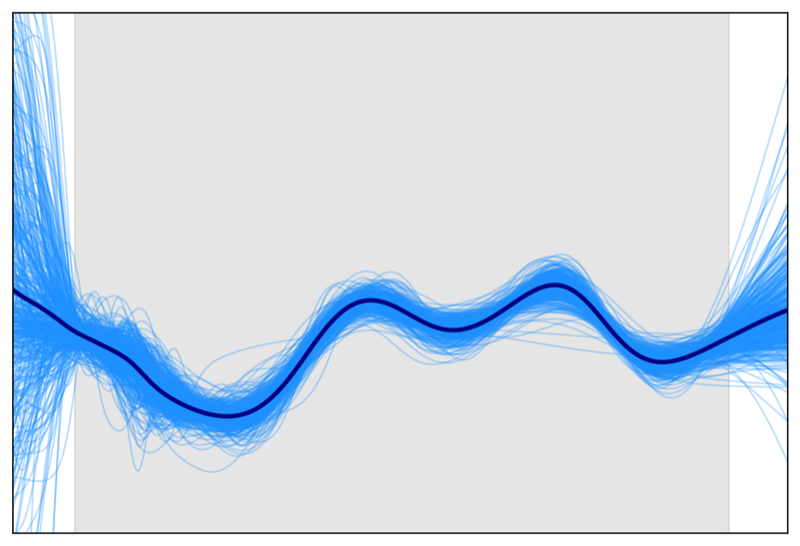} &
        \includegraphics[width=0.22\textwidth]{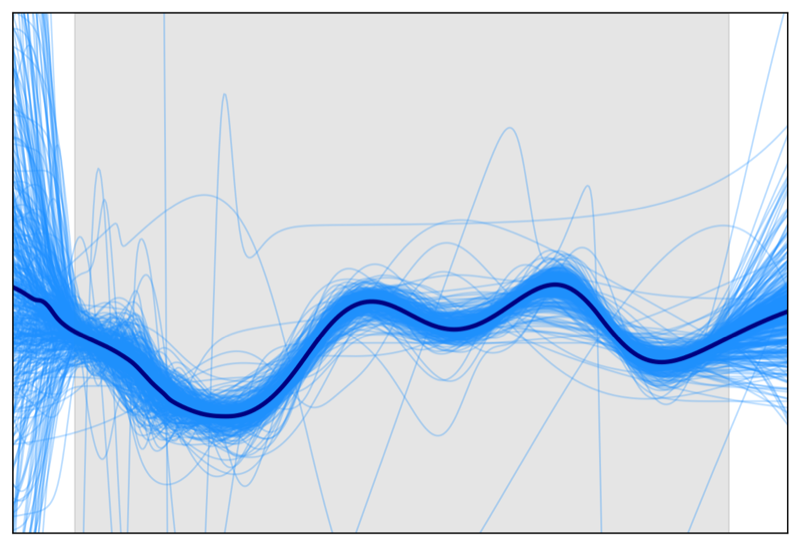} &
        \includegraphics[width=0.22\textwidth]{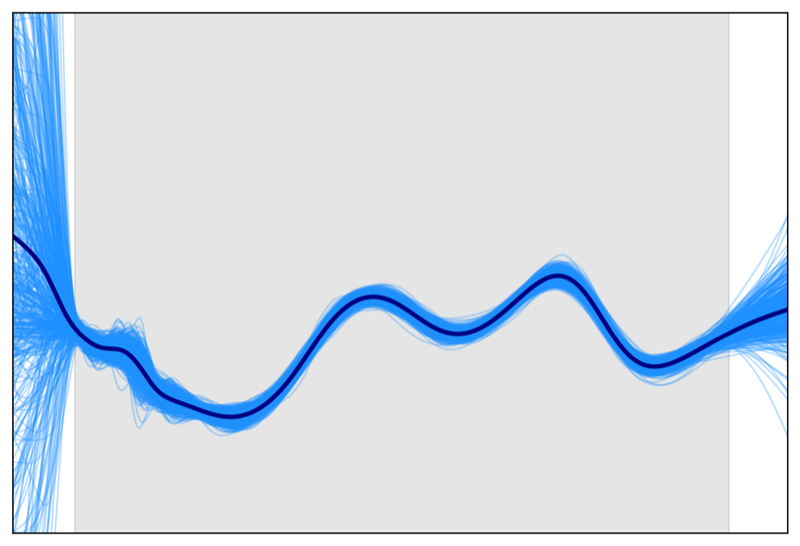} &
        \includegraphics[width=0.22\textwidth]{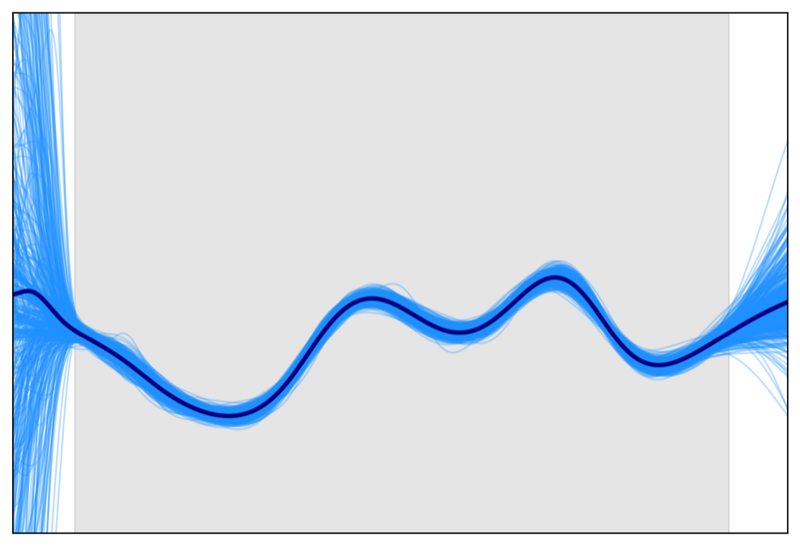} \\
        \includegraphics[width=0.22\textwidth]{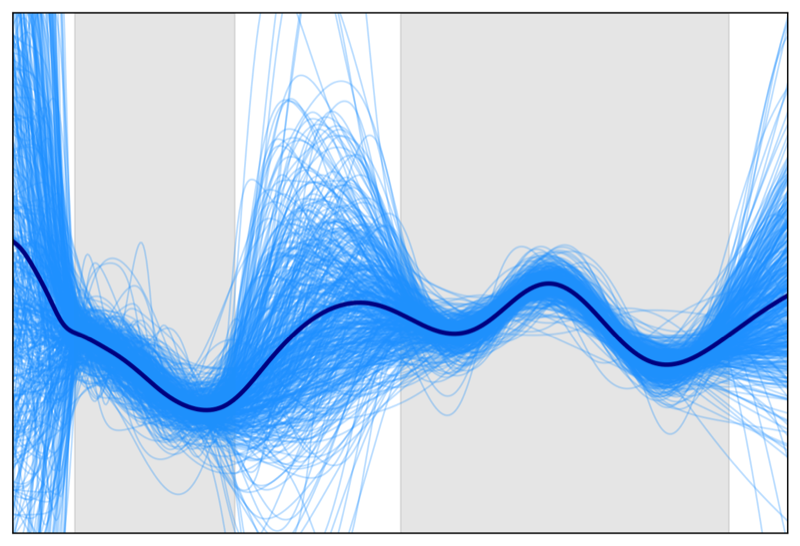} &
        \includegraphics[width=0.22\textwidth]{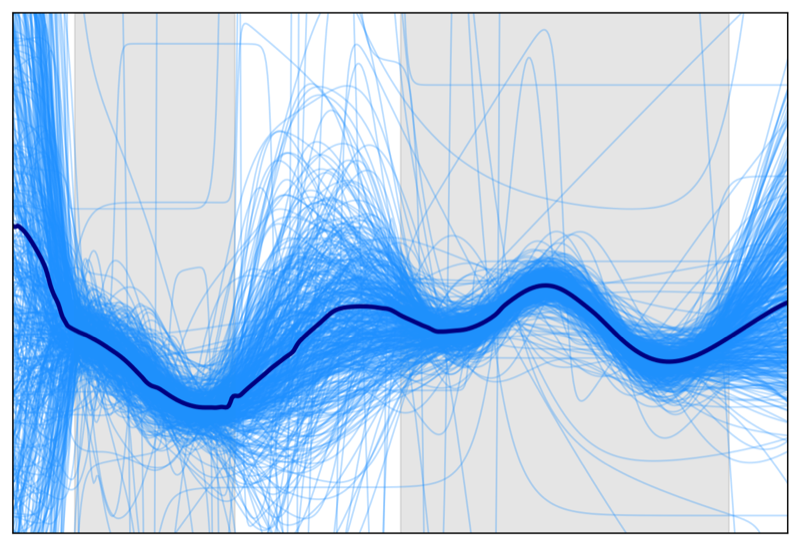} &
        \includegraphics[width=0.22\textwidth]{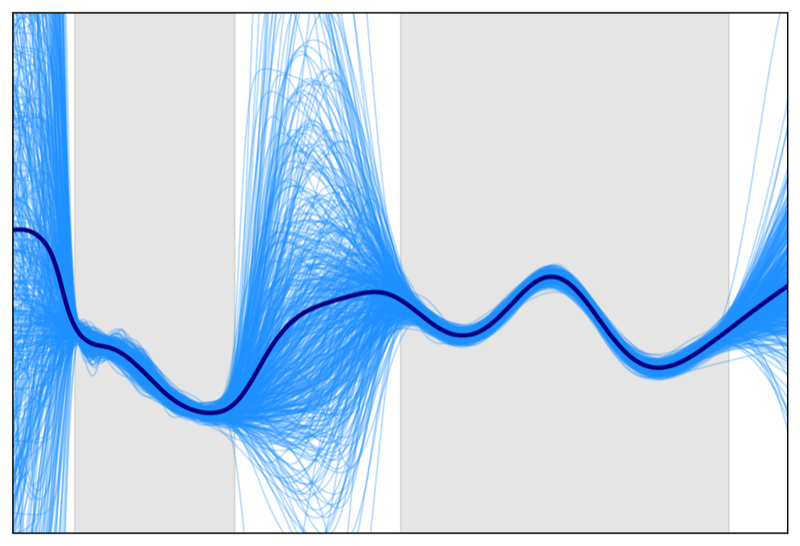} &
        \includegraphics[width=0.22\textwidth]{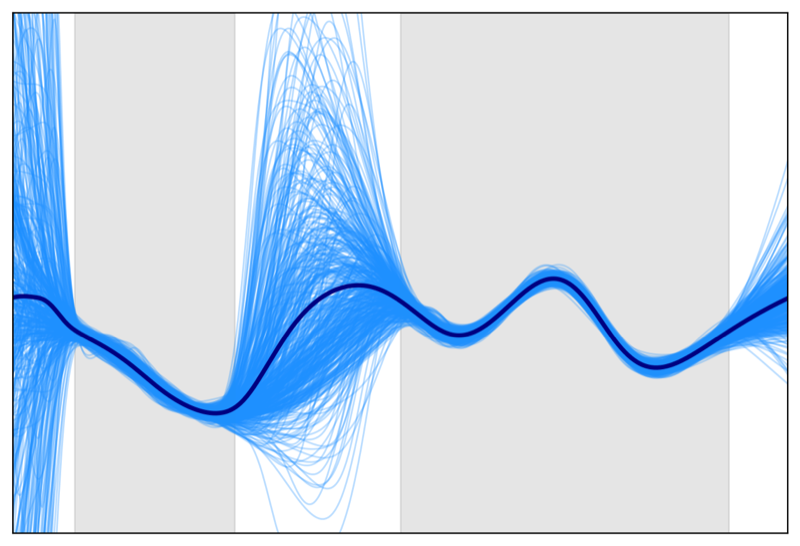}
    \end{tabular}
    
    \caption{NN regression with complete (top) and gap (bottom) training data. Methods from left to right:
    \bergaminabbrv; \logbergaminabbrv; \ourabbrv; NUTS. Gray shading denotes the part of $x$-axis with training data, dark line is the mean prediction, and blue lines are samples.}
    \label{fig:nn-reg}
\end{figure*}
Fisher metric and Christoffel symbols follow from Section~\ref{sec:comp-fisher} and are
\begin{align*}
    G_{kl} &= \frac{1}{\sigma^{2}}\sum_{n = 1}^N \partial_k f_{\btheta}(\bx_n) \partial_l f_{\btheta}(\bx_n) + \alpha^{-1}\delta^{k}_{l} \; \; \text{and} \\
    v^i v^j \Gamma^{k}_{ij} &= \frac{1}{\sigma^{2}}G^{kl}\sum_{n=1}^{N} \partial_l f_{\btheta}(\bx_n) \Big(v^i v^j \partial^2_{i, j} f_{\btheta}(\bx_n) \Big),
\end{align*}
as we use Gaussian prior with precision $\alpha$ on every $\btheta$ and Gaussian probabilistic model for $Y$; see \citet{geom:2014} and \citet{Song2018} for derivations.

Following \citet{Bergamin2023}, we run experiments with training samples covering a continuous area of inputs (\textit{complete}) and with a gap between $1.5$ and $3$ (\textit{gap}) to measure the in-between uncertainty quantification \citep{Foong2019}.
Table~\ref{tbl:nn-reg} quantifies the results and shows that \ourabbrv\ is in general the best approximation and for the \textit{complete} case matches also NUTS. 
As is well known, Euclidean LA does not work without linearization of the network \citep{Daxberger2021}. 
Figure~\ref{fig:nn-reg} illustrates the two scenarios, where \ourabbrv\ is effectively indistinguishable from NUTS, whereas both \bergaminabbrv\ and \logbergaminabbrv\ overestimate predictive variance and some of the posterior samples are clearly off.

\begin{table*}
	\begin{center}
		\begin{tabular}{|l|l|l|l|l|l|l|l|l|}
			\hline
			\multicolumn{1}{|c|}{} & \multicolumn{4}{|c|}{Complete} & \multicolumn{4}{|c|}{Gap} \\
			\hline
			method & MSE & NLL & $T$ & time & MSE & NLL & $T$ & time \\
			\hline
			\euclaabbrv & [2.235, 0.617] & [3.02, 0.064] & N/A & N/A & [1.365, 0.407] & [3.171, 0.075] & N/A & N/A \\
			\hline
			\bergaminabbrv & [0.2, 0.237] & [0.193, 0.003] & 6677.8 & 34.1 & [1.646, 2.722] & [1.031, 0.02] & 6573.2 & 34.0 \\
			\hline
			\logbergaminabbrv & [0.107, 0.054] & [0.201, 0.004] & 6674.7 & 37.7 & [\textbf{0.27}, 0.068] & [1.122, 0.029] & 4793.9 & 28.5 \\
			\hline
			\ourabbrv & [\textbf{0.072}, 0.0] & [\textbf{0.121}, 0.002] & \textbf{242.9} & \textbf{0.8} & [0.564, 0.066] & [1.063, 0.017] & \textbf{727.7} & \textbf{2.4} \\
			\hline
			NUTS & [0.073, 0.0] & [0.126, 0.002] & N/A & N/A & [0.394, 0.02] & [\textbf{0.872}, 0.029] & N/A & N/A \\
			\hline
		\end{tabular}
	\end{center}
	\caption{NN regression results as $[\text{mean},\text{std}]$. Bold font indicates the best method. $T$ indicates the average number of function evaluations for one sample while \textit{time} indicates the average time for one sample. For all evaluation metrics smaller is better.}
	\label{tbl:nn-reg}
\end{table*}

\section{DISCUSSION AND CONCLUSIONS}

\paragraph{Main observation.} Our key message is that the Riemannian Laplace Approximation proposed by \citet{Bergamin2023} is promising as practical and flexible approximation, but their specific variant is biased and the bias can be relatively large already in simple problems. The bias can be resolved by either slightly modifying the algorithm or by considering more suitable metrics, and the corrected approximation family has interesting theoretical properties. For instance, with the Fisher metric, it is exact for targets diffeomorphic with a Gaussian. This opens up new research directions for the study of metrics in inference tasks.

\paragraph{On Monge metric.}
One likely reason for the poor empirical performance of \bergaminabbrv\ is the (lack of) relative scaling of the identity and the outer product in the metric. \citet{Hartmann2022} and \citet{Yu2023} empirically showed that for MCMC the latter may need to be heavily down-weighted and even then it is not always better than Euclidean. \bergaminabbrv\ likely requires some form of scaling as well and downweighting the latter term would reduce the bias, but selecting the scaling remains an open problem. \citet{Hartmann2023} recently proposed a possible remedy that could be used as the basis for more accurate yet computationally efficient metric.

\paragraph{Recommendations.} 
For small-to-medium problems \ourabbrv\ is clearly the best choice, based on consistent best performance and stable computation. For large-scale problems, for instance in the common use case of NNs \citep{Daxberger2021}, \bergaminabbrv\ remains currently the best choice, despite the bias.

We showed how to compute the FIM for arbitrary NNs, but only experimented on tiny ones to facilitate direct comparisons against NUTS. 
We also experimented with NN regression problems of varying numbers of parameters $D$ and number of data points. Interestingly, \ourabbrv\ can be faster than \bergaminabbrv\ even for $D$ larger than $1000$; see Section~\ref{sec:scalability} in the Supplement for details.
We could naturally run \ourabbrv\ for still larger $D$ in reasonable time, but very large models necessarily call for a more efficient metric.
A promising direction for scalable and accurate method could build on 
scalable approximations for FIM \citep{Martens2015,George2018} with practical implementations as in \citet{Botev2023} and \citet{George2021}, or on exact subsampled Fisher \citep{Benzing2022}.

\subsubsection*{Acknowledgements}
HY, MH, BW and AK are supported by the Research Council of Finland Flagship programme: Finnish Center for Artificial Intelligence FCAI, and additionally by the grants 336019, 345811, 348952, 324852. MG is supported by EPSRC grants EP/T000414/1, EP/R018413/2, EP/P020720/2, EP/R034710/1, EP/R004889/1, and a Royal Academy of Engineering Research Chair. The authors acknowledge support from CSC – IT Center for Science, Finland, for computational resources.


\bibliography{references}





\section*{Checklist}

 \begin{enumerate}

 \item For all models and algorithms presented, check if you include:
 \begin{enumerate}
   \item A clear description of the mathematical setting, assumptions, algorithm, and/or model. [Yes]
   \item An analysis of the properties and complexity (time, space, sample size) of any algorithm. [Yes]
   \item (Optional) Anonymized source code, with specification of all dependencies, including external libraries. [Yes]
 \end{enumerate}

 \item For any theoretical claim, check if you include:
 \begin{enumerate}
   \item Statements of the full set of assumptions of all theoretical results. [Yes]
   \item Complete proofs of all theoretical results. [Yes]
   \item Clear explanations of any assumptions. [Yes]     
 \end{enumerate}

 \item For all figures and tables that present empirical results, check if you include:
 \begin{enumerate}
   \item The code, data, and instructions needed to reproduce the main experimental results (either in the supplemental material or as a URL). [Yes]
   \item All the training details (e.g., data splits, hyperparameters, how they were chosen). [Yes]
         \item A clear definition of the specific measure or statistics and error bars (e.g., with respect to the random seed after running experiments multiple times). [Yes]
         \item A description of the computing infrastructure used. (e.g., type of GPUs, internal cluster, or cloud provider). [Yes]
 \end{enumerate}

 \item If you are using existing assets (e.g., code, data, models) or curating/releasing new assets, check if you include:
 \begin{enumerate}
   \item Citations of the creator If your work uses existing assets. [Yes]
   \item The license information of the assets, if applicable. [Yes]
   \item New assets either in the supplemental material or as a URL, if applicable. [Not Applicable]
   \item Information about consent from data providers/curators. [Not Applicable]
   \item Discussion of sensible content if applicable, e.g., personally identifiable information or offensive content. [Not Applicable]
 \end{enumerate}

 \item If you used crowdsourcing or conducted research with human subjects, check if you include:
 \begin{enumerate}
   \item The full text of instructions given to participants and screenshots. [Not Applicable]
   \item Descriptions of potential participant risks, with links to Institutional Review Board (IRB) approvals if applicable. [Not Applicable]
   \item The estimated hourly wage paid to participants and the total amount spent on participant compensation. [Not Applicable]
 \end{enumerate}

 \end{enumerate}

\onecolumn
\aistatstitle{Riemannian Laplace Approximation with the Fisher Metric: \\ Supplementary Materials}

\setcounter{theorem}{0}

\section{CONTENTS}

This supplement complements the main article, by providing the proofs for the theorems, presenting details about the experiments as well as some additional empirical results, and expanding the theoretical basis. For clarity, we number the Sections, Equations \textit{etc.} with a range that does not overlap with the main paper.

Section~\ref{sec:proofs} provides the proofs of the theorems stated in the main paper. Section~\ref{sec:hausdorffdetails} justifies the usage of Hausdorff MAP along with Fisher precision. Section~\ref{sec:coverage} extends the theoretical discussion of the approximation in general, by providing a theorem characterising the coverage of \lafamilyabbrv. Section~\ref{sec:geodesic-ode} provides background on numerically solving ODEs and demonstrates how to formulate solving the exponential and logarithmic maps as ODE problems. Section~\ref{sec:auto-christoffel} demonstrates how to obtain the Christoffel symbols based on a differentiable expression of the Riemannian metric using JAX \citep{jax:2018}, and Section~\ref{sec:christoffel-models} provides additional details concerning the computation of the Fisher metric and the resulting Christoffel symbols for certain models. Section~\ref{sec:expdetails} provides all sorts of details for the empirical experiments reported in the main article. Finally, Section~\ref{sec:extraexperiments} provides additional experimental results, for instances visual illustrations that were omitted from the  main paper due to space constraints as well as demonstration of the methods for one more target distribution. 

Code for reproducing the experiments is available at \url{https://github.com/ksnxr/RLAF}.

\section{PROOFS}
\label{sec:proofs}
\subsection{Theorem 1}
\label{sec:proof1}

\begin{theorem}
    For Gaussian (or uniform) prior and Gaussian likelihood with fixed covariance, the Fisher metric is constant.
\end{theorem}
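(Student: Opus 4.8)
The plan is to start from the definition of the Fisher metric in Equation~\eqref{eq:fisher-metric}, which splits $\bG(\btheta)$ into a likelihood part, the FIM $\E_{\bY|\btheta}[-\nabla^2 \log \pi(\bY|\btheta)]$, and a prior part, $-\nabla^2 \log \pi(\btheta)$. I would prove that each part is a constant matrix in $\btheta$, so that their sum is constant.

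For the likelihood part, I would write the Gaussian log-density with fixed covariance $\bSigma$ and mean $\bmu(\btheta)$,
\[
\log \pi(\by|\btheta) = -\tfrac12 (\by - \bmu(\btheta))^\top \bSigma^{-1}(\by-\bmu(\btheta)) + \text{const},
\]
and differentiate twice with respect to $\btheta$ to obtain
\[
-\nabla^2 \log \pi(\by|\btheta) = \bJ^\top \bSigma^{-1}\bJ - \sum_k [\bSigma^{-1}(\by - \bmu(\btheta))]_k \,\nabla^2 \mu_k(\btheta),
\]
where $\bJ = \partial\bmu/\partial\btheta$. Because the mean is affine in $\btheta$ (equivalently, $\btheta$ enters as a location parameter with $\bmu(\btheta)=\btheta$ and $\bJ=\bI$), the Jacobian $\bJ$ is a constant matrix and $\nabla^2 \mu_k = \bzero$, so the second term drops out and the Hessian reduces to the constant $\bJ^\top\bSigma^{-1}\bJ$, independent of both $\by$ and $\btheta$. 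Taking $\E_{\bY|\btheta}$ is then immediate; more generally the residual-weighted term vanishes in expectation since $\E_{\bY|\btheta}[\bY - \bmu(\btheta)] = \bzero$, so the FIM equals the constant $\bJ^\top\bSigma^{-1}\bJ$.

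For the prior part, a Gaussian prior $\N(\btheta|\bmu_0,\bSigma_0)$ has $-\nabla^2 \log \pi(\btheta) = \bSigma_0^{-1}$, a constant precision matrix, while a uniform prior has constant log-density and hence $-\nabla^2 \log \pi(\btheta) = \bzero$. In either case the prior part is constant in $\btheta$, so $\bG(\btheta) = \bJ^\top\bSigma^{-1}\bJ + \bSigma_0^{-1}$ (dropping the last term under the uniform prior) is a fixed matrix, which is the claim.

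The only delicate point is the precise meaning of ``Gaussian likelihood with fixed covariance'': the argument rests on the mean being an affine function of $\btheta$, so that the curvature term $\nabla^2 \mu_k$ disappears and the FIM loses its $\btheta$-dependence. This is exactly what fails for a nonlinear mean map such as the banana target with mean $\theta_1 + \theta_2^2$, where $\bJ$ depends on $\btheta$ and the Fisher metric is genuinely non-constant, consistent with the experiments. I therefore expect the main obstacle to be stating this affine-mean (location-parameter) assumption cleanly rather than any hard computation, since the remaining steps are routine differentiation and the vanishing of the score term in expectation.
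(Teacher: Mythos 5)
Your proof is correct and follows essentially the same route as the paper's: split the Fisher metric of Equation~\eqref{eq:fisher-metric} into the prior part and the likelihood FIM, compute each Hessian directly, and observe that both are constant (with the Gaussian prior giving $\bSigma_0^{-1}$ and the uniform prior giving $\bzero$). The only difference is that the paper simply takes the likelihood to be $\N(\by|\btheta,\bSigma)$ — the location model, i.e., your special case $\bJ=\bI$ — whereas you allow a general affine mean and explicitly flag that this affine/location assumption is precisely what the theorem's phrase ``Gaussian likelihood with fixed covariance'' implicitly requires; this is a mild and correct generalization, not a different argument.
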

\begin{proof}
Consider the prior distribution on the parameters of interest. We consider two cases, the first where the prior is Gaussian with a fixed covariance and the second where the prior is uniform. For the first case, we have $\btheta \sim \N(\bmu,\bS)$ for some $\bmu$ and $\bS$. Therefore,
\begin{equation*}
    \log \pi(\btheta) = -\frac{k}{2}\log(2\pi) - \frac{1}{2}\log\det(\bS) - \frac{1}{2}(\btheta-\bmu)^{\top}\bS^{-1}(\btheta-\bmu),
\end{equation*}
and
\begin{equation*}
    \pdv[2]{\log \pi(\btheta)}{\btheta} = -\bS^{-1}.
\end{equation*}
The negative Hessian is therefore constant.
For the second case, where the prior is uniform, the negative Hessian is $0$ everywhere, thus is also constant.

Consider the Gaussian distribution $\bY|\btheta \sim \N(\btheta, \bSigma)$ for some fixed $\bSigma$. We have
\begin{equation*}
    \pdv[2]{\log \pi(\by|\btheta)}{\btheta} = -\bSigma^{-1},
\end{equation*}
which does not depend on $\by$, thus the FIM is constant. 

Therefore, the resulting Fisher metric is constant.
\end{proof}

\subsection{Theorem 2}

\begin{theorem}
When the posterior becomes a Gaussian in the limit of infinite data, \ourabbrv\ is exact when the likelihood has a parameterization where the observed Fisher coincides with the expected Fisher, e.g. for distributions in the exponential family.
\end{theorem}

\begin{proof}
In the limit of infinite data the contribution of the prior diminishes.

All distributions in the exponential family can be written using canonical parameters \citep{Murphy2023}, in which case we have
\begin{align*}
\log p(\bx|\btheta) &= \log h(\bx) + \btheta^{\top} T(\bx) - A(\btheta),\\
\nabla_{\btheta} \log p(\bx|\btheta) &= T(\bx) - \nabla_{\btheta}A(\btheta),\\
\nabla_{\btheta}^{2} \log p(\bx|\btheta) &= -\nabla_{\btheta}^{2}A(\btheta) = \E_{p(\bx|\btheta)}\left[\nabla_{\btheta}^{2} \log p(\bx|\btheta)\right],
\end{align*}
which implies that the observed Fisher information coincides with the expected Fisher information for all parameter values.

Consider the case where \ourabbrv\ is formed under the canonical parameters, in which case it is equivalent to using as metric the observed Fisher information. Since the observed Fisher converges to a fixed value, the resulting approximation becomes \euclaabbrv.

Due to the isometry between the manifold induced by different parameterizations, \ourabbrv\ is also exact when the likelihood is under different parameterizations; see Section~\ref{sec:proof3} for further discussions.
\end{proof}

\subsection{Theorem 3}
\label{sec:proof3}

\begin{theorem}
    With an invariant prior, e.g. Jeffreys prior, 
    \ourinvabbrv\ with Hausdorff MAP is exact for probabilistic models whose target distributions are diffeomorphic with Gaussians, for which the negative Hessian at the Hausdorff MAP coincides with the Fisher metric.
\end{theorem}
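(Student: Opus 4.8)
The plan is to exhibit the diffeomorphism $\bphi$ as an \emph{isometry} between the flat manifold $(\bPsi, \bS^{-1})$ and $(\bTheta, \bG_{\bTheta})$, and then transport the exactness of \euclaabbrv\ for a Gaussian through this isometry. First I would record the geometric bookkeeping: by construction $\bG_{\bTheta} = \left(\pdv{\btheta}{\bpsi}\right)^{-\top}\bS^{-1}\left(\pdv{\btheta}{\bpsi}\right)^{-1}$ is the pullback of the constant metric $\bS^{-1}$ under $\bphi$, so $\bphi$ preserves the metric, sends geodesics to geodesics, and intertwines the exponential maps, $\mathrm{Exp}_{\bTheta, \bphi(\bpsi)}(\mathrm{d}\bphi\,\bbv) = \bphi\big(\mathrm{Exp}_{\bPsi, \bpsi}(\bbv)\big)$. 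Because $\bS^{-1}$ is constant, $(\bPsi, \bS^{-1})$ is flat and its exponential map is the straight-line map $\mathrm{Exp}_{\bPsi, \bpsi}(\bbv) = \bpsi + \bbv$, which is precisely the \euclaabbrv\ step.

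Next I would locate the Hausdorff MAP and show it equals $\hat{\btheta} = \bphi(\bmu)$. Here the invariant prior does the essential work: interpreting ``target diffeomorphic with a Gaussian'' as the \emph{pushforward} of $\mathcal{N}(\bmu, \bS)$ under $\bphi$, the Jeffreys prior $\pi_J(\btheta) = \sqrt{\det \bG_{\bTheta}} = \left\lvert\det\pdv{\btheta}{\bpsi}\right\rvert^{-1}/\sqrt{\det \bS}$ supplies exactly the change-of-variables Jacobian, so the posterior coincides (up to a constant) with the pushforward density $\mathcal{N}(\bphi^{-1}(\btheta)\mid\bmu, \bS)\left\lvert\det\pdv{\btheta}{\bpsi}\right\rvert^{-1}$. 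Forming the Hausdorff density by dividing by $\sqrt{\det \bG_{\bTheta}}$ cancels the Jacobian, leaving $\pi^{\bG}(\btheta) \propto \mathcal{N}(\bphi^{-1}(\btheta)\mid\bmu, \bS)$, whose unique maximizer is $\bphi^{-1}(\btheta) = \bmu$, i.e. $\hat{\btheta} = \bphi(\bmu)$.

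With the starting point fixed, I would track the sampling law through the algorithm. Sampling the velocity with Fisher precision gives $\bbv \sim \mathcal{N}(\bzero, \bG_{\bTheta}(\hat{\btheta})^{-1})$; transporting it to $T_{\bmu}\bPsi$ via $\mathrm{d}\bphi^{-1} = \left(\pdv{\btheta}{\bpsi}\right)^{-1}$ yields $\bbu = \left(\pdv{\btheta}{\bpsi}\right)^{-1}\bbv$ with covariance $\bS$, since $\bG_{\bTheta}(\hat{\btheta})^{-1} = \left(\pdv{\btheta}{\bpsi}\right)\bS\left(\pdv{\btheta}{\bpsi}\right)^{\top}$. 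Applying the intertwining relation then gives $\btheta = \mathrm{Exp}_{\bTheta, \hat{\btheta}}(\bbv) = \bphi(\bmu + \bbu)$ with $\bmu + \bbu \sim \mathcal{N}(\bmu, \bS)$, so the produced samples are exactly $\bphi$ of draws from $\mathcal{N}(\bmu, \bS)$ --- the pushforward, hence the target, proving exactness. The stated coincidence of the negative Hessian with the Fisher metric at $\hat{\btheta}$ I would obtain as a corollary: in the $\bpsi$-parameterization the target is the genuine Gaussian $\mathcal{N}(\bmu, \bS)$ whose gradient vanishes at $\bmu$, so the Hessian transforms tensorially at that critical point and the negative Hessian in $\btheta$-coordinates equals $\bG_{\bTheta}(\hat{\btheta})$, confirming the Hessian- and Fisher-based choices of $\bSigma$ agree there.

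The step I expect to be the main obstacle is the Hausdorff-MAP/Jacobian accounting in the second paragraph: one must interpret ``diffeomorphic with a Gaussian'' as the pushforward measure, verify that the \emph{invariant} prior reproduces exactly the missing Jacobian (so the posterior, not merely the composed density, equals the pushforward), and confirm that forming the Hausdorff density cancels it cleanly up to normalization --- without this the MAP would land at the ordinary, non-invariant mode and the argument would break. The isometry and the velocity-covariance computations are then routine linear algebra.
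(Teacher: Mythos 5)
Your proposal is correct and follows essentially the same route as the paper's proof: the target is identified with the pushforward of $\mathcal{N}(\bmu,\bS)$, the Jeffreys prior supplies the change-of-variables Jacobian which cancels in the Hausdorff density so that $\hat{\btheta}=\bphi(\bmu)$, exactness is transported through the metric-preserving diffeomorphism (the paper phrases this as the tangent-space isomorphism making exponential maps and initial velocities transform correctly, which you spell out via the intertwining relation and the velocity-covariance computation), and the Hessian--Fisher coincidence follows from the tensorial transformation of the Hessian at a critical point where $\nabla_{\bpsi}f=0$. Your version is somewhat more explicit in tracking the sampling law, but no step differs in substance from the paper's argument.
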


\begin{proof}
Consider first the simple case with no transformations, such that the probabilistic model reduces to the following form
\begin{align*}
\pi(\bmu | \bpsi) &= \N(\bmu | \bpsi, \bS), \\
\pi(\bpsi) &= \text{Jeffreys}(\bpsi).
\end{align*}
The FIM for $\bpsi$ is constant, given by $\bS^{-1}$. Therefore, The prior corresponds to a uniform distribution in the Euclidean sense, and the posterior is given by
\begin{equation*}
\pi(\bpsi | \bmu) = \N(\bpsi | \bmu, \bS).
\end{equation*}
It is clear that \ourinvabbrv\ is exact for this posterior.

Consider now a diffeomorphic transformation $\bphi: \bpsi \rightarrow \btheta$. It is well known that the Fisher Information Matrix automatically transforms according to the correct transformation rule of a Riemannian metric \citep{Martens2020,Kristiadi2023}. The distribution of the transformed parameters is given by $\pi(\btheta | \bmu) =  
    \pi(\bphi^{-1}(\btheta)  |\bmu)\left|\det \pdv{\bphi^{-1}}{\btheta}\right|$. It coincides with the posterior of the probabilistic model
\begin{align*}
\pi(\bmu | \btheta) &= \N(\bmu | \bphi^{-1}(\btheta), \bS), \\
\pi_{J}(\btheta) &= \sqrt{\det\bG_{\bTheta}(\btheta)},
\end{align*}
since Jeffreys prior accounts for the change of variable of the transformation,
\begin{align*}
    \pi_{J}(\btheta) =\sqrt{\det \bG_{\bTheta}(\btheta)} =  \left|\det \pdv{\bphi^{-1}}{\btheta}\right|\sqrt{\det\bG_{\bPsi}(\bphi^{-1}(\btheta))}.
\end{align*}
The parameters that give the maximum value under the Hausdorff measure $ {\pi(\btheta|\bmu)} (\det \bG_{\bTheta}(\btheta))^{-\tfrac{1}{2}} \propto \pi(\bmu|\bphi^{-1}(\btheta))$ are always given by the MLE estimate $\hat\btheta = \bphi(\bmu )$.

The argument for exactness is as follows.  
Consider the two manifolds $\cM: (\bPsi, \bG_{\bPsi})$ and $\cN: (\bTheta, \bG_{\bTheta})$. Since the corresponding metrics satisfy the transformation rule
\begin{equation}
\bG_{\bTheta} = \left(\pdv{\bpsi}{\btheta}\right)^{\top}\bG_{\bPsi} \pdv{\bpsi}{\btheta}, \label{eq::metric_transformation}
\end{equation}
there exists an isomorphism between the tangent spaces of $\cM$ and $\cN$. As a result, the exponential map on $\cM$ and $\cN$ and the tangent vectors at the center of the distribution transform correctly. Since \ourinvabbrv\ is exact for the distribution corresponding to $\cM$, it is also exact for the distribution corresponding to $\cN$.

Moreover, the negative Hessian at the Hausdorff MAP coincides with the Fisher metric. This can be seen from the transformation rule of Hessian matrix under a transformation given by $\btheta = \bphi(\bpsi)$
\begin{equation*}
\nabla_{\btheta}^{2}f = \left(\pdv{\bpsi}{\btheta}\right)^{\top}\nabla_{\bpsi}^{2}f \left(\pdv{\bpsi}{\btheta}\right) + [\nabla_{\btheta}^{2}\bphi^{-1}_{1}(\btheta),\dots ,\nabla_{\btheta}^{2}\bphi^{-1}_{n}(\btheta)]\left(\nabla_{\bpsi}f\otimes \bI_{n}\right),
\end{equation*}
because for the MAP estimate, $\nabla_{\bpsi}f = 0$.
\end{proof}

\section{USING HAUSDORFF MAP AND FISHER PRECISION}
\label{sec:hausdorffdetails}

In Section~\ref{sec:hausdorff-map}, we briefly discussed the choices of using Hausdorff MAP together with Fisher precision. Here we provide additional remarks on this.

We use notations inspired by \citet{Kristiadi2023}. Consider a reparametrization $\bphi : \bpsi \rightarrow \btheta$. Denote its Jacobian as $\bJ$, with
\begin{equation*}
J_{ij} = \frac{\partial\theta_{i}}{\partial\psi_j}.
\end{equation*}
As noted by \citet{Kristiadi2023}, the transformation rule of vector components is
\begin{equation*}
\bbv_{\bTheta} = \bJ \bbv_{\bPsi},
\end{equation*}
the transformation rule of covector components is
\begin{equation*}
\bw_{\bTheta} = \bJ^{-\top} \bw_{\bPsi},
\end{equation*}
and the transformation of the components of a Riemannian metric, which is a $(0,2)$ tensor, follows
\begin{equation*}
\bG_{\bTheta} = \bJ^{-\top}\bG_{\bPsi} \bJ^{-1}.
\end{equation*}

Recall that the FIM transforms as a Riemannian metric under one-to-one mappings \citep{Martens2020,Kristiadi2023}.
Therefore, with an invariant prior or ignoring the effect of the prior, the resulting metric transforms automatically.

Typical MAP estimates are not invariant under reparametrization. However, we can define an invariant MAP estimate by taking into account the Riemannian structure \citep{Kristiadi2023}. Specifically, instead of calculating the maximum of the PDF under Lebesgue measure, we calculate it under the Hausdorff measure. Denote the PDF under Lebesgue measure as $\pi_{\bTheta}(\btheta)$, and the PDF under Hausdorff measure as $\pi_{\bTheta}^{\bG}(\btheta)$. Then, using the Riemannian volume form \citep{Lee2018}, we have
\begin{equation*}
    \pi_{\bTheta}^{\bG}(\btheta) = \frac{\pi_{\bTheta}(\btheta)}{\sqrt{\det \bG_{\bTheta}(\btheta)}}.
\end{equation*}
\begin{theorem}
    The PDF under Hausdorff measure is invariant across reparametrization.
\end{theorem}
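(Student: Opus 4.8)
The plan is to combine two transformation rules---the Jacobian change of variables for ordinary densities and the transformation rule for the metric determinant---and show that their Jacobian factors cancel exactly. First I would fix the reparametrization $\btheta = \bphi(\bpsi)$ with Jacobian $\bJ$, where $J_{ij} = \partial\theta_i/\partial\psi_j$, and recall that the two Lebesgue-measure densities are related by the standard change of variables formula $\pi_{\bPsi}(\bpsi) = \pi_{\bTheta}(\bphi(\bpsi))\,|\det \bJ|$, which follows from preservation of probability mass under the map.

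Next I would use the transformation rule $\bG_{\bTheta} = \bJ^{-\top}\bG_{\bPsi}\bJ^{-1}$ stated above (together with the fact, already invoked in the proof of Theorem~3, that the FIM transforms as a $(0,2)$ tensor) to compute how the metric determinant changes. Taking determinants of both sides gives $\det \bG_{\bTheta} = (\det \bJ)^{-2}\det \bG_{\bPsi}$, and hence $\sqrt{\det \bG_{\bTheta}} = |\det \bJ|^{-1}\sqrt{\det \bG_{\bPsi}}$, where all quantities are understood to be evaluated at corresponding points under $\bphi$.

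Then I would substitute both identities into the definition of the Hausdorff density. Starting from $\pi_{\bPsi}^{\bG}(\bpsi) = \pi_{\bPsi}(\bpsi)/\sqrt{\det \bG_{\bPsi}(\bpsi)}$ and inserting the two relations, the factor $|\det \bJ|$ appearing in the numerator from the density cancels precisely against the factor $|\det \bJ|$ produced by the metric determinant in the denominator, leaving $\pi_{\bTheta}^{\bG}(\bphi(\bpsi))$. This shows the two Hausdorff densities agree at corresponding points, which is the asserted invariance; in particular the location of the Hausdorff maximizer is preserved, justifying the Hausdorff MAP as an invariant estimate.

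I do not expect a genuine obstacle here, since the statement reduces to an algebraic cancellation. The only points requiring care are the bookkeeping of which density and which determinant acquire the Jacobian factor (i.e.\ the direction of the map $\bphi$), and handling $\sqrt{(\det \bJ)^{2}}$ as $|\det \bJ|$ so that the absolute value matches the one in the change of variables formula and the signs are consistent.
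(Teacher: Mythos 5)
Your proposal is correct and is essentially the same argument as the paper's proof: both combine the Lebesgue change-of-variables rule for the density with the $(0,2)$-tensor transformation rule $\bG_{\bTheta} = \bJ^{-\top}\bG_{\bPsi}\bJ^{-1}$, take determinants, and cancel the resulting $|\det\bJ|$ factors. The only (immaterial) difference is that you start from $\pi^{\bG}_{\bPsi}$ and push forward, while the paper starts from $\pi^{\bG}_{\bTheta}$ and pulls back.
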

\begin{proof}
\begin{align*}
    \pi_{\bTheta}^{\bG}(\btheta) &= \frac{\pi_{\bTheta}(\btheta)}{\sqrt{\det\bG_{\bTheta}(\btheta)}} = \frac{\pi_{\bPsi}(\bphi^{-1}(\btheta))\left\vert{\det \bJ^{-1}(\btheta)}\right\vert}{\sqrt{ 
\det( \bJ^{-T}(\btheta)\bG_{\bPsi}(\bphi^{-1}(\btheta))\bJ^{-1}(\btheta))}}\\
    & = \frac{\pi_{\bPsi}(\bphi^{-1}(\btheta))}{\sqrt{\det\bG_{\Psi}(\bphi^{-1}(\btheta))}}= \pi_{\bPhi}^{\bG}(\bphi).
\end{align*}
\end{proof}
This naturally leads to the following corollary
\begin{corollary}
    The Hausdorff MAP is invariant across reparameterization.
    \label{prop:hausdorff-map}
\end{corollary}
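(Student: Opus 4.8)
The plan is to obtain this corollary as an immediate consequence of the preceding theorem, which already establishes that the Hausdorff density is invariant pointwise under a reparameterization. First I would recall the definition of the Hausdorff MAP in the $\bTheta$-coordinates as the maximizer
\begin{equation*}
\btheta_{\text{MAP}} = \arg\max_{\btheta} \pi^{\bG}_{\bTheta}(\btheta),
\end{equation*}
and likewise $\bpsi_{\text{MAP}} = \arg\max_{\bpsi} \pi^{\bG}_{\bPsi}(\bpsi)$ in the $\bPsi$-coordinates, where $\bphi : \bpsi \rightarrow \btheta$ is the reparameterizing diffeomorphism.

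Next I would invoke the theorem directly, which supplies the pointwise identity $\pi^{\bG}_{\bTheta}(\btheta) = \pi^{\bG}_{\bPsi}(\bphi^{-1}(\btheta))$ valid for every $\btheta$. Performing the substitution $\btheta = \bphi(\bpsi)$ shows that the $\bTheta$-objective evaluated at $\bphi(\bpsi)$ equals $\pi^{\bG}_{\bPsi}(\bpsi)$, so the two optimization problems carry identical objective values along the bijection $\bphi$.

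The concluding step relies on the fact that $\bphi$ is a diffeomorphism, hence in particular a bijection: composing an objective with a bijection transports maximizers covariantly, so $\btheta_{\text{MAP}} = \bphi(\bpsi_{\text{MAP}})$. This is exactly the assertion that the Hausdorff MAP designates the same geometric point on the manifold irrespective of the chart, which is what invariance across reparameterization means.

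The main subtlety here is conceptual rather than a genuine technical obstacle: I would take care to state clearly that the MAP is not an invariant tuple of numbers but a single point whose coordinate representation transforms through $\bphi$, so that "invariance" is equivariance of the argmax under the chart change. I would also remark on the edge case of non-unique maximizers, where the claim should be read as the equality of argmax sets related through $\bphi$; this follows from the same bijectivity argument with no additional work.
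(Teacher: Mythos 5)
Your proof is correct and takes essentially the same route as the paper: the paper derives the corollary as an immediate consequence of the preceding invariance theorem for the Hausdorff density (it simply states that the theorem ``naturally leads to'' the corollary, citing Kristiadi et al.\ for the full argument), which is exactly the pointwise-identity-plus-bijection reasoning you spell out. Your explicit treatment of the equivariance interpretation and the non-unique-maximizer edge case is a sound elaboration of what the paper leaves implicit.
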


The above result, along with the proof, can also be found in \citet{Kristiadi2023}. The following theorem shows that we can ignore the first order gradients in the Taylor series expansions as in Equation~\eqref{eq:riemlap} of the main paper
\begin{theorem}
Under normal coordinates, the Hausdorff MAP is a critical point under Lebesgue measure.
\end{theorem}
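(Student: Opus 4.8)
The plan is to exploit the fact, established in the immediately preceding theorem, that the Hausdorff density $\pi^{\bG}_{\bTheta}(\btheta) = \pi_{\bTheta}(\btheta)/\sqrt{\det \bG_{\bTheta}(\btheta)}$ is an invariant scalar on the manifold, so that its maximizer --- the Hausdorff MAP $\hat{\btheta}$ --- is a genuine critical point of this scalar in whatever chart we work in, and then to read off what that single condition says specifically in normal coordinates centered at $\hat{\btheta}$.

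First I would take logarithms and write $\log \pi^{\bG}_{\bTheta}(\btheta) = \log \pi_{\bTheta}(\btheta) - \tfrac{1}{2}\log\det \bG_{\bTheta}(\btheta)$. Since $\hat{\btheta}$ maximizes this invariant function, its partial derivatives vanish there in any coordinate system; choosing normal coordinates centered at $\hat{\btheta}$ and differentiating, this yields for each $k$
$$\partial_k \log \pi_{\bTheta}(\hat{\btheta}) = \tfrac{1}{2}\,\partial_k \log \det \bG_{\bTheta}(\hat{\btheta}).$$
Next I would evaluate the right-hand side using Jacobi's formula $\partial_k \log\det \bG_{\bTheta} = G^{ij}\partial_k G_{ij}$, and invoke the defining property of normal coordinates at $\hat{\btheta}$: the metric coefficients are stationary there, so $G_{ij}(\hat{\btheta}) = \delta_{ij}$ and $\partial_k G_{ij}(\hat{\btheta}) = 0$ for all $i,j,k$. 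Substituting the vanishing first derivatives gives $\partial_k \log\det \bG_{\bTheta}(\hat{\btheta}) = 0$, hence $\partial_k \log \pi_{\bTheta}(\hat{\btheta}) = 0$ for every $k$. Since $\pi_{\bTheta}$ is positive this is equivalent to $\nabla \pi_{\bTheta}(\hat{\btheta}) = \bzero$, i.e. $\hat{\btheta}$ is a critical point of the Lebesgue density in these coordinates, which is the claim.

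The only point requiring genuine care is the logical passage from \emph{invariant maximizer} to \emph{coordinate critical point}: one must be sure that the Hausdorff density is the scalar object whose critical points are chart-independent (guaranteed by the preceding invariance theorem), whereas the ordinary Lebesgue density is not, so that the correction term $\tfrac{1}{2}\partial_k\log\det\bG_{\bTheta}$ is precisely what separates the two notions and is the quantity killed by the normal-coordinate relation. Once this is pinned down, the remaining work is the one-line Jacobi-formula computation together with $\partial_k G_{ij}(\hat{\btheta}) = 0$, and the conclusion is immediate. This is exactly what licenses dropping the first-order term in the Taylor expansion \eqref{eq:riemlap} when velocities are sampled with the Fisher metric, so that the second-order expansion at the Hausdorff MAP genuinely corresponds to a Gaussian.
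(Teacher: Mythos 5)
Your proof is correct and follows essentially the same route as the paper's: both rest on the invariance of the Hausdorff density/MAP under reparametrization, the vanishing of the first derivatives of the metric in normal coordinates, and Jacobi's formula for the derivative of the determinant. The only cosmetic difference is that you work additively with $\log \pi^{\bG} = \log\pi - \tfrac{1}{2}\log\det\bG$ (valid since $\pi>0$ near the MAP and $\det\bG>0$), whereas the paper applies the product rule directly to $\pi = \pi^{\bG}\sqrt{\det\bG}$.
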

\begin{proof}
It is a standard result in Riemannian geometry that for a Riemannian manifold, at each tangent space there exists a normal coordinate \citep{Lee2018}, such that the components of the metric at the point form an identity matrix, and all first partial derivatives of the metric vanishes at the point \citep[see Proposition 5.24 from][]{Lee2018}. Therefore, consider such a transformation to the normal coordinate $\bphi: \bpsi\rightarrow\btheta$. $\hat{\btheta}=\bphi(\hat{\bpsi})$ is the Hausdorff MAP due to Theorem~\ref{prop:hausdorff-map}. 
Recall the definition of Hausdorff MAP; it is clear that $\pdv{\pi^{\bG}(\hat{\btheta})}{\theta_i} = 0$.

Moreover, since the first partial derivatives of the metric vanishes under a normal coordinate, recall $\partial(\det(\bX))=\det(\bX)\Tr(\bX^{-1}\partial\bX)$ \citep{Peterson2012}, we have $\pdv{\sqrt{\det\bG(\hat{\btheta})}}{\theta_i} = 0.$
Therefore,
\begin{equation*}
\pdv{\pi(\hat{\btheta})}{\theta_i} = \pdv{\pi^{\bG}(\hat{\btheta})\sqrt{\det \bG(\hat{\btheta})}}{\theta_i}= \pdv{\pi^{\bG}(\hat{\btheta})}{\theta_i}\sqrt{\det \bG(\hat{\btheta})}+\pi^{\bG}(\hat{\btheta})\pdv{\sqrt{\det \bG(\hat{\btheta})}}{\theta_i}=0.
\end{equation*}
\end{proof}
Under an invariant prior or ignoring the effect of prior, the Fisher metric transforms as a Riemannian metric, and \ourabbrv \ with Hausdorff MAP and Fisher precision can be interpreted as forming the approximate Gaussian with Fisher metric in place of the negative Hessian at $\hat{\btheta}$ under normal coordinates and transforming the approximation back to the current approximation.

The following theorem justifies using the Fisher metric as the precision of the Gaussian approximation from a differential geometry viewpoint; using the metric to form the precision has been explored before, e.g. in \citet{Mathieu2019}.
\begin{theorem}
When the precision of a Gaussian distribution is a Riemannian metric, the samples from the Gaussian follow the transformation rule of the tangent vectors.
\end{theorem}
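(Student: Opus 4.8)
The plan is to first pin down what the statement is asserting and then reduce it to the elementary fact that a linear image of a Gaussian is Gaussian. Write the centered approximation with the metric as precision in two charts related by a reparametrization $\bphi : \bpsi \mapsto \btheta$ with Jacobian $\bJ$, so that in the $\bpsi$-chart a draw is $\bbv_{\bPsi} \sim \N(\bzero, \bG_{\bPsi}^{-1})$ and in the $\btheta$-chart it is $\bbv_{\bTheta} \sim \N(\bzero, \bG_{\bTheta}^{-1})$. The claim that ``the samples follow the transformation rule of the tangent vectors'' means precisely that feeding a $\bpsi$-draw through the contravariant rule $\bbv_{\bTheta} = \bJ\bbv_{\bPsi}$ (stated earlier in this section) yields a sample whose law is exactly the $\btheta$-chart Gaussian above. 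So the target is to verify this self-consistency.

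First I would recall that for any invertible linear map $\bJ$ the pushforward of a centered Gaussian is Gaussian, $\bJ\bbv_{\bPsi} \sim \N(\bzero, \bJ\bG_{\bPsi}^{-1}\bJ^{\top})$. Next I would invoke the metric transformation rule already recorded in this section, namely $\bG_{\bTheta} = \bJ^{-\top}\bG_{\bPsi}\bJ^{-1}$, and simply take inverses:
\begin{equation*}
\bG_{\bTheta}^{-1} = \left(\bJ^{-\top}\bG_{\bPsi}\bJ^{-1}\right)^{-1} = \bJ\,\bG_{\bPsi}^{-1}\,\bJ^{\top}.
\end{equation*}
Substituting this into the pushforward covariance gives $\bJ\bbv_{\bPsi} \sim \N(\bzero, \bG_{\bTheta}^{-1})$, which is identical to the law of $\bbv_{\bTheta}$. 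Hence applying the tangent-vector rule to a draw in one chart reproduces the correct Gaussian in the other, establishing the claim.

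There is essentially no hard computational step here; the only point requiring care is conceptual, namely keeping the roles of the metric and its inverse straight. The metric $\bG$ is a $(0,2)$ tensor and transforms covariantly, but the covariance $\bG^{-1}$ used to generate the samples transforms contravariantly, exactly like an outer product of tangent vectors $\bbv\bbv^{\top}$. It is precisely this inversion that makes the draws transform as tangent vectors (via $\bbv_{\bTheta} = \bJ\bbv_{\bPsi}$) rather than as covectors (which would follow $\bJ^{-\top}$), so the main obstacle is simply articulating this distinction clearly rather than any technical difficulty.
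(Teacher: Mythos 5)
Your proposal is correct and follows essentially the same route as the paper: both arguments hinge on the single identity $\bG^{-1}_{\bTheta} = \bJ\,\bG^{-1}_{\bPsi}\,\bJ^{\top}$ obtained by inverting the metric transformation rule. The only cosmetic difference is that the paper concludes via a factorization $\bG^{-1}_{\bTheta} = (\bJ\bL_{\bPsi})(\bJ\bL_{\bPsi})^{\top}$, so that samples $\bL_{\bTheta}\bz = \bJ\bL_{\bPsi}\bz$ are identified pathwise with $\bJ\bbv_{\bPsi}$, whereas you argue at the level of laws by matching the pushforward covariance --- an equivalent statement for the claim at hand.
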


\begin{proof}
The transformation of the inverse of the Riemannian metric follows
\begin{equation*}
\bG^{-1}_{\bTheta} = \bJ \bG^{-1}_{\bPsi} \bJ^{\top} = (\bJ\bL_{\bPsi})(\bJ\bL_{\bPsi})^{\top},
\end{equation*}
where $\bL$ indicates the decomposition of the inverse of the metric. We therefore have
\begin{equation*}
\bL_{\bTheta} = \bJ\bL_{\bPsi},
\end{equation*}
and the samples from the resulting Gaussian distribution transform in the same way as vector components.
\end{proof}

\section{COVERAGE OF RLA SAMPLES}
\label{sec:coverage}

An interesting insight is that, under mild assumptions, we can obtain samples over the entire $\Real^{D}$ space for any metric and any starting point.
\begin{theorem}
    For target distributions of unconstrained parameters, if the induced Riemannian manifold $(\bTheta, g)$ is connected, there always exists one length-minimizing geodesic starting from the MAP and passing through an arbitrary point $\btheta$ on the induced manifold.
\label{existence}
\end{theorem}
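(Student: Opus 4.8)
The plan is to recognize this statement as an instance of the Hopf--Rinow theorem and to reduce the claim to establishing that the metric space induced by $g$ is complete. Since the parameters are unconstrained, topologically $\bTheta \cong \Real^D$, which is connected, so the connectedness hypothesis is essentially automatic; what remains is to produce, between the MAP $\hat{\btheta}$ and an arbitrary point $\btheta$, a globally length-minimizing geodesic. The Hopf--Rinow theorem guarantees exactly this once we know the Riemannian manifold is complete (equivalently, geodesically complete), so the whole argument hinges on completeness.

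First I would make explicit the \emph{mild assumption} that does the work: the metric tensor is uniformly bounded below, $\bG(\btheta) \succeq c\,\bI$ for some constant $c > 0$ and all $\btheta$. This holds for the metrics used in the paper --- the Monge metric satisfies $\bG = \bI + \nabla \ell_{\by}(\btheta) \nabla \ell_{\by}(\btheta)^\top \succeq \bI$, and the Fisher metric with a Gaussian prior inherits the lower bound $\bS^{-1} \succ 0$ from the prior term. From this bound, any curve has Riemannian length at least $\sqrt{c}$ times its Euclidean length, so the induced geodesic distance dominates the Euclidean one, $d_g(p, q) \geq \sqrt{c}\,\norm{p - q}$ for all $p, q$.

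Next I would use this domination to prove that $(\Real^D, d_g)$ is a complete metric space. Given a $d_g$-Cauchy sequence, the inequality above makes it Euclidean-Cauchy, hence convergent in $\Real^D$ to some point $p$. Because $g$ is smooth and positive-definite, on a compact Euclidean neighborhood of $p$ the metric is bounded above and below by positive multiples of the identity, so $d_g$ and the Euclidean distance are locally equivalent there; Euclidean convergence to $p$ therefore upgrades to $d_g$-convergence to $p$, establishing completeness. Applying Hopf--Rinow then yields geodesic completeness together with the existence of a length-minimizing geodesic joining any two points, and specializing one endpoint to $\hat{\btheta}$ gives the claim.

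The main obstacle is the completeness step rather than the appeal to Hopf--Rinow: connectedness alone is not sufficient, and without a uniform lower bound on $\bG$ geodesics could in principle escape to infinity in finite time or the metric space could fail to be complete. I would therefore state the lower-bound assumption as the precise content of ``mild assumptions'' and verify it for each metric actually used, and I would handle the upgrade from Euclidean to $d_g$ convergence via \emph{local} comparability rather than assuming a global two-sided bound --- the latter genuinely fails here, since both the Monge and Fisher metrics can grow without bound as $\norm{\btheta}$ increases.
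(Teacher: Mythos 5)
Your proof is correct, and it is in fact more careful than the paper's own. The paper's proof is two sentences: it asserts that under unconstrained parametrization the geodesics emanating from a point are defined on the entire tangent space (i.e., geodesic completeness at the MAP), and then invokes Lemma 6.18 of Lee (2018) --- the ``one-point'' half of Hopf--Rinow --- to obtain a minimizing geodesic from the MAP to any other point. The step you identified as the crux, completeness, is precisely the step the paper asserts without justification: an unconstrained parameter space is only topologically $\Real^D$, and a smooth positive-definite metric on $\Real^D$ can perfectly well be geodesically incomplete (pull back the metric of a bounded ball under a diffeomorphism), so unconstrainedness plus connectedness alone do not yield the claim. Your route differs in making the ``mild assumptions'' explicit --- a uniform lower bound $\bG(\btheta) \succeq c\,\bI$, which you correctly verify for both the Monge metric and the Fisher metric with Gaussian prior --- then deducing metric completeness from domination of the Euclidean distance, and finally applying the full Hopf--Rinow theorem. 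What your approach buys is rigor and a checkable hypothesis that actually closes the gap; what the paper's buys is brevity, plus a formally weaker invocation (geodesic completeness is needed only at the single point $\hat{\btheta}$ rather than metric completeness of the whole manifold). One small wording caveat on your completeness step: since $d_g$ is a global infimum over paths, the upgrade from Euclidean to $d_g$-convergence is cleanest via the straight-segment upper bound $d_g(\bx_n, \bp) \leq \sqrt{C}\,\norm{\bx_n - \bp}$, where $C$ bounds the eigenvalues of $\bG$ on a compact ball containing the tail of the sequence, rather than via a claimed two-sided ``local equivalence'' of the distances; your argument is correct once read this way.
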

\begin{proof}
Under unconstrained parametrization, for a point $p$ on the connected Riemannian manifold $\bTheta$, the geodesics are defined on the entire tangent space. Therefore, using Lemma 6.18 from \citep{Lee2018}, for any two points $\bp, \bq \in \bTheta$, there exists one length-minimizing geodesic.
\end{proof}

\section{BIAS OF \bergaminabbrv}
\label{sec:sup-bias-rlab}

Consider a $D$ dimensional Gaussian distribution with mean $\bzero$ and covariance $\bI_{D}$. Isotropic multivariate Gaussian distributions are known to be rotation invariant \citep{Vershynin2018}, and the distribution is the same from all directions at the origin. Moreover, the distances from the samples to the origin follow a $\chi$ distribution by definition. Also hinted by \citet{Vershynin2018}, it is clear that we can represent samples from an isotropic multivariate Gaussian distribution of dimension $D$ such that the directions are sampled uniformly at random from a sphere, and the distances to the origin sampled from a $\chi$ distribution with degrees of freedom $D$. In this case, for \bergaminabbrv, an algorithm based on the Riemannian metric using the gradient information of the distribution, is also rotation invariant, and the geodesics travel in straight lines. As such, we can consider an arbitrary direction without losing generality.

Specifically, we consider the following case
\begin{align*}
\bbv = \begin{bmatrix}
v \\
\vdots \\
v
\end{bmatrix}_{D},
\bx = \begin{bmatrix}
x \\
\vdots \\
x
\end{bmatrix}_{D},
\nabla\ell = \begin{bmatrix}
-x \\
\vdots \\
-x
\end{bmatrix}_{D},
\bbv_{0} = \begin{bmatrix}
v_{0} \\
\vdots \\
v_{0}
\end{bmatrix}_{D},
\end{align*}
where a sample with initial velocity $\bbv_{0}$ reaches position $\bx$, at which point the gradient is given by $\nabla\ell$ and the sample has velocity $\bbv$, with every occurrence of the same letter denoting the same value, and $v$, $x$ and $v_{0}$ being non-negative. It is clear that all particles with $\bbv_{0}$ of the above form reach $\bx(t)$ at some time. In the following analysis, we assume that all the samples reach the point $\bx$ within time $1$ for the value of $v_{0}$ considered for simplicity; this generally holds for sufficiently small $x$. Recall that $\norm{\bbv(t)}^{2} + \langle\bbv(t),\nabla\ell\rangle^{2} = \norm{\bbv_{0}}^{2}$ due to the property of the geodesic,
we have
\begin{align*}
Dv^{2} + \left(Dvx\right)^{2} &= Dv_{0}^{2},\\
v^{2} + Dx^{2}v^{2} &= v_{0}^{2},\\
v^{2} &= \frac{v_{0}^{2}}{1+Dx^{2}}.
\end{align*}
As such, for fixed $v_{0}$ and $x$, with \bergaminabbrv, as $D$ increases, $v$ decreases. However, under the Euclidean metric, given $v_{0}$ and $x$, $v$ is constant for all $D$. This demonstrates that, for a fixed $v_{0}$, the final distance traveled by the sample decreases as $D$ increases, and, in a sense, the resulting algorithm leads to additional bias as $D$ increases.

Additionally, we can consider the expected value of $v$ at a given position $x$ under $v_{0}$. We can define the distribution of $v_{0}$ such that it induces the correct distribution of distances to the origin, in which case $\sqrt{D}v_{0}$ follows a $\chi$ distribution with degrees of freedom $D$. We have
\begin{equation*}
\E\left[v\right] = \E\left[\sqrt{\frac{v_{0}^{2}}{1+Dx^{2}}}\right] = \E\left[\frac{v_{0}}{\sqrt{1+Dx^{2}}}\right].
\end{equation*}
Consider a positive $x$. It is clear that $\frac{1}{\sqrt{1+Dx^{2}}}$ decreases as $D$ increases. For smaller $D$, the expected velocity as calculated above may increase as $D$ increases, since $\E\left[v_{0}\right]$ itself may increase. However, as $D$ approaches infinity, $\E\left[\sqrt{D}v_{0}\right]$ approaches $\sqrt{D-1}$, $\E\left[v_{0}\right]$ approaches $1$ and $\E\left[v\right]$ approaches $0$.

The above theoretical analysis complements the empirical evidence as shown in Figure~\ref{fig:distances}, which demonstrates that the bias, as measured by the Wasserstein distances from approximate samples to true samples computed for the first dimension, grows continuously as $D$ increases from $1$ to $10$.

\section{GEODESIC ODE}
\label{sec:geodesic-ode}

In this section, we provide some background on the numerical solutions of ODEs, while also demonstrating the formulations of exponential map and logarithmic map as ODE problems. Unless otherwise stated, the technical contents are based on \citet{Atkinson2009}.

Generally, an ODE system can be written in the form of
\begin{equation*}
\dv{\bx(t)}{t} = \bbf(t, \bx(t)),
\end{equation*}
where $\bx$ is an unknown function dependent on time $t$, and $\bbf$ is a function describing the change in $\bx$.

The geodesic equation induces the following ODE system
\begin{equation*}
\dv{[\btheta(t), \bbv(t)]}{t} = [\bbv(t), \ba(t)],
\end{equation*}
where $\ba(t)$ is given by the geodesic equation, such that
\begin{equation*}
a^{k}(t)=-v^{i}(t)v^{j}(t)\Gamma^{k}_{ij}(t).
\end{equation*}
It is a nonlinear second order ODE.

An initial value problem (IVP) solves for $\bx(b)$ given the following conditions
\begin{align*}
\dv{\bx(t)}{t} &= \bbf(t, \bx(t)),\quad t_{0}\leq t\leq b,\\
\bx(t_0) &= \bx_0.
\end{align*}
Probably the simplest solver is Euler's method, which divides the times into a discrete set of nodes $t_{0} < t_{1} < t_{2} < \dots < t_{N} \leq b$, and iteratively solves it as
\begin{equation*}
\bx(n+1) = \bx(n) + h \bbf(t_{n},\bx_{n}).
\end{equation*}
However, this is a naive method, and often leads to large integration errors. In practice, one commonly used solver is Dormand-Prince's $5 (4)$ method \citep{Dormand1980}; this is the default option in SciPy \citep{Virtanen2020}. It generally requires $6$ function evaluations per step.

A general nonlinear two-point boundary value problem (BVP) can be formulated as 
\begin{align*}
\bx''(t) &= \bbf(t,\bx(t), \bx'(t)),\quad a<t<b,\\
\A \begin{bmatrix}
    \bx(a) \\
    \bx'(a)
\end{bmatrix} &+ \B \begin{bmatrix}
    \bx(b) \\
    \bx'(b)
\end{bmatrix} = \begin{bmatrix}
    \bgamma_{1} \\
    \bgamma_{2}
\end{bmatrix},
\end{align*}
where $\A$ and $\B$ are square matrices, and the conditions on the second line are known as the boundary conditions.

The logarithmic map of the geodesic ODE can be formulated as a BVP, by observing that we can make the following choices
\begin{equation*}
\A = \begin{bmatrix}
    \bI & 0 \\
    0 & 0
\end{bmatrix},\; \B = \begin{bmatrix}
    0 & 0 \\
    \bI & 0
\end{bmatrix},\; \bgamma_{1} = \bx_a,\; \bgamma_{2} = \bx_b,
\end{equation*}
and can thus be solved by a BVP solver.

\section{CHRISTOFFEL SYMBOLS BASED ON EXPRESSION OF THE RIEMANNIAN METRIC}
\label{sec:auto-christoffel}

With modern autodiff frameworks, it is possible to directly calculate the Christoffel symbols for a given $\btheta$, given that we have a differentiable expression of the Riemannian metric. As noted by \citet{Song2018}, we only need the quantities in the form of
\begin{equation*}
\Gamma^{k}_{ij}v^{i}v^{j}.
\end{equation*}
We provide a JAX \citep{jax:2018} implementation to obtain the above quantities as below. \footnote{In all presented experiments \textit{jax.numpy.linalg.solve} is used instead of \textit{jax.scipy.linalg.solve}. However, the implementation in \textit{jax.scipy} could make use of the positive definite structure of the matrix, theoretically leading to faster inversions, and we thus present this implementation.}

\begin{lstlisting}
import jax
import jax.numpy as jnp
import jax.scipy as jsp


def christoffel_fn(g, theta, v):
    # Adapted based on ChatGPT
    d_g = jax.jacfwd(g)(theta)

    # Compute the Christoffel symbols
    partial_1 = jnp.einsum("jli,i,j->l", d_g, v, v)
    partial_2 = jnp.einsum("ilj,i,j->l", d_g, v, v)
    partial_3 = jnp.einsum("ijl,i,j->l", d_g, v, v)
    result = jsp.linalg.solve(g(theta), 0.5 * (partial_1 + partial_2 - partial_3),
    assume_a="pos")
    
    return result
\end{lstlisting}

In the above code, \textit{theta} and \textit{v} are the current position and velocity, and \textit{g} is a differentiable function that returns the Riemannian metric given the position. It is possible to analytically derive the Christoffel symbols given the expression of the metric. We observe that using analytical expressions can lead to more efficient and numerically stable results. We show some analytical derivations in Section~\ref{sec:christoffel-models}. However, we observed that for small scale problems, using the numerical results are often fast and accurate enough.

\section{FISHER METRIC AND CHRISTOFFEL SYMBOL FOR SPECIFIC MODELS}
\label{sec:christoffel-models}

\subsection{Logistic regression}

In Section~\ref{sec:blr}, we considered Bayesian logistic regression. Here we present some derivations on the Christoffel symbols of Fisher metric for that problem.

For logistic regression, the partial derivatives of the Fisher metric are given by \citep{Girolami2011}
\begin{equation*}
\pdv{\bG(\btheta)}{\theta_{i}} = \bX^{\top}\bLambda \bV^{i} \bX,
\end{equation*}
where, using $s_{k}$ to denote $s(\bX_{k}\btheta)$, we have $\Lambda_{kk} = s_{k}(1-s_{k})$ and $V^{i}_{kk} = (1-2s_{k})X_{ni}$.

One interesting property is the following \citep{Lan2015}
\begin{equation*}
\pdv{(\bG(\btheta))_{jl}}{\theta_{i}} = \pdv{(\bG(\btheta))_{il}}{\theta_{j}} = \pdv{(\bG(\btheta))_{ij}}{\theta_{l}}.
\end{equation*}

In order to see that, note that
\begin{align*}
\pdv{(\bG(\btheta))_{jl}}{\theta_{i}} &= (\bX^{\top}\bLambda\bV^{i}\bX)_{jl}= \sum_{k}X_{kj}(\bLambda\bV^{i})_{kk}X_{kl}= \sum_{k}X_{kj}s_{k}(1-s_{k})(1-2s_{k})X_{ki}X_{kl}\\
&= \pdv{(\bG(\btheta))_{il}}{\theta_{j}}= \pdv{(\bG(\btheta))_{ij}}{\theta_{l}}.
\end{align*}

Therefore, the resulting Christoffel symbols have a relatively simple form, given by
\begin{equation*}
\Gamma^{m}_{ij} = \frac{1}{2}G^{ml}\left(\partial_{i}G_{jl} + \partial_{j}G_{il} - \partial_{l}G_{ij} \right) = \frac{1}{2}G^{ml}\partial_{i}G_{jl}.
\end{equation*}

\subsection{Neural Networks (NN)}
\label{sec:nn-pullback}

Here we expand the discussions on FIM for NNs in Section~\ref{sec:comp-fisher} of the main paper.

Recall the probabilistic model $\pi(\by|\bphi)$ in its basic form. Given an NN that maps $\mathcal{X}$ $\overset{f_{\btheta}}{\mapsto} \bPhi$ where $\bPhi$ is a $P$-dimensional parameter space with given Fisher metric, we can define a Fisher metric on the parameter space of the NN as follows. Using the chain rule for the Hessian matrix of $\ell_{\by}(\btheta)$ w.r.t $\btheta$, observe that the extra expression involving the score function will be zero in expectation \citep[Definition 2.78, p. 111]{schervish:2012}. Therefore, the metric tensor w.r.t $\btheta$ reads 
\begin{align*}
    \E(-\nabla^2 \ell_{\bY}(\btheta)_{i, j})
    & = \E \left( -\sum_n^N \partial^2_{i, j} \log \pi \big(Y_n| \bphi_n ) \right) \nonumber \\
    & = \sum_n^N \sum_p^P \sum_q^P \E \left( - \partial^2_{q, p} \log \pi \big(Y_n| \bphi_n) \right) \partial_i(\bphi_n)_p\partial_j(\bphi_n)_q.\nonumber \\
    & = \sum_n^N \sum_p^P \sum_q^P \bG_{q, p}(\bphi_n)  \partial_i(\bphi_n)_p\partial_j(\bphi_n)_q \nonumber \\
    & = \sum_n^N \sum_p^P \sum_q^P \bG_{q, p}(f_{\btheta}(\bx_n))  \partial_i(f_{\btheta}(\bx_n))_p\partial_j(f_{\btheta}(\bx_n))_q,
\end{align*}
where the expectation taken w.r.t $\bY$ is over the Hessian of $\ell_{\bY}(\btheta)$ on $\bphi$ (in between 2nd and 3rd step) and that comes for granted once the Fisher metric is known in the basic form of the probabilistic model $\pi(\by|\bphi)$. 
In the last passage we see that $\partial_i(f_{\btheta}(\bx_n))_p$ is the Euclidean derivative with respect to $\btheta_i$ of the $p^{th}$ output of the neural network $f$ evaluated at the $n^{th}$ input. Then using matrix notations we write, 
\begin{align*}
    \bG(\btheta) = \sum_{n}^N \boldsymbol{J}_n^\top \bG_{\bPhi}(f_{\btheta}(\bx_n)) \boldsymbol{J}_n,
\end{align*}
where $\boldsymbol{J}_n$ $=$ $[\nabla (\boldsymbol{\bphi}_n)_1 \cdots \nabla (\boldsymbol{\bphi}_n)_ P]^\top$ denotes the $P \times D$ Jacobian matrix of the NN at the $n^{th}$ input. 

Now, it is clear from the above form that its structure resembles that of a pullback metric in Riemannian manifolds. To see this fix the inputs (or covariates) in the input space and look instead to the output of the NN as a map $\bTheta \overset{h_n}{\rightarrow} \bPhi$ with $h_n(\btheta) = f_{\btheta}(\bx_n) \ \forall n$. Once we have endowed a metric $g_{\bPhi}$ on $\bPhi$ and defined a map for every $n$, the pullback metric on $\bTheta$ given all the inputs can be formally written as the sum of all input information related to $\btheta$. That is, 
\begin{align*}
    g_{\bTheta}(\bbu, \bbv) = \sum_n^N g_{\bPhi} \big( \mathrm{d}h_n(\btheta)[\bbu], \mathrm{d}h_n(\btheta)[\bbv] \big),
\end{align*}
where $\mathrm{d}h_n(\btheta)[\cdot] : T_{\btheta}\bTheta \rightarrow T_{\bphi_n}\bPhi$ is the differential at a point $\btheta \ \forall n$. Since $\bphi_n = f_{\btheta}(\bx_n)$ can be seen as a function of $\btheta$, the differential takes the form $\mathrm{d}h_n(\btheta)[\bbv] = \boldsymbol{J}_n \bbv$. Plugging this into the expression above and using that the matrix of coefficients of the metric $g_{\bPhi}$ is $\bG_{\bphi}$, we obtain the metric tensor above; similar observation can be found in e.g. \citet{Grosse2022}. Also note that because the function $h_n$ may not be one-to-one for all $n$, $\bx_n \in \mathcal{X}$, the metric tensor $\bG(\btheta)$ may not satisfy the definition of a pullback \citep[see][]{doCarmo1992}. 

\section{EXPERIMENTAL DETAILS}
\label{sec:expdetails}

In this section, we describe the computation environment, some implementation details, the employed methods to find the MAP estimate, to obtain the NUTS samples, to integrate the ODEs and to calculate the Wasserstein distances, and describe the basic setups for several experiments.

\subsection{Computation environment}

Apart from some preliminary ones, all experiments are carried out on the computer cluster using CPUs with type \textit{Intel(R) Xeon(R) Gold 6230 CPU @ 2.10GHz}. When we benchmark the running times of the algorithms, we always use two cores for each task. The detailed configurations differ based on the nature of the jobs.

\subsection{Implementation details}

Sacred \citep{Greff2017} is used to manage the experiments. Code for experiments other than neural network ones are implemented mainly using JAX \citep{jax:2018}, benefiting from fast computations of gradients and other quantities and easy vectorizations. Code for neural network experiments are written mainly in PyTorch \citep{pyt:2019}, using \textit{torch.func} to obtain the gradients and hessian vector products, etc, because we build on the existing library provided by \citet{Daxberger2021}. We always use the default data types of the libraries (\textit{float64} in NumPy \citep{Harris2020} and \textit{float32} in PyTorch and JAX) when initializing the variables. Note that in experiments other than neural network ones, the random numbers are generated by NumPy; In neural network experiments, they are generated by PyTorch.

\citet{Bergamin2023} formed the metric as identity plus the outer product of gradients of negative log-posterior, while \citet{Hartmann2022} formed the metric as identity plus a scaled version of the outer product of gradients of log-posterior. We note that, when the scaling of Monge metric is $1$, these are strictly mathematically equivalent formulations. In experiments other than neural network ones, similar to Monge metric \citep{Hartmann2022}, we form the metric as identity plus the outer product of gradients of log-posterior; for neural network experiments, we form the metric similar to \citet{Bergamin2023}.

We used SymPy \citep{Meurer2017} to verify some derivations, and ChatGPT \citep{ChatGPT2023} as an assistance for writing code, scientific discussions etc.

\subsection{Finding the MAP estimate}

For experiments other than neural network experiments, unless there is an analytical solution of the MAP estimate, we find the MAP estimate by collecting $20$ runs of BFGS optimization with random initialization using the implementation from \citet{Virtanen2020} which yield negative log-posterior values other than plus infinity or NaN, with maximum number of iterations $1e6$. We choose the run that yields the smallest negative log-posterior with \textit{numpy.argmin}.

For neural network experiments, we find the MAP estimate using $lr=1e-2$, $weight\_decay=1e-5$ for $20000$ epochs with the Adam \citep{Kingma2015} optimizer. The prior precision and noise std are optimized post hoc following standard procedure from \citet{Daxberger2021}. During the optimization process, we fix the random seeds, in order to have consistent prior precision and noise std for NUTS.

\subsection{NUTS samples}

We use CmdStanPy \citep{Stan2023} to generate the NUTS samples. Following the default options for reference posterior in posteriordb \citep{Magnusson2022}, we run the NUTS sampler using $10$ chains, a thinning of $10$, a warm-up of $10000$ iterations  and $20000$ iterations per chain. We fix the random seeds for reproducibility, generating a total of $20000$ samples.

For sampling from banana distribution, based on preliminary runs, we set \textit{adapt\_delta} to be $0.95$. We otherwise keep the hyper parameters to be their default values.

\subsection{ODE integration}

For the exponential maps, other than neural network experiments, ODE integrations are carried out using Diffrax \citep{Kidger2021}. Inspired by the default settings of the function \textit{solve\_ivp} in SciPy \citep{Virtanen2020}, we use Dormand-Prince's $5 (4)$ method \citep{Dormand1980}. Diffrax reports the number of steps \textit{num\_steps} used by the integrator. In order to make the results approximately comparable to the number of function evaluations \textit{nfev} reported by SciPy, we multiply \textit{num\_steps} by $6$, the number of function evaluations per step of Dormand-Prince's $5 (4)$ method. We use adaptive step size with initial step size chosen by the algorithm, with $rtol=1e-3$ and $atol=1e-6$, matching the default option of SciPy. We set the maximum number of steps allowed to be $4096$ (approximately equivalent to $24576$ function evaluations), following the default configuration of Diffrax. For neural network experiments, following \citet{Bergamin2023}, we use SciPy's \textit{solve\_ivp} and use the default hyper parameters.

For the logarithmic maps, we implement it using analytical gradients and hessian vector products in NumPy.

In all experiments apart from logistic regression (in which we use analytical gradients), we use the gradients and hessian vector products calculated by the autodiff framework. 

We always use analytical expressions for the Fisher metric. For logistic regression and NN experiments, we use analytical expressions of the Christoffel symbols, while in other experiments the Christoffel symbols are solved numerically based on the metric.

When reporting the numbers of function evaluations, we report the numbers of function evaluations required to solve the exponential maps. We observe that in certain scenarios the number of integration steps can exceed the upper limit. In which case, we treat the number of steps for those samples exceeding the limit as the limit $4096$ while calculating $T$, and calculate the Wasserstein distance of the samples not exceeding the limit.

\subsection{Calculating the Wasserstein distance}

We use Python Optimal Transport \citep{Flamary2021} to calculate the Wasserstein $1$-distance, which can be formulated as \citep{Flamary2021}
\begin{align*}
\min_{\bgamma}\langle \bgamma ,\bM \rangle_{F}, \\
s.t.\quad \bgamma\bones = \ba, \\
\bgamma^{\top}\bones = \bb, \\
\bgamma \geq 0,
\end{align*}
where $\bgamma$ is the transportation plan, $\bM$ is the distance matrix induced by the Euclidean metric for the two sets of samples, and $\ba$ and $\bb$ are uniform numbers summing up to $1$ with the lengths the same as the numbers of respective samples. The maximum number of iterations is set to be $1e10$.

Wasserstein distance is a proper distance metric, and it becomes zero when the two distributions are the same.

\subsection{Squiggle}

The experiment presented in Section~\ref{sec:theoretical-basis} was carried out using the following hyper parameters on the squiggle distribution
\begin{equation*}
a = 1.5,\; \bS = \begin{bmatrix}
    5.0 & 0.0 \\
    0.0 & 0.05
\end{bmatrix}.
\end{equation*}

\subsection{Banana distribution}

Results for sampling from banana distribution were presented in Section~\ref{sec:banana}. Originally proposed by Luke Bornn and Julien Cornebise as part of the discussions on \citet{Girolami2011}, the Fisher metric for banana distribution can also be found in \citet{Brofos2021}, and is given by
\begin{equation*}
\bG(\theta_1, \theta_2) = \begin{bmatrix}
    \frac{1}{\sigma_{\btheta}^{2}}+\frac{N}{\sigma_{\by}^{2}} && \frac{2 N \theta_2}{\sigma_{\by}^{2}} \\
    \frac{2N \theta_2}{\sigma_{\by}^{2}} && \frac{1}{\sigma_{\btheta}^2}+\frac{4 N \theta_2^2}{\sigma_{\by}^{2}}
\end{bmatrix}.
\end{equation*}

\subsection{Bayesian logistic regression}

We report the details on the $5$ logistic regression datasets as used in Section~\ref{sec:blr}, covering the dimensionality of parameter of interest $D$ and the number of data points $N$.

\textit{Ripley} \citep{Ripley1994,Ripley1996}, abbreviated as \textit{Ripl}: $D=3$, $N=250$.

\textit{Pima} \citep{Smith1988}: $D=8$, $N=532$.

\textit{Heart} \citep{Feng1992}, abbreviated as \textit{Hear}: $D=14$, $N=270$.

\textit{Australian} \citep{Feng1992}, abbreviated as \textit{Aust}: $D=15$, $N=690$.

\textit{German} \citep{Feng1992}, abbreviated as \textit{Germ}: $D=25$, $N=1000$.

\subsection{Neural networks}

In Section~\ref{sec:nn-regression}, we performed experiments on NN regression tasks. Similar to \citet{Bergamin2023}, we use the dataset from \citet{Snelson2005}, which has $200$ data points with labels, with the $x$ coordinates distributed between $0$ and $6$.

For \textit{complete} training set, a test set of size $50$ is selected from the $200$ data points with labels with random seed $1$. As mentioned in the main paper, the \textit{gap} training set is constructed by using data points with labels with $x$ coordinates not between $1.5$ and $3.0$.

\section{ADDITIONAL EXPERIMENTAL RESULTS}
\label{sec:extraexperiments}

\subsection{Samples from banana distribution}

In Section~\ref{sec:banana}, we consider the problem of sampling from banana distribution. We reported the numerical qualities of the results in Table~\ref{tbl:banana}, and the different MAP estimates and the resulting approximations using \ourabbrv\ in Figure~\ref{fig:banana-maps}. Here we provide additional plots that show how all other variants work for the same problem.

In the Banana distribution, there are two Euclidean MAPs, with approximately the same log-posterior and approximately symmetric on the two sides of axis. 

The results of starting from one Euclidean MAP and the Hausdorff MAP are shown in Figure~\ref{fig:banana}. We report the results of sampling from one particular Euclidean MAP; starting from the other Euclidean MAP would simply (approximately) mirror the approximation.

\begin{figure}[t]
    \centering
    \begin{tabular}{cccc}
        \includegraphics[width=0.2\textwidth]{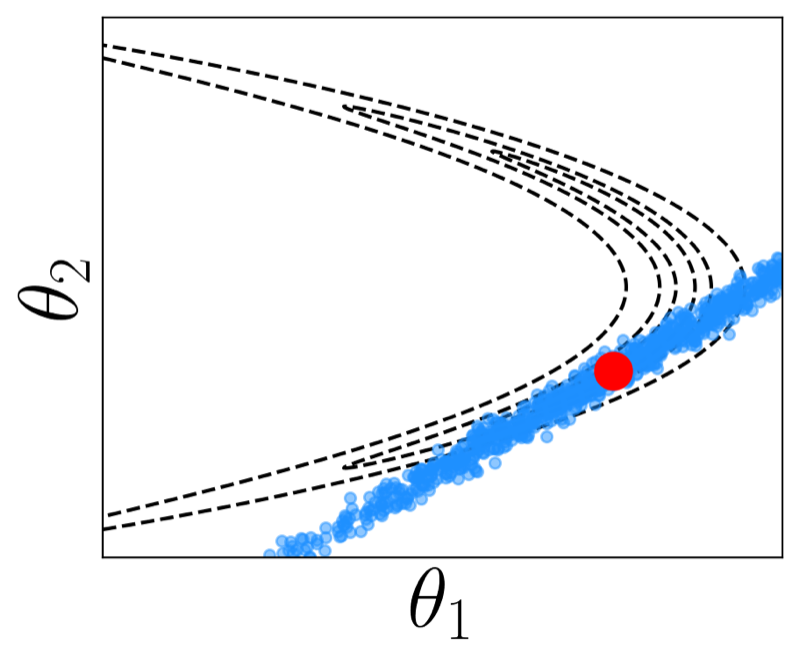} & \includegraphics[width=0.2\textwidth]{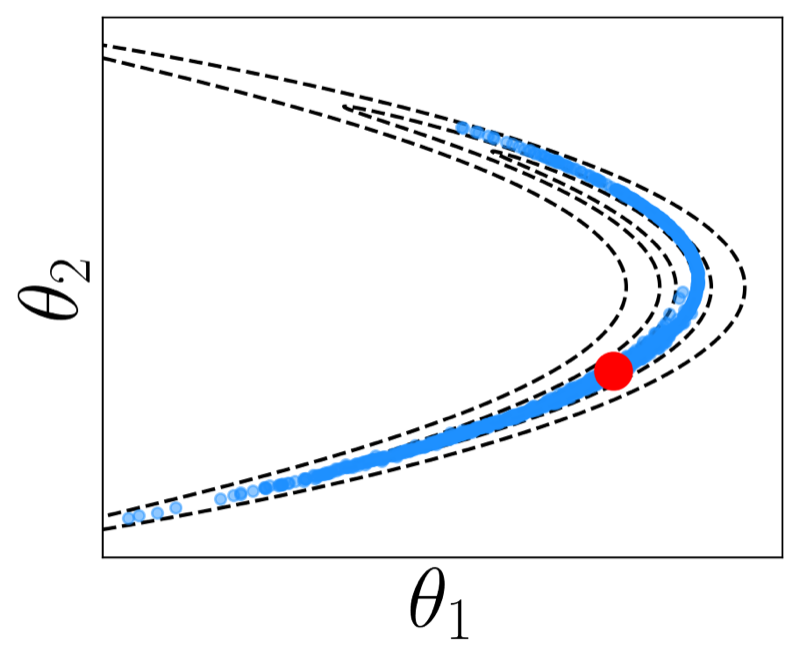} &
        \includegraphics[width=0.2\textwidth]{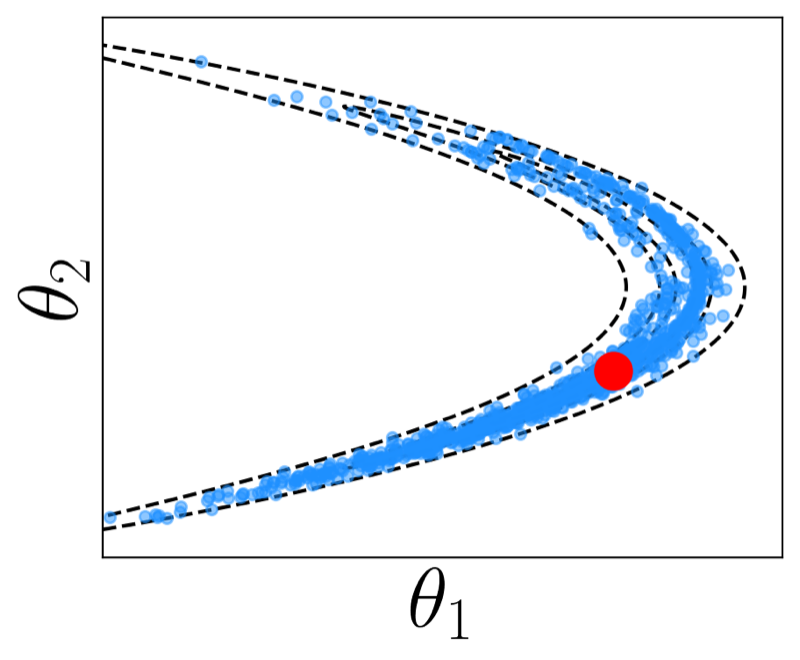} & \includegraphics[width=0.2\textwidth]{figs/h_banana_fisher.png} \\
        \includegraphics[width=0.2\textwidth]{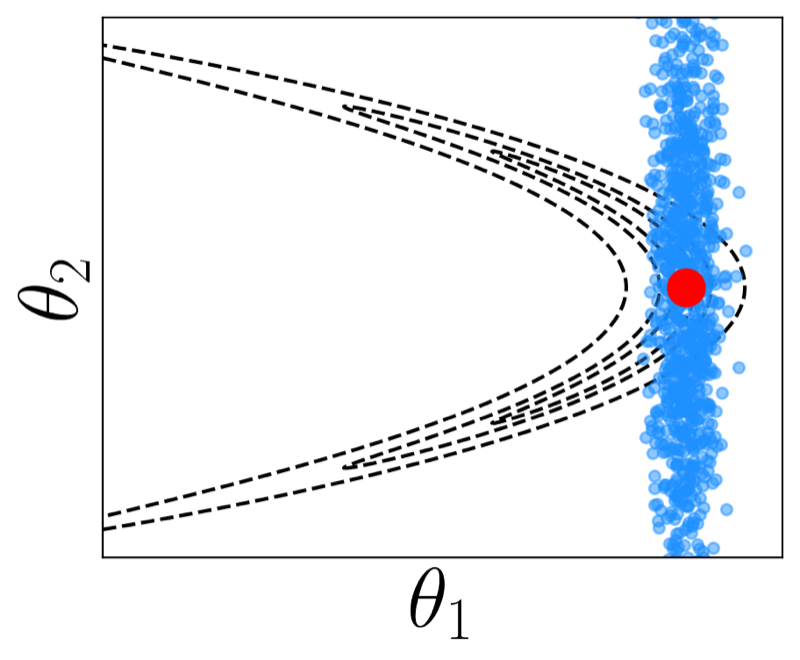} & \includegraphics[width=0.2\textwidth]{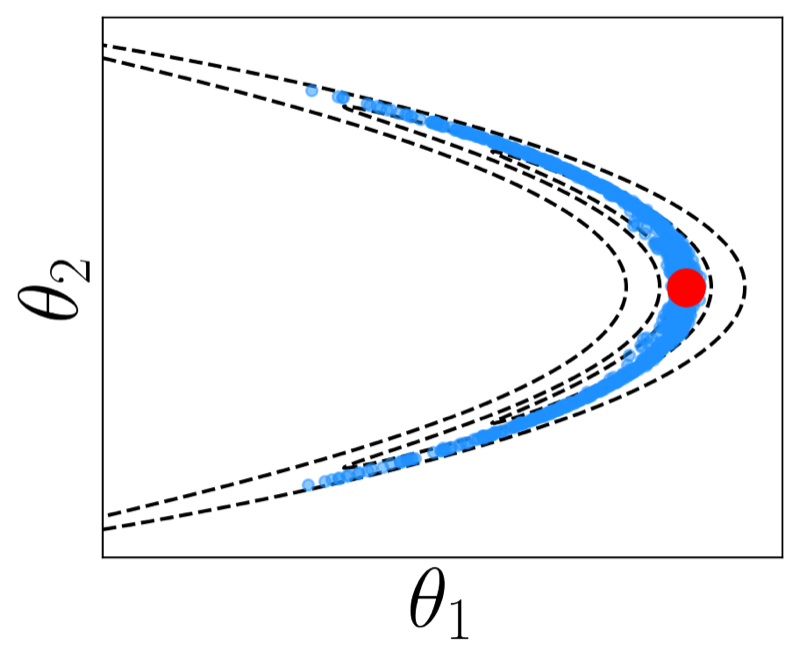} &
        \includegraphics[width=0.2\textwidth]{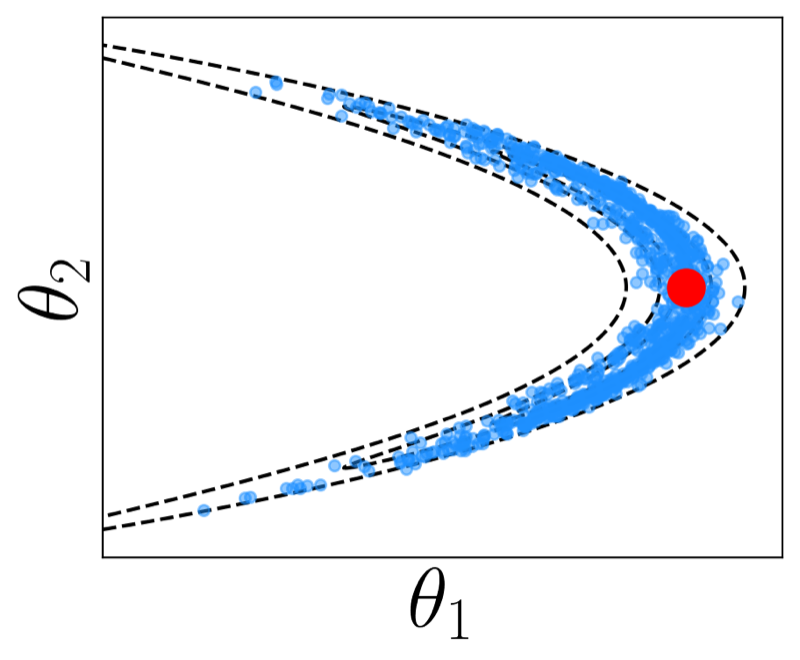} & \includegraphics[width=0.2\textwidth]{figs/f_banana_hausdorff_fisher.png}
    \end{tabular}
    
    \caption{The first row shows approximations starting from one of the Euclidean MAPs, and the second row shows approximations starting from the Hausdorff MAP. Each row from left to right: \euclaabbrv; \bergaminabbrv; \logbergaminabbrv; \ourabbrv}
    \label{fig:banana}
\end{figure}

Visually, \bergaminabbrv, \logbergaminabbrv \ and \ourabbrv \ all adapt to the local curvature. \bergaminabbrv \ is generally narrower, and is slightly in the wrong direction. \logbergaminabbrv \ alleviates these issues similar to \ourabbrv, and for all methods the Hausdorff MAP is clearly better as already indicated by the numerical results in the main paper.

\subsection{Squiggle}

In Section~\ref{sec:theoretical-basis}, \bergaminabbrv\ was demonstrated to struggle sampling from a squiggle distribution. Moreover, in the particular runs shown in Figure~\ref{fig:invariant-squiggle} which generated $10000$ samples respectively, \bergaminabbrv \ took on average $75$ function evaluations, while \ourabbrv \ only took $32$.

We note that its properties can vary. Figure~\ref{fig:difficult-squiggle} shows the results of \bergaminabbrv\ and \ourabbrv \ on another version of squiggle, given by
\begin{equation*}
a = 1.5,\; \bS = \begin{bmatrix}
    10.0 & 0.0 \\
    0.0 & 0.001
\end{bmatrix}.
\end{equation*}
The high density area is clearly narrower. While in this version of squiggle \bergaminabbrv \ seems to yield better results than the version as shown in the main paper, there still exists a non-negligible mismatch. Moreover, the resulting integration problem seems more difficult than using \ourinvabbrv: For this particular run, \ourinvabbrv \ only took $41$ function evaluations on average compared to $300$ for \bergaminabbrv; Moreover, there were at least one sample where \bergaminabbrv \ exceeded the maximum steps, while there were no such issues with \ourinvabbrv.

\begin{figure}[t]
    \centering
    \begin{tabular}{cc}
        \includegraphics[width=0.25\textwidth]{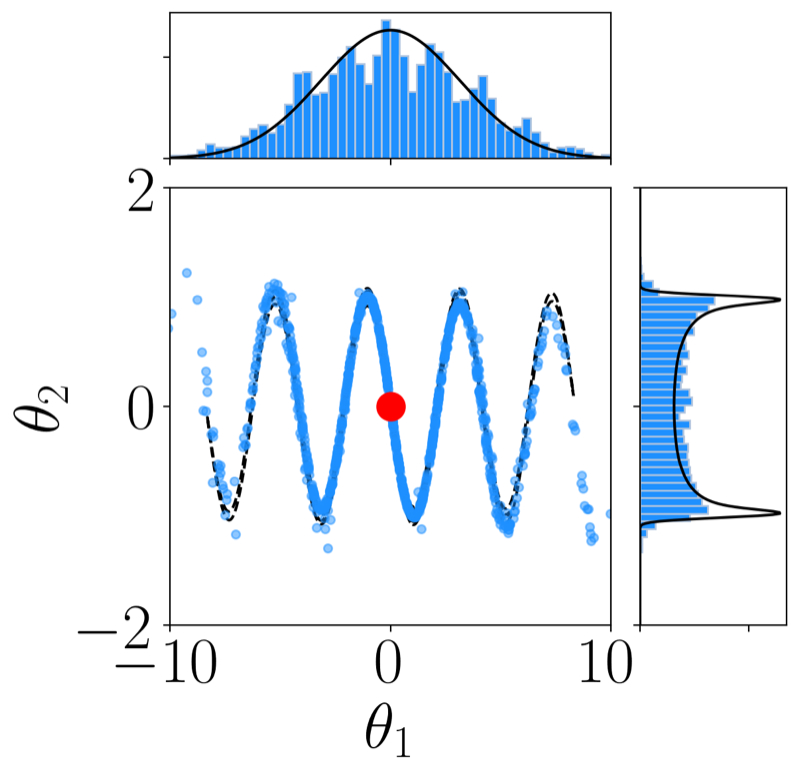} & 
        \includegraphics[width=0.25\textwidth]{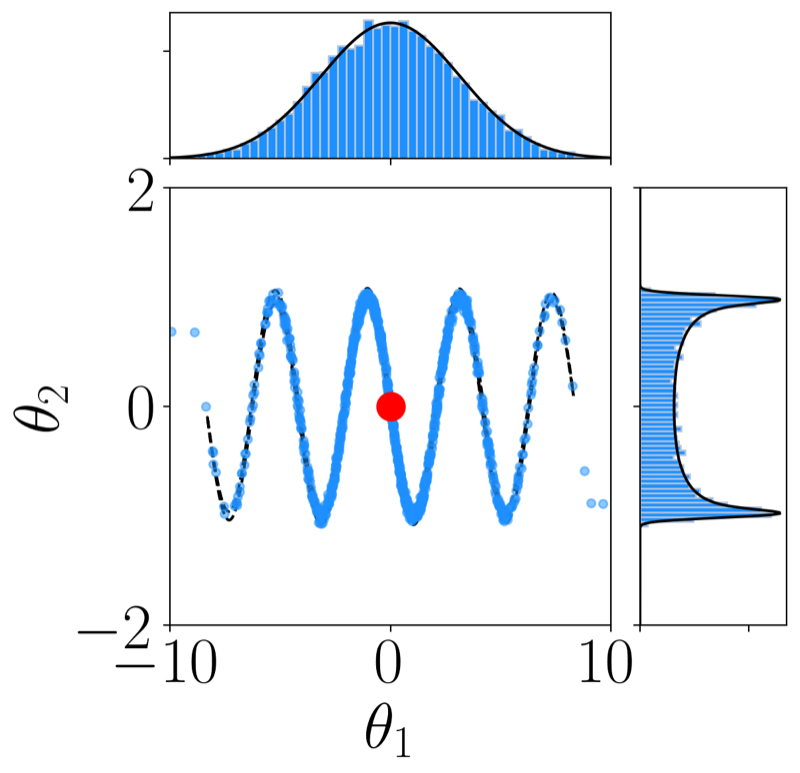}
    \end{tabular}
    \caption{Samples from \bergaminabbrv \ (left) and \ourabbrv \ (right) on another version of squiggle.}
    \label{fig:difficult-squiggle}
\end{figure}

\subsection{Funnel}

Here we demonstrate how the approximations work in one more case of a distribution of complex geometry that is diffeomorphic with a Gaussian. We consider the problem of sampling from the funnel distribution \citep{Neal2003}. It can be formulated as
\begin{equation*}
p(\btheta) = \N(\theta_{D} | 0, \sigma) \prod_{d=1}^{D-1}\N\left(\theta_{d}|0,\exp(\tfrac{1}{2}\theta_{D})\right).
\end{equation*}
Consider the two dimensional funnel. It admits a natural reparameterization, and can be formulated as a transformed version of multivariate Gaussian \citep{StanRef2023}, given by
\begin{align*}
\bpsi &\sim \N(\bzero , \bI),\\
\btheta &= \bphi(\bpsi)= \begin{bmatrix}
    \exp(\sigma \psi_{2} / 2) \psi_{1} \\
    \sigma \psi_{2}
\end{bmatrix}.
\end{align*}

As such, it is possible to obtain a Riemannian metric, following the general procedure in Section~\ref{sec:theoretical-basis} of the main paper. We have
\begin{align*}
\bphi^{-1}(\bzero) &= \bzero,\\
\left(\pdv{\btheta}{\bpsi}\right)^{-1} &= \begin{bmatrix}
    \exp(-\frac{1}{2}\theta_{2}) & -\frac{1}{2}\theta_{1}\exp(-\frac{1}{2}\theta_{2}) \\
    0 & \frac{1}{\sigma}
\end{bmatrix},\\
\bG_{\btheta} &= \left(\pdv{\btheta}{\bpsi}\right)^{-\top}\left(\pdv{\btheta}{\bpsi}\right)^{-1}.
\end{align*}

The samples obtained using \ourinvabbrv\  on a two dimensional funnel with $\sigma=3$ are shown in Figure~\ref{fig:funnel}.
Interestingly, all variants capture the marginal distribution of $\theta_2$ well even though many MCMC methods struggle to reach the narrow funnel. However, \euclaabbrv \ and \bergaminabbrv \  fail to expand in the direction of $\theta_1$ for larger values of $\theta_2$. \logbergaminabbrv \ alleviates the issue but is still not perfect, but \ourabbrv \ is exact.

\begin{figure}[t]
    \centering
    \begin{tabular}{cccc}
        \includegraphics[width=0.22\textwidth]{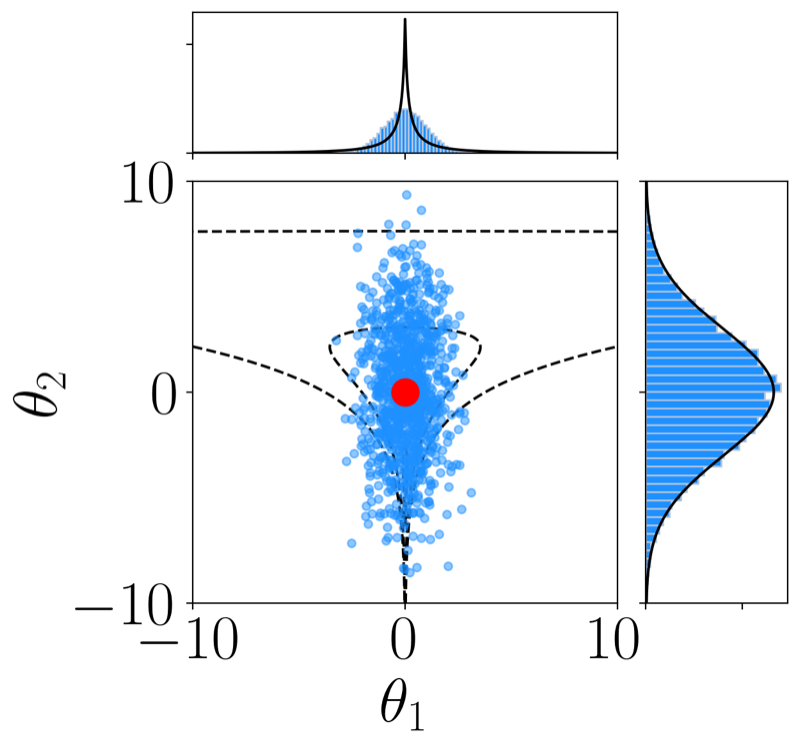} &
        \includegraphics[width=0.22\textwidth]{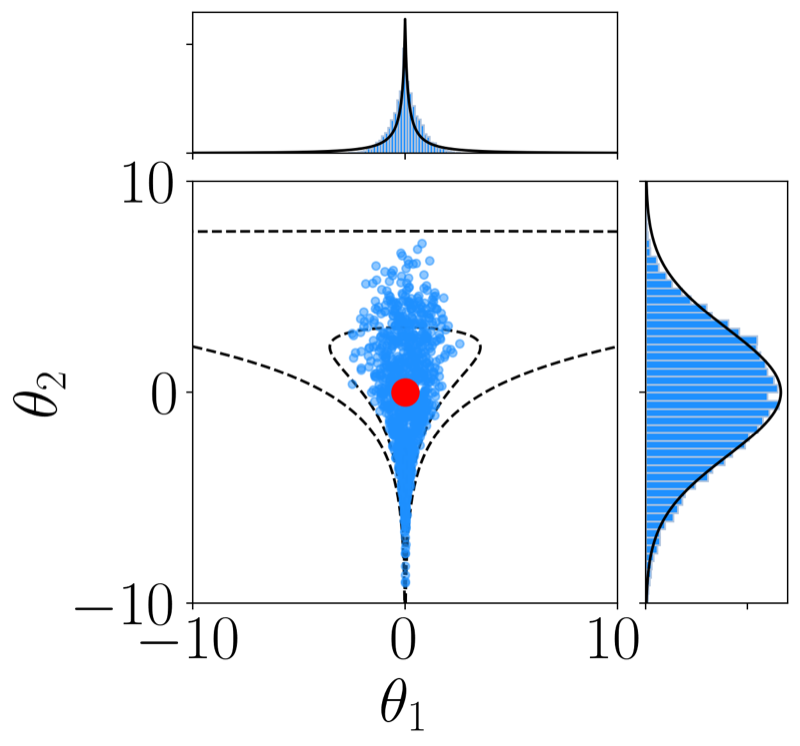} &
        \includegraphics[width=0.22\textwidth]{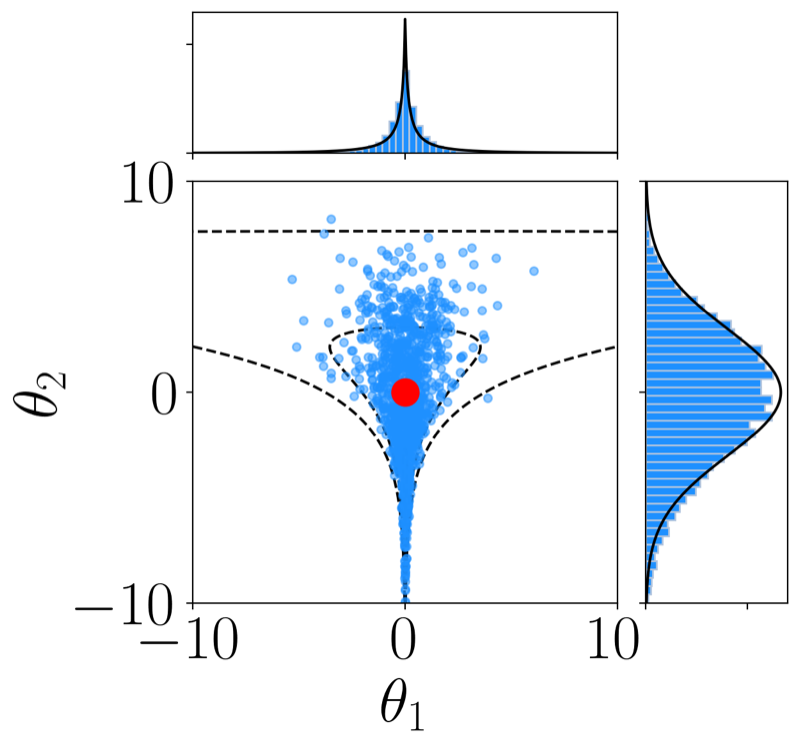} &
        \includegraphics[width=0.22\textwidth]{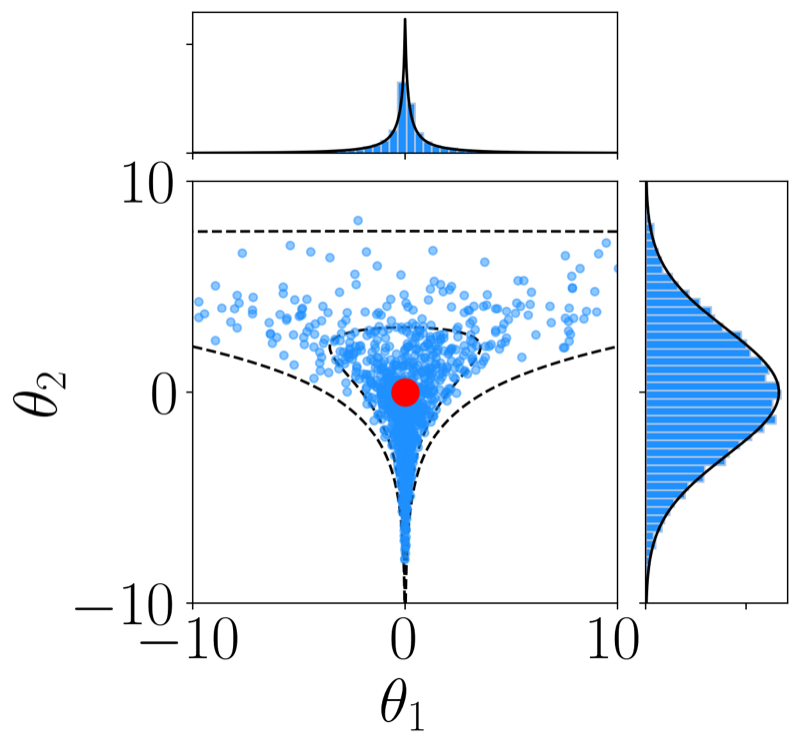}
    \end{tabular}
    
    \caption{Results on funnel. From left to right: \euclaabbrv; \bergaminabbrv; \logbergaminabbrv; \ourabbrv.}
    \label{fig:funnel}
\end{figure}

\subsection{Bayesian logistic regression}

In Section~\ref{sec:blr}, we considered Bayesian logistic regression. We observed that for \bergaminabbrv \ and \logbergaminabbrv, there can be huge differences in terms of number of function evaluations between standardized and raw inputs. We hypothesize that this is in part caused by the large differences in the scales of the raw inputs, and hence the gradient term in Monge metric, which is of quadratic nature, has very different scales across dimensions. This induces strong curvature and leads to challenging numerical integration problems.

We benchmark the time to obtain a single sample based on a sample from Gaussian distribution in seconds of different algorithms and report them in Table~\ref{tbl:lr-times}. We use a slightly optimized version of function for logarithmic map implemented in NumPy and functions for exponential maps are implemented in JAX. We use SciPy for integrations, averaging the obtained times over $2500$ samples. \ourabbrv\ consistently takes less time to obtain one sample, despite requiring direct matrix inversions.

\begin{table*}[b]
	\begin{center}
		\begin{tabular}{|l|l|l|l|l|l|l|}
			\hline
			\multicolumn{1}{|c|}{} & \multicolumn{3}{|c|}{stand.} & \multicolumn{3}{|c|}{raw} \\
			\hline
			data & \bergaminabbrv & \logbergaminabbrv & \ourabbrv & \bergaminabbrv & \logbergaminabbrv & \ourabbrv \\
			\hline
			Ripl & 0.011 & 0.048 & \textbf{0.006} & 0.011 & 0.045 & \textbf{0.006} \\
			\hline
			Pima & 0.016 & 0.26 & \textbf{0.006} & 1.989 & 1.183 & \textbf{0.006} \\
			\hline
			Hear & 0.015 & 0.638 & \textbf{0.007} & 2.33 & 2.683 & \textbf{0.006} \\
			\hline
			Aust & 0.02 & 0.734 & \textbf{0.01} & 6.183 & 4.632 & \textbf{0.008} \\
			\hline
			Germ & 0.026 & 2.215 & \textbf{0.014} & 1.435 & 3.4 & \textbf{0.015} \\
			\hline
		\end{tabular}
	\end{center}
	\caption{Time per sample in seconds for logistic regression. Bold font indicates the fastest method. Smaller is better.}
	\label{tbl:lr-times}
\end{table*}

\subsection{Neural networks}

In Section~\ref{sec:nn-regression}, we presented NN regression results. Here we present further visualizations. 
Figure~\ref{fig:nn-reg-ela} explicitly shows how ELA fails to generate meaningful samples for this problem.

\begin{figure*}[t]
    \centering
    \begin{tabular}{cc}
        \includegraphics[width=0.22\textwidth]{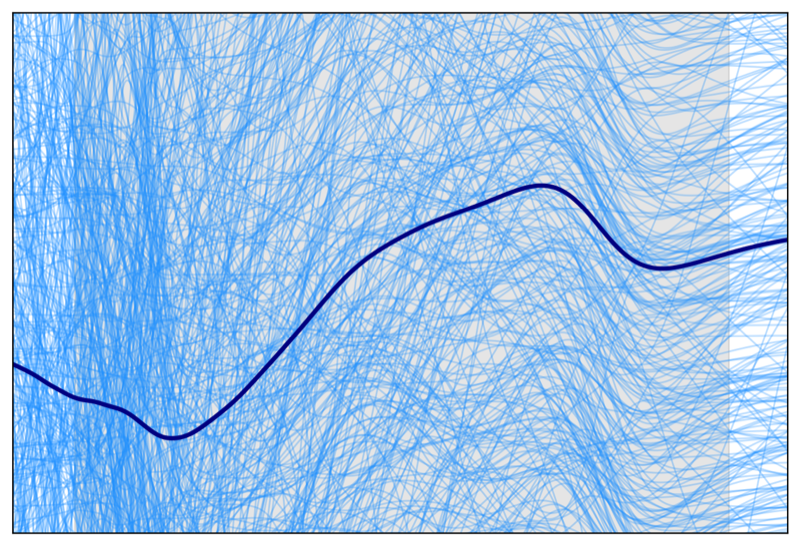} &
        \includegraphics[width=0.22\textwidth]{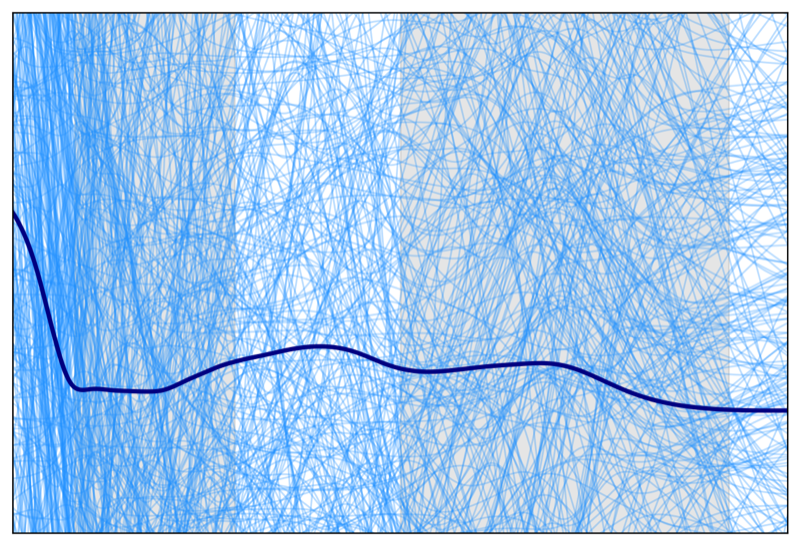}
    \end{tabular}
    
    \caption{Neural network regression with \euclaabbrv. Left: complete data; right: gap data.}
    \label{fig:nn-reg-ela}
\end{figure*}

\subsubsection{Numerical stability}

In Figure~\ref{fig:nn-reg}, we purposefully reported runs which resulted in relatively well-behaved predictive distributions for the methods under comparison. \ourabbrv\ is numerically stable but the other methods occasionally fail. This is demonstrated in Figures \ref{fig:failure-bergamin} and \ref{fig:failure-monge} that show (examples of) bad runs where both \bergaminabbrv \ and \logbergaminabbrv \ have extremely bad and isolated samples that influence even the mean prediction significantly. For completeness, Figure~\ref{fig:failure-fisher} shows the only run among the $10$ independent runs where \ourabbrv\ has any issues, in form of two minor outlier samples that do not have notable influence on the whole predictive distribution.

\begin{figure}[t]
    \centering
    \begin{tabular}{cc}
        \includegraphics[width=0.22\textwidth]{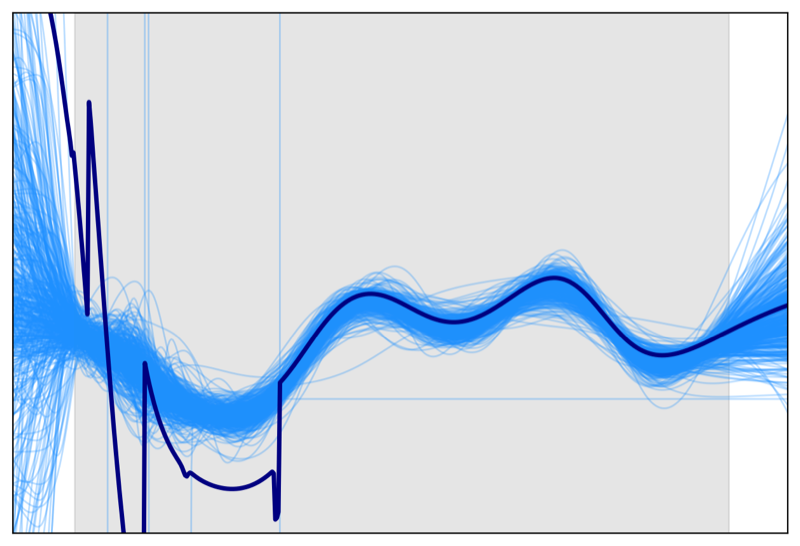} & 
        \includegraphics[width=0.22\textwidth]{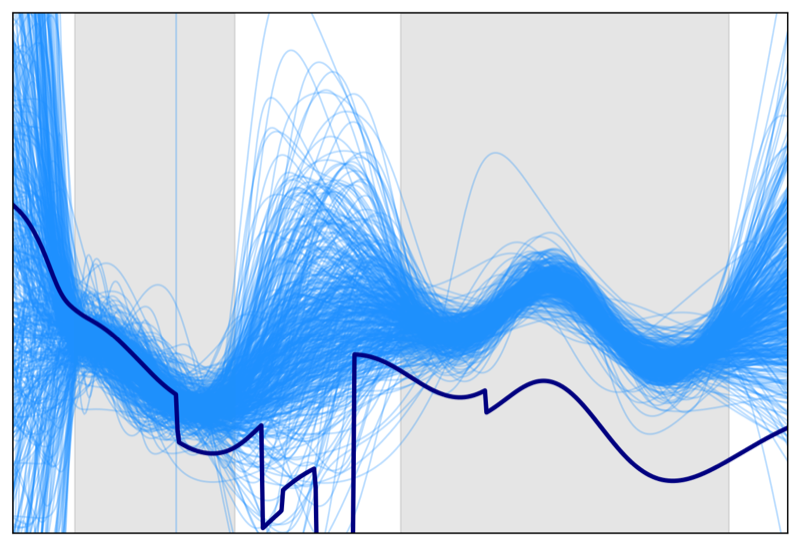}
    \end{tabular}
    
    \caption{Bad runs with \bergaminabbrv \ on neural networks. The left shows the complete case, and the right shows the gap case.}
    \label{fig:failure-bergamin}
\end{figure}
\begin{figure}[t]
    \centering
    \begin{tabular}{cc}
        \includegraphics[width=0.22\textwidth]{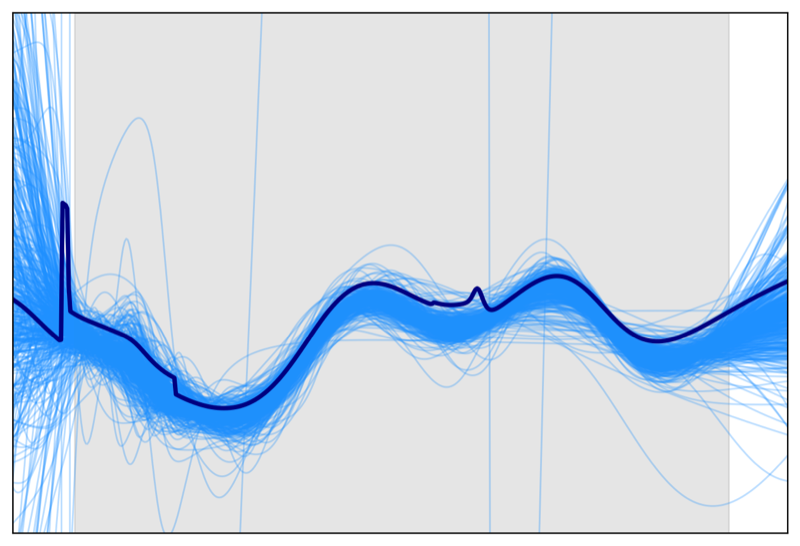} & 
        \includegraphics[width=0.22\textwidth]{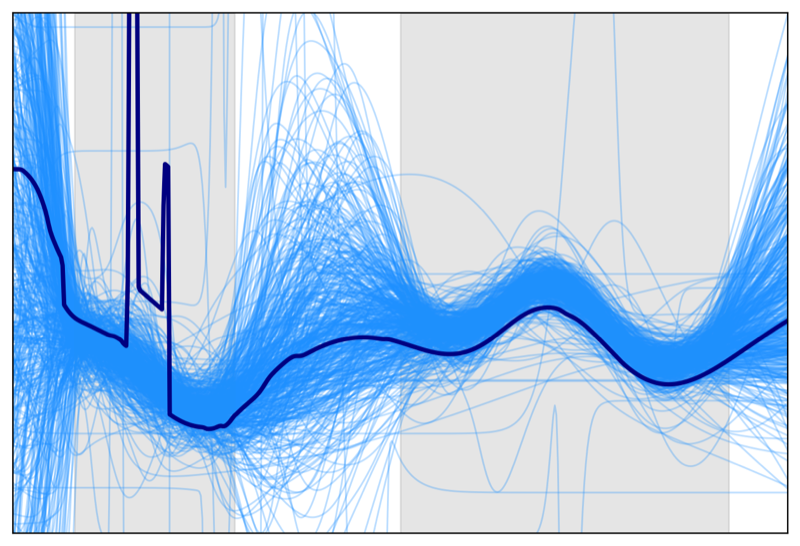}
    \end{tabular}
    
    \caption{Bad runs with \logbergaminabbrv \ on neural networks. The left shows the complete case, and the right shows the gap case.}
    \label{fig:failure-monge}
\end{figure}

\begin{figure}[t]
    \centering
    \includegraphics[width=0.22\textwidth]{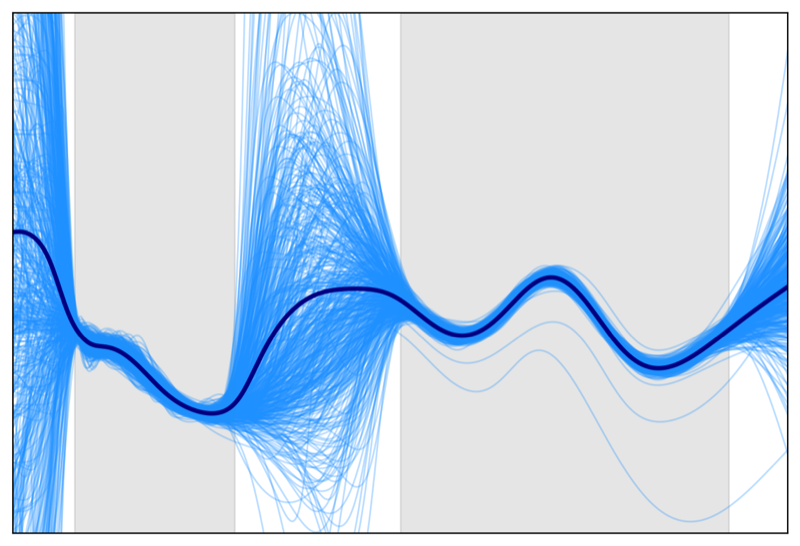}
    
    \caption{Bad run with \ourabbrv \ on neural networks, for the gap case.}
    \label{fig:failure-fisher}
\end{figure}

\subsubsection{Standardized data}

In the neural network regression experiment from the main paper, we directly run the neural network on the original dataset from \citet{Snelson2005}. We later observed that using standardized data may lead to reduced running times for all methods while reducing possible numerical issues, and we report the results in Table~\ref{tbl:nn-reg-std} and Figure~\ref{fig:nn-reg-std}. For finding the MAP estimate, we set $weight\_decay=1e-4$, with the other settings the same as for non-standardized data.

Interestingly, while in the main experiment \bergaminabbrv\ was demonstrated to lead to predictions wider than predictions based on NUTS samples, with standardized data it leads to narrower predictions. \logbergaminabbrv\ reduces the narrowness, while being noisier. \ourabbrv\ samples are similar to NUTS samples, while needing significantly fewer function evaluations and being faster in terms of running times.

\begin{table*}
	\begin{center}
		\begin{tabular}{|l|l|l|l|l|l|l|l|l|}
			\hline
			\multicolumn{1}{|c|}{} & \multicolumn{4}{|c|}{Complete} & \multicolumn{4}{|c|}{Gap} \\
			\hline
			method & MSE & NLL & $T$ & time & MSE & NLL & $T$ & time \\
			\hline
			\euclaabbrv & [0.718, 0.231] & [2.538, 0.041] & N/A & N/A & [1.497, 0.221] & [2.659, 0.032] & N/A & N/A \\
			\hline
			\bergaminabbrv & [0.072, 0.0] & [\textbf{0.11}, 0.001] & 2065.4 & 10.4 & [0.149, 0.011] & [\textbf{0.599}, 0.022] & 2109.5 & 10.6 \\
			\hline
			\logbergaminabbrv & [0.072, 0.0] & [0.121, 0.002] & 1428.1 & 10.2 & [0.107, 0.008] & [0.706, 0.028] & 1415.3 & 10.0 \\
			\hline
			\ourabbrv & [\textbf{0.071}, 0.0] & [0.115, 0.001] & \textbf{77.7} & \textbf{0.3} & [0.122, 0.003] & [0.805, 0.01] & \textbf{88.7} & \textbf{0.3} \\
			\hline
			NUTS & [0.073, 0.0] & [0.122, 0.001] & N/A & N/A & [\textbf{0.093}, 0.003] & [0.684, 0.017] & N/A & N/A \\
			\hline
		\end{tabular}
	\end{center}
	\caption{NN regression results with standardized data as $[\text{mean},\text{std}]$. $T$ indicates the average number of function evaluations for one sample while \textit{time} indicates the average time for one sample. Bold font indicates the best method. For all evaluation metrics smaller is better.}
	\label{tbl:nn-reg-std}
\end{table*}

\begin{figure*}[t]
    \centering
    \begin{tabular}{cccc}
        \includegraphics[width=0.22\textwidth]{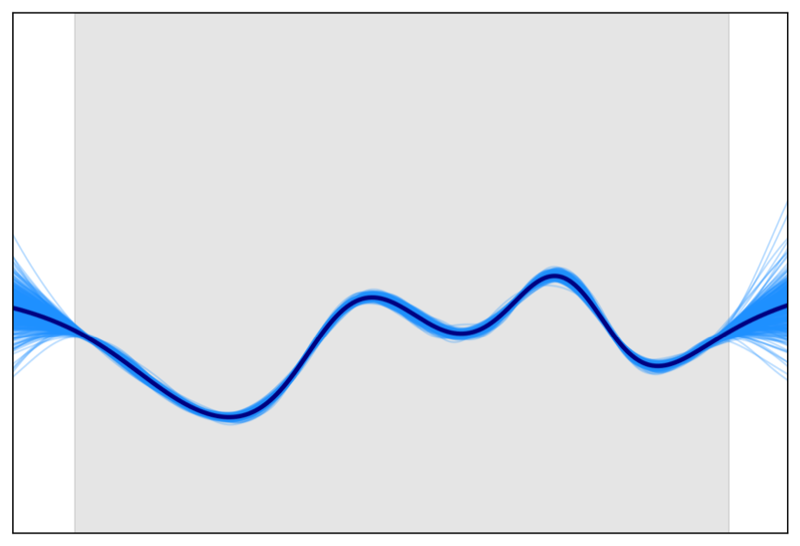} &
        \includegraphics[width=0.22\textwidth]{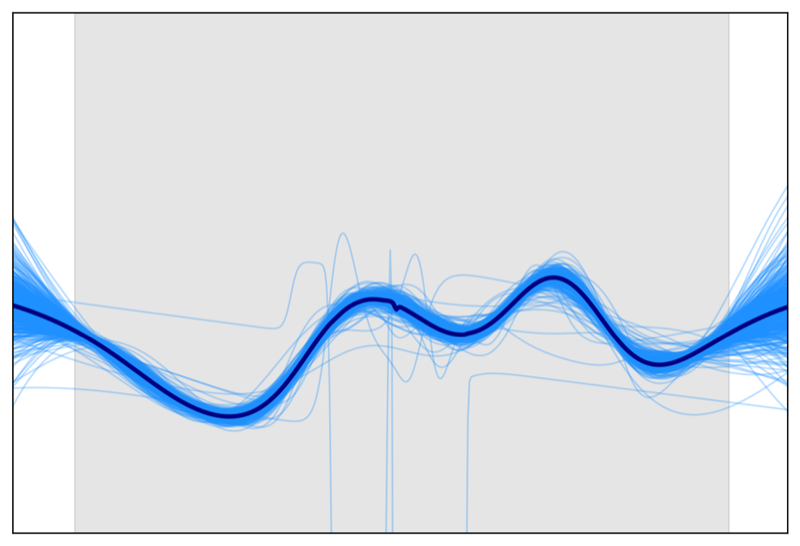} &
        \includegraphics[width=0.22\textwidth]{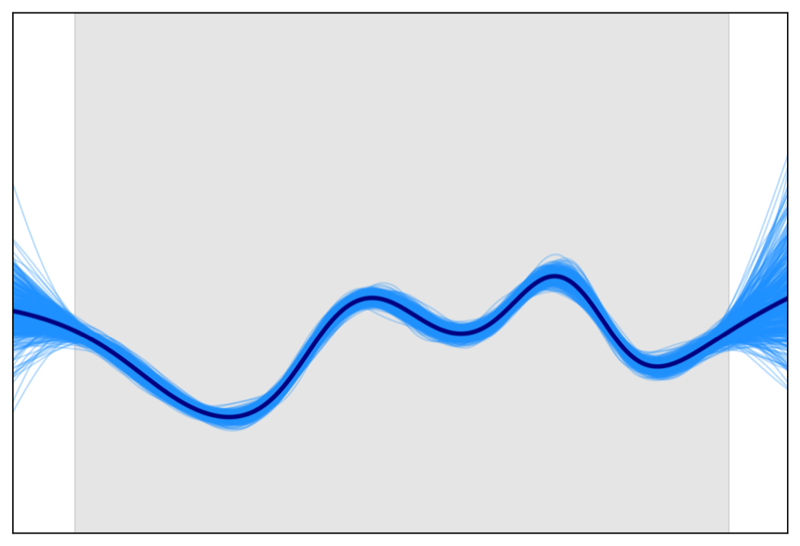} &
        \includegraphics[width=0.22\textwidth]{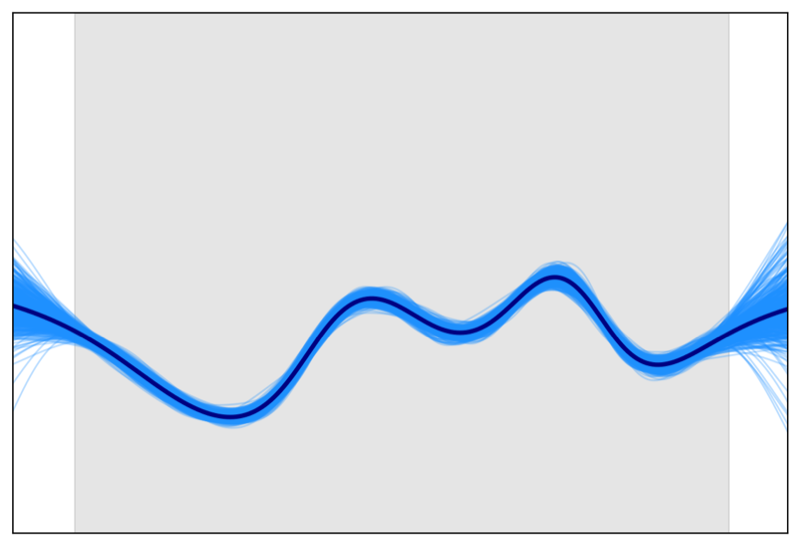} \\
        \includegraphics[width=0.22\textwidth]{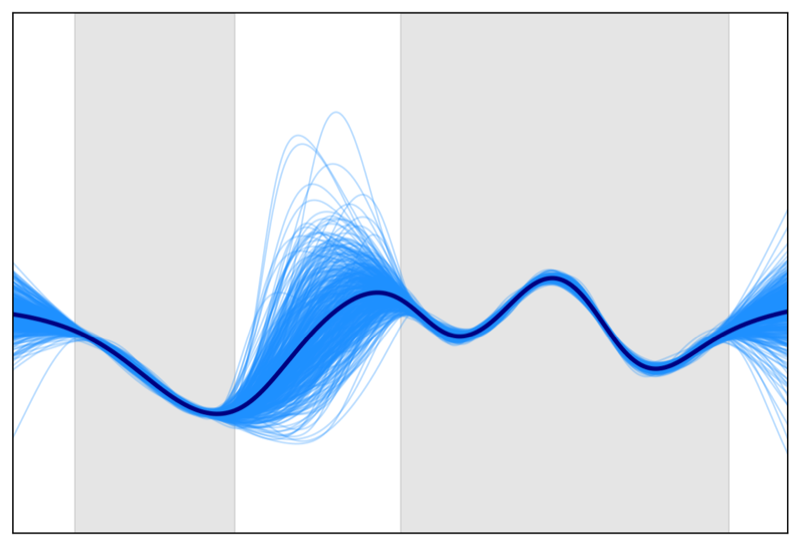} &
        \includegraphics[width=0.22\textwidth]{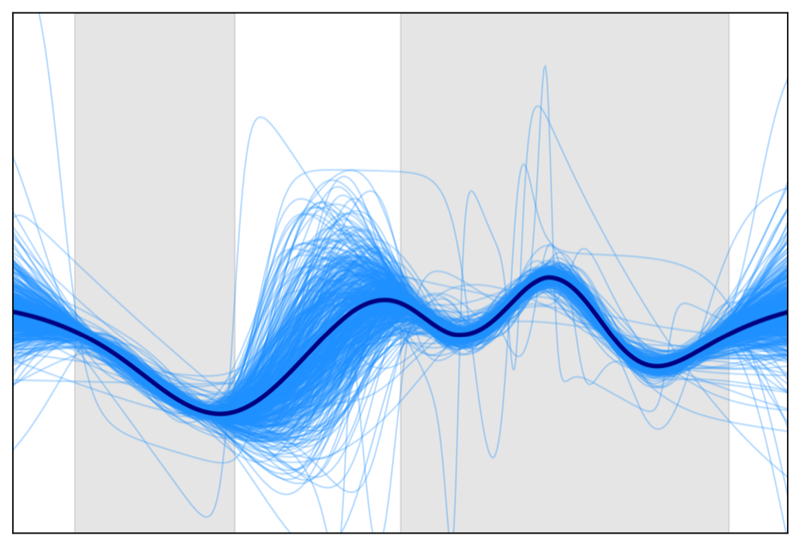} &
        \includegraphics[width=0.22\textwidth]{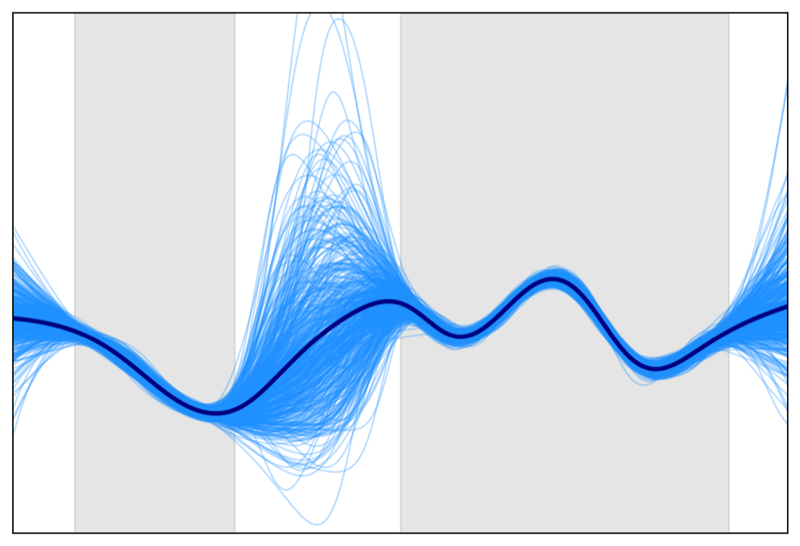} &
        \includegraphics[width=0.22\textwidth]{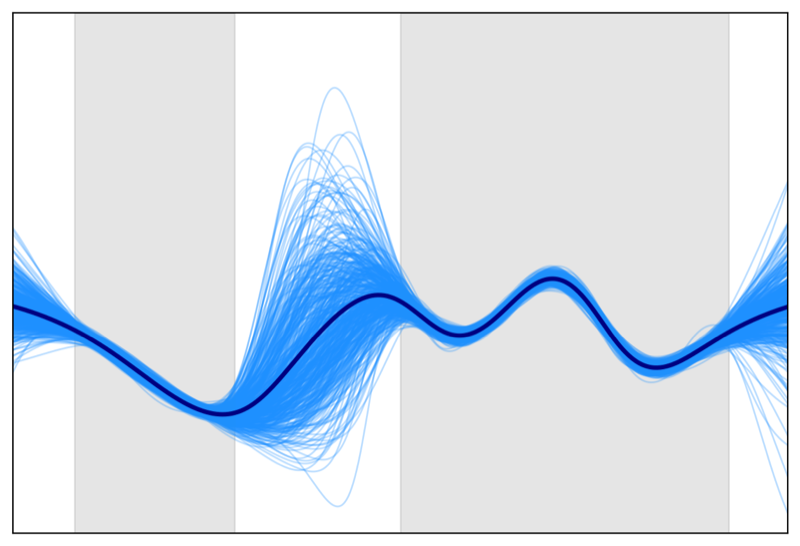}
    \end{tabular}
    
    \caption{NN regression with complete (top) and gap (bottom) standardized training data. Methods from left to right:
    \bergaminabbrv; \logbergaminabbrv; \ourabbrv; NUTS. Gray shading denotes the part of $x$-axis with training data, dark line is the mean prediction, and blue lines are samples.}
    \label{fig:nn-reg-std}
\end{figure*}

\subsubsection{Scalability experiment}
\label{sec:scalability}

We next consider a larger scale experiment similar to \citep{Bergamin2023}. We generate synthetic regression datasets with varying numbers of data points with the data generation mechanism based on \citet{Daxberger2021} before applying standardizations, and consider fully connected neural networks of size $[1, N, N, 1]$ with varying $N$. We find the MAP estimates using $lr=1e-3$, $weight\_decay=1e-4$ for $50000$ epochs with the Adam \citep{Kingma2015} optimizer. The prior precision and noise std are optimized post hoc following standard procedure from \citet{Daxberger2021}.

An example of the generated dataset and the average time to obtain one sample in seconds of \bergaminabbrv\ and \ourabbrv\ under different number of data points and different number of parameters are shown in Figure~\ref{fig:scalability}. Interestingly, \ourabbrv\ can be faster than \bergaminabbrv\ for $D$ up to $1000$. With large number of data points, \ourabbrv\ becomes slower due to needing per-sample gradients. However, this may be resolved by mini-batching.

\begin{figure}[ht]
    \centering
    \includegraphics[width=0.8\textwidth]{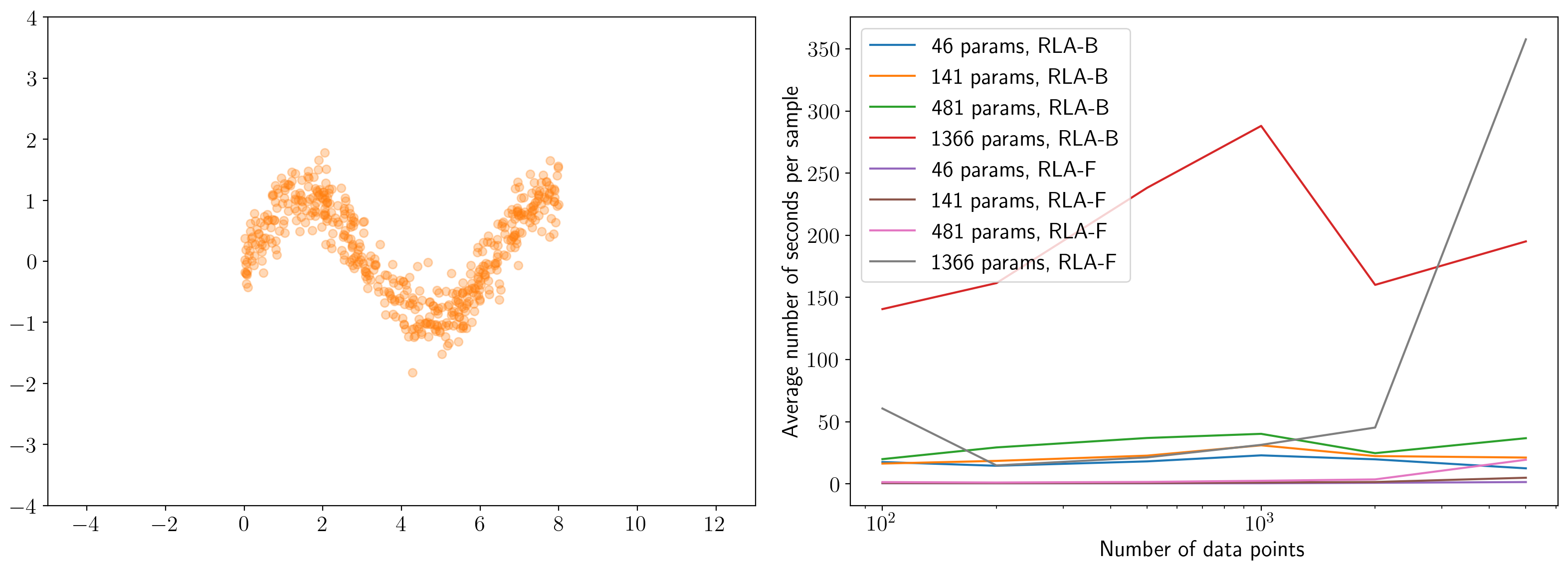}
    
    \caption{Scalability experiment. Left: An example dataset. Right: Running times of \bergaminabbrv\ and \ourabbrv\ under different settings.}
    \label{fig:scalability}
\end{figure}

\subsection{Empirical Fisher}

Some works have considered an alternative Riemannian metric known as the Empirical Fisher Information Matrix, given by
\begin{equation*}
\sum_{n=1}^{N}\nabla \log\pi(y_{n}|\btheta) \nabla \log\pi(y_{n}|\btheta)^{\top}.
\end{equation*}
There have been arguments that it may not lead to desirable behaviors \citep{Kunstner2019}. \citet{Lan2015} also considers another Empirical Fisher Information Matrix whose estimates are more accurate than the previous ones. We tried replacing the FIM with the above empirical version in the Fisher metric, terming the resulting metric as Empirical Fisher metric, and performed a range of preliminary experiments to evaluate whether the Empirical Fisher could also be used inside the metric in RLA, always using numerical Christoffel symbols. 

While it works reasonably well for sampling from banana distribution, yielding a Wasserstein distance of $[0.799,0.008]$ with Euclidean MAP and $[0.141, 0.003]$ with Hausdorff MAP (reported as $[\text{mean}, \text{std}]$), it becomes much worse for Bayesian logistic regression, as shown in Table~\ref{tbl:lr-ef}. Especially, for the experiment on Heart dataset without standardization, it is much worse than Euclidean. We therefore did not explore it further.
\begin{table*}[b]
	\begin{center}
		\begin{tabular}{|l|l|l|l|}
			\hline
			 & data & $\mathcal{W}$ & $T$ \\
			\hline
			\multirow{3}{1em}{\centering\rotatebox[origin=c]{90}{stand.}}&Ripl & [0.106, 0.004] & 12.0 \\
			\cline{2-4}
			&Pima & [0.148, 0.0] & 12.1 \\
			\cline{2-4}
			&Hear & [0.53, 0.0] & 13.8 \\
			\hline
			\multirow{3}{1em}{\centering\rotatebox[origin=c]{90}{raw}}&Ripl & [0.426, 0.011] & 12.1 \\
			\cline{2-4}
			&Pima & [0.199, 0.001] & 12.5 \\
			\cline{2-4}
			&Hear & [0.942, 0.02] & 17.0 \\
			\hline
		\end{tabular}
	\end{center}
	\caption{Logistic regression results with Empirical Fisher metric, reported as $[\text{mean}, \text{std}]$. $\mathcal{W}$ indicates Wasserstein distance to NUTS samples while $T$ indicates the average number of function evaluations for one sample. For all evaluation metrics smaller is better.}
	\label{tbl:lr-ef}
\end{table*}

\end{document}